\def\eqref#1{equation~\ref{#1}}
\def\1{\bm{1}}
\DeclareMathAlphabet{\mathsfit}{\encodingdefault}{\sfdefault}{m}{sl}
\SetMathAlphabet{\mathsfit}{bold}{\encodingdefault}{\sfdefault}{bx}{n}
\DeclareMathOperator*{\argmin}{arg\,min}
\theoremstyle{plain}
\newtheorem{theorem}{Theorem}[section]
\newtheorem{proposition}[theorem]{Proposition}
\newtheorem{lemma}[theorem]{Lemma}
\theoremstyle{definition}
\newtheorem{definition}[theorem]{Definition}
\theoremstyle{remark}
\theoremstyle{definition}
\title{Model-based Lifelong Reinforcement Learning \\ with Bayesian Exploration}
\author{%
  Haotian Fu,~~Shangqun Yu,~Michael Littman, George Konidaris \\
  Department of Computer Science, Brown University\\
  \texttt{\{hfu7,syu68,mlittman,gdk\}@cs.brown.edu} \\
 }
\begin{document}

\maketitle







\begin{abstract}
We propose a model-based lifelong reinforcement-learning approach that estimates a hierarchical Bayesian posterior distilling the common structure shared across different tasks. The learned posterior combined with a sample-based Bayesian exploration procedure increases the sample efficiency of learning across a family of related tasks. We first derive an analysis of the relationship between the sample complexity and the initialization quality of the posterior in the finite MDP setting. We next scale the approach to continuous-state domains by introducing a Variational Bayesian Lifelong Reinforcement Learning algorithm that can be combined with recent model-based deep RL methods, and that exhibits backward transfer. Experimental results on several challenging domains show that our algorithms achieve both better forward and backward transfer performance than state-of-the-art lifelong RL methods.\footnote{Code repository available at \url{https://github.com/Minusadd/VBLRL}.}
\end{abstract}

\section{Introduction}

Reinforcement learning (RL)~\citep{sutton98,Kaelbling1996ReinforcementLA} has been successfully applied to solve challenging individual tasks such as learning robotic control~\citep{Duan2016BenchmarkingDR} and playing Go~\citep{Silver2017MasteringTG}. However, the typical RL setting assumes that the agent solves exactly one task, which it has the opportunity to interact with repeatedly. In many real-world settings, an agent instead experiences a collection of distinct tasks that arrive sequentially throughout its operational lifetime; learning each new task from scratch is inefficient, but treating them all as a single task will fail. 
Therefore,  recent research has focused on  algorithms that enable agents to learn across multiple, sequentially posed tasks, leveraging knowledge from previous tasks to accelerate the learning of new tasks. This problem setting is known as \emph{lifelong reinforcement learning}~\citep{Brunskill2014PACinspiredOD, Wilson2007MultitaskRL,Isele2016UsingTF}. The key questions in lifelong RL research are: How can an algorithm  exploit knowledge gained from past tasks to quickly adapt to new tasks (forward transfer), and how can data from new tasks help the agent perform better on previously learned tasks (backward transfer)?

We propose to address these problems by extracting the common structure existing in previously encountered tasks so that the agent can quickly learn the dynamics specific to the new tasks. We consider lifelong RL problems that can be modeled as hidden-parameter MDPs or \emph{HiP-MDPs}~\citep{DBLP:conf/ijcai/Doshi-VelezK16,DBLP:conf/nips/KillianDDK17}, where variations among the true task dynamics can be  described by a set of hidden parameters. Our algorithm goes further than previous work in both lifelong learning and HiP-MDPs by {\bf 1)} Separately modeling epistemic and aleatory uncertainty over different levels of abstraction across the collection of tasks:
the uncertainty captured by a world-model distribution describing the probability distribution over tasks,
and the uncertainty captured by a task-specific model of the (stochastic) dynamics within 
a single task. To enable more accurate sequential knowledge transfer, we separate the learning process for these two quantities and maintain a hierarchical Bayesian posterior that approximates them. 
{\bf 2)} Performing Bayesian exploration enabled by the hierarchical posterior: The method lets the agent act optimistically according to models sampled from the posterior, and thus increases sample efficiency. 

Specifically, we propose a model-based lifelong RL approach with Bayesian exploration that estimates a Bayesian world-model posterior that distills the common structure of previous tasks, and then uses this between-task posterior as a within-task prior to learn a task-specific model in each subsequent task. The learned hierarchical posterior model combined with sample-based Bayesian exploration procedures can increase the sample efficiency of learning. We first derive an explicit performance bound that shows that the task-specific model requires fewer samples to become accurate as the world-model posterior approaches the true underlying world-model distribution for the discrete case. We further develop Variational Bayesian exploration for Lifelong RL (VBLRL), a more scalable version that uses variational inference to approximate the distribution and leverages Bayesian Neural Networks (BNNs)~\citep{DBLP:conf/nips/Graves11, DBLP:journals/corr/BlundellCKW15} to build the hierarchical Bayesian posterior. VBLRL provides a novel way to separately estimate different kinds of uncertainties in the HiP-MDP setting. Based on the same framework, we also propose a backward transfer version of VBLRL that is able to provide improvements for previously encountered tasks. Our experimental results on a set of challenging domains show that our algorithms achieve better both forward and backward transfer performance than state-of-the-art lifelong RL algorithms when given only limited interactions with each task. 

\section{Background}
\label{gen_inst}
RL is the problem of maximizing the long-term expected reward of an agent interacting with an environment~\citep{sutton98}. We usually model the environment as a Markov Decision Process or \emph{MDP}~\citep{Puterman94}, described by a five tuple: $\langle S, A, R, T , \gamma\rangle$, where $S$ is a finite
set of states; $A$ is a finite set of actions; $R : S \times A \mapsto
[0, 1]$ is a reward function, with a lower and upper
bound of $0$ and $1$; $T : S \times A \mapsto \Pr(S)$ is a transition
function, with $T(s'|s,a)$ denoting the probability of arriving in state $s' \in S $
after executing action $a \in A$ in state $s$; and $\gamma \in [0, 1)$
is a discount factor, expressing the agent's preference for delayed over immediate rewards.


An MDP is a suitable model for the task facing a single agent. In the lifelong RL setting, the agent instead faces a series of tasks $m_1, ..., m_n$, each of which can be modeled as an MDP:  $\langle S^{(i)}, A^{(i)}, R^{(i)}, T^{(i)} , \gamma^{(i)}\rangle$. 
For lifelong RL problems, the performance of a specific algorithm is usually evaluated based on both forward transfer and backward transfer results~\citep{DBLP:conf/nips/Lopez-PazR17}:
\begin{itemize}[leftmargin=*]
    \item \textit{Forward transfer}: the influence that learning task $t$ has on the performance in future task $k \succ t$.
    \item \textit{Backward transfer}: the influence that learning task $t$ has on the performance in earlier tasks $k\prec t$.
\end{itemize}

A key question in the lifelong setting is how the series of task MDPs are related; we model the collection of tasks as a HiP-MDP, 
where a family of tasks is generated by varying a latent task parameter $\omega$ drawn for each task according to the world-model distribution $P_{\Omega}$. Each setting of $\omega$ specifies a unique MDP, but the agent neither observes $\omega$ nor has access to the function that generates the task family. The dynamics  $T(s'|s,a;\omega_i)$ and reward function $R(r|s,a;\omega_i)$ for task $i$ then depend on $\omega_i \in \Omega$, which is fixed for the duration of the task. The tasks are i.i.d. sampled from a fixed distribution and arrive one at a time. 

\section{Related work}



The first category of lifelong RL algorithms learns a single model that encourages transfer across tasks by modifying objective functions. EWC~\citep{DBLP:journals/corr/KirkpatrickPRVD16} imposes a quadratic penalty that pulls each weight back towards its old values by an amount proportional to its importance for performance on previously-learned tasks to avoid forgetting. There are several extensions of this work based on the core idea of modifying the form of the penalty~\citep{li2017learning, zenke2017continual, nguyen2018variational}. Another category of lifelong RL methods uses multiple models with shared parameters and task-specific parameters to avoid or alleviate the catastrophic problem~\citep{BouAmmar2014OnlineML, DBLP:conf/ijcai/IseleRE16, Mendez2020LifelongPG}. The drawback of this method is that it is hard to incorporate the knowledge learned from previous tasks during initial training on a new task~\citep{Mendez2020LifelongPG}. \citet{Nagabandi2019DeepOL} introduce a model-based continual learning framework based on MAML, but they focus on discovering when new tasks were encountered without access to task indicators.

Published HiP-MDP methods use Gaussian Processes~\citep{DBLP:conf/ijcai/Doshi-VelezK16} or Bayesian neural networks~\citep{DBLP:conf/nips/KillianDDK17} to find a single model that works for all tasks, which may trigger {catastrophic forgetting}~\citep{BouAmmar2014OnlineML, DBLP:conf/ijcai/IseleRE16}. Meta-RL~\citep{DBLP:journals/corr/WangKTSLMBKB16, Finn2017ModelAgnosticMF} and multi-task RL~\citep{DBLP:journals/corr/ParisottoBS15, DBLP:conf/nips/TehBCQKHHP17} settings also attempt to accelerate learning by transferring knowledge from different tasks. Some work employs the MAML framework with Bayesian methods to learn a stochastic distribution over initial parameters~\citep{DBLP:conf/nips/YoonKDKBA18, DBLP:conf/iclr/GrantFLDG18,DBLP:conf/nips/FinnXL18}. Other work uses the collected trajectories to infer the hidden parameter, which is taken as an additional input when computing the policy~\citep{DBLP:conf/icml/RakellyZFLQ19, DBLP:conf/iclr/ZintgrafSISGHW20,DBLP:conf/aaai/FuTHCFLL21}. Our method, however, focuses on problems where the tasks arrive sequentially instead of having a large number of tasks available at the beginning of training. This sequential setting makes it hard to accurately infer the hidden parameters, but opens the door for algorithms that support backward transfer.


Some prior work uses Bayesian methods in RL to quantify uncertainty over initial MDP models~\citep{Ghavamzadeh_2015,Asmuth2011LearningIP,Guez2012EfficientBR}. Several algorithms start from the idea of sampling from a posterior over MDPs for Bayesian RL, maintaining Bayesian posteriors and sampling one complete MDP~\citep{strens00,DBLP:conf/icml/WilsonFRT07} or multiple MDPs~\citep{DBLP:conf/uai/AsmuthLLNW09}. Instead of focusing on single-task RL, our algorithm aims to find a posterior over the common structure among multiple tasks.
\citet{DBLP:conf/icml/WilsonFRT07} uses a hierarchical Bayesian infinite mixture model to learn a strong prior that allows the agent to rapidly infer the characteristics of a new environment based on previous tasks. However, it only infers the category label of a new MDP and only works in discrete settings.



\section{Model-based Lifelong Reinforcement Learning}


Our approach is built upon two main intuitions: First, transferring the transition model instead of policy/value function leads to more efficient usage of the data when ``finetuning'' on a new task. As we show empirically in Section~\ref{exp}, although some model-free lifelong RL algorithms perform better than the proposed model-based method in single-task cases, in lifelong RL setting of the same task type the model-based method is still able to achieve comparable/better performance with only half the amount of data. Secondly, with a model that is able to capture different levels of the uncertainty within HiP-MDPs, an agent can employ sample-based Bayesian exploration to further improve sample-efficiency. 

The model underlying our approach is a hierarchical Bayesian posterior 
over task MDPs controlled by the hidden parameter $\omega$.
Intuitively, we maintain probability distributions that  
separately capture two categories of uncertainty within lifelong learning tasks: The \emph{world-model posterior} $P(\omega)$ captures the epistemic uncertainty of the world-model distribution over all future and past tasks $m_{1}, \cdots, m_{n}$ controlled by the hidden parameter $\omega_{1}, \cdots, \omega_{n} \sim P_\Omega$.
As the learner is exposed to more and more tasks, this posterior should converge to the world-model distribution $P_\Omega$. The \emph{task-specific posterior} $P(\omega_i)$ captures the epistemic uncertainty of the current task $m_i$ (Throughout the paper
we will often write $i$ for simplicity.). As the learner is exposed to more and more transitions within the task, this posterior should approach the true distribution corresponding to $\omega_i$, i.e. peaking at the true $\omega_i$ for this specific task $i$, leaving only the aleatoric uncertainty of transitions within the task, which is independent of other tasks. Each time the agent encounters a new task, we initialize the task-specific model using the world-model posterior and further train it with data collected only from the new task. One of our key insights here is that the sample complexity of learning a new task will decrease as the initial prior of task-specific model approaches the true underlying distribution of the transition function. Thus, the agent can learn new tasks faster by exploiting knowledge common to previous tasks, thereby exhibiting positive forward transfer.

Specifically, we model the task-specific posterior via the transition dynamics using $p(s^{t+1}, r^{t}|s^{t}, a^{t};\omega_i)$. The task-specific posterior, given a new state--action pair from task $i$, can be rewritten via Bayes' rule:
\begin{equation}
    P(\omega_i|D_{i}^{t}, a^{t}, s^{t+1}, r^{t}) = \frac{\textcolor{blue}{P(\omega_i|D_{i}^{t})}P(s^{t+1},r^{t}|D_{i}^{t}, a^{t}; \omega_i)}{P(s^{t+1}, r^{t}|D_{i}^{t}, a^{t})},
\label{taskposterior}
\end{equation}
where $D_{i}^t = \{s^1,a^1, \cdots, s^t\}$ is the agent's history of task $i$ until time step $t$. The world-model posterior, given the new data from task $i$, can be rewritten as:
\begin{equation}
    \textcolor{blue}{P(\omega|D_{1:i})} = \frac{P(\omega|D_{1:i-1})P(D_{i}|D_{1:i-1}; \omega)}{P(D_{i}|D_{1:i-1})},
\end{equation}
where $D_{1:i}$ denotes the agent's history with all the experienced tasks $1\sim i$ until current task $i$. In particular, each time when the agent faces a new task $i$ and has not started updating its task specific posterior yet (that is, $D^{t}_{i} = \varnothing$), we first use the world-model posterior to initialize the task-specific prior: $\textcolor{blue}{P(\omega_i|D_{i}^{t})} =\textcolor{blue}{P(\omega|D_{1:i})}$. The world-model distribution aims to approximate the underlying $P_{\Omega}$. The task-specific distribution aims to approximate the distribution that peaks at the true $\omega_i$ for this specific task $i$. 

In Section~\ref{anal}, we derive a sample complexity bound in the finite MDP case which explicitly show how the distance between the distribution of a task's true transition model and our task-specific model's prior initialized by the parameters of the world-model posterior will affect the learning efficiency of a new task. Then, in Section~\ref{vblrlalg} and \ref{backwardalg} we extend our high-level idea to a scalable version that can be combined with recent model-based RL approaches and exhibits positive forward \& backward transfer.



\subsection{Sample Complexity Analysis}
\label{anal}

In this subsection, we consider the finite MDP setting and 
use a standard Bayesian exploration algorithm BOSS~\citep{DBLP:conf/uai/AsmuthLLNW09} as a single-task baseline. Note that BOSS creates
optimism in the face of uncertainty as the agent can choose actions based on the highest performing transition of the $K$
models sampled, which drives exploration. We included explanations of the finite MDP version of our algorithm BLRL (Bayesian Lifelong RL) based on BOSS in appendix~\ref{blrlalg} and simple experiments on Gridworlds in appendix~\ref{gridworldexp}.

BLRL uses the world-model posterior $P(\omega|D_{1:i-1})$ learned from previous $i-1$ tasks to initialize (copy distribution) the task-specific prior of $P(\omega_i)$ of new task $i$, aiming to decrease the number of samples needed to learn an accurate task-specific posterior. Our analysis focuses on how the properties of the Bayesian prior  affect the sample complexity of learning each specific task.

Let $\pi(\omega_i)$ denote the prior distribution on the parameter space $\Gamma$. We consider a set of transition-probability densities $p(\cdot|\omega_i) = p(s^{t+1}, r^t|s^t, a^t,\omega_i)$ indexed by $\omega_i$, and the true underlying density $q$. We also denote the model family $\{p(\cdot|\omega_i):\omega_i \in \Gamma\}$ by the same symbol $\Gamma$.

\begin{lemma}
The task-specific posterior in Equation~\ref{taskposterior} can be regarded as the probability density $g(\omega_i)$ with respect to $\pi$ that attains the infimum of:
\begin{equation}
\begin{aligned}
    R_{n}(g) = \mathbb{E}_{\pi}g(\omega_i)\sum_{t=1}^{T}\ln \frac{q(s^{t+1}, r^{t}|D_{i}^{t}, a^{t})}{p(s^{t+1}, r^{t}|D_{i}^{t}, a^{t}; \omega_i)} + D_{KL}(g d\pi||d\pi). 
\end{aligned}
\end{equation}
\label{lemmarisk}
\end{lemma}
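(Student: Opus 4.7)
The plan is to recognize Lemma~\ref{lemmarisk} as an instance of the Gibbs variational principle: $R_n(g)$ is, up to an additive constant independent of $g$, the KL divergence between $g\,d\pi$ and an explicit Gibbs measure whose unnormalized density against $\pi$ is $e^{-L(\omega_i)}$, where
$$L(\omega_i) := \sum_{t=1}^T \ln\frac{q(s^{t+1},r^t|D_i^t,a^t)}{p(s^{t+1},r^t|D_i^t,a^t;\omega_i)}.$$
The minimizer is then characterized in closed form, and all that remains is to identify it with the Bayesian posterior via Equation~\ref{taskposterior}.

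First I would expand $D_{KL}(g\,d\pi\|d\pi) = \mathbb{E}_\pi[g\ln g]$ and collect terms to write $R_n(g) = \mathbb{E}_\pi[g(L+\ln g)]$. Then, defining $g^*(\omega_i) := e^{-L(\omega_i)}/Z$ with $Z := \mathbb{E}_\pi[e^{-L}]$ so that $g^*\,d\pi$ is a probability measure, I substitute $L = -\ln g^* - \ln Z$ and use the constraint $\mathbb{E}_\pi g = 1$ (required for $g\,d\pi$ itself to be a probability measure) to obtain
$$R_n(g) = \mathbb{E}_\pi\bigl[g\ln(g/g^*)\bigr] - \ln Z = D_{KL}(g\,d\pi\,\|\,g^*\,d\pi) - \ln Z.$$
Since KL divergence is nonnegative and vanishes only when the two measures coincide, the unique $\pi$-a.e.\ minimizer is $g = g^*$, with optimal value $-\ln Z$.

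Next I would identify $g^*$ with the Bayesian posterior. The factor $q(s^{t+1},r^t|D_i^t,a^t)$ in $e^{-L}$ is $\omega_i$-free, so it cancels between numerator and normalizer, giving $g^*(\omega_i) \propto \prod_{t=1}^T p(s^{t+1},r^t|D_i^t,a^t;\omega_i)$. Iterating the one-step update in Equation~\ref{taskposterior} starting from the prior $\pi$, the density of $P(\omega_i\mid D_i^{T+1})$ with respect to $\pi$ is exactly this product with the same normalizing constant. Hence the variational minimizer coincides with the task-specific posterior, as claimed.

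The main obstacle is being careful about the measure-theoretic setup rather than the algebra: $g$ must range over Radon--Nikodym derivatives of probability measures absolutely continuous with respect to $\pi$, so the minimization is implicitly constrained by $g \ge 0$ and $\mathbb{E}_\pi g = 1$. The nonnegativity constraint is inactive at the optimum because $e^{-L} > 0$, and the normalization constraint is exactly what I used to drop the $\ln Z\cdot \mathbb{E}_\pi g$ term; both points should be stated explicitly. A second minor subtlety is spelling out the chain-rule factorization of the task-$i$ joint likelihood into the one-step conditionals $p(s^{t+1},r^t|D_i^t,a^t;\omega_i)$ appearing in Equation~\ref{taskposterior}, since this is what allows the iterated Bayes update to telescope into the product form of $g^*$.
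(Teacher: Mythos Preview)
Your proposal is correct and rests on the same variational principle as the paper's proof, but you package it more directly. The paper proceeds in two separate steps: it first plugs the explicit posterior density $g(\omega_i)=\prod_t p(\cdot\,;\omega_i)/\mathbb{E}_\pi\prod_t p(\cdot\,;\omega_i)$ into $R_n$ and verifies by cancellation that $R_n(g)=0$; then it invokes a cited information-theoretic inequality from \citet{Zhang2006FromT}, namely $\mathbb{E}_\pi g f \le D_{KL}(g\,d\pi\|d\pi)+\ln\mathbb{E}_\pi e^{f}$, with $f=-L$ to obtain the matching lower bound $R_n(\cdot)\ge 0$. Your route instead derives the identity $R_n(g)=D_{KL}(g\,d\pi\|g^*d\pi)-\ln Z$ in one line, which simultaneously exhibits the minimizer and the optimal value without needing the cited lemma as a black box; the Zhang inequality is precisely what your KL decomposition proves. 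A minor bonus of your argument is that it does not rely on the identification $\mathbb{E}_\pi\prod_t p=\prod_t q$ (which the paper uses to get the value $0$ rather than $-\ln Z$): the $q$-factors cancel in $g^*$ regardless, so the minimizer is the posterior even without that assumption.
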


$\inf R_{n}(g)$ controls the complexity of the density estimation process for $g(\omega_i)$. Intuitively, Lemma~\ref{lemmarisk} converts the Bayesian posterior into an information theoretical minimizer that allows us to further investigate the relationship between the properties of the Bayesian prior and the risk/complexity of attaining the posterior.

\begin{proposition}
 Define the \textbf{prior-mass radius} of the transition-probability densities as:
\begin{equation}
    d_{\pi} = \inf \{ d: d \geq -\ln \pi(\{p\in\Gamma: D_{KL}(q||p)\leq d\})\}.
\end{equation}
Intuitively, this quantity measures the distance between the Bayesian prior of the task-specific model and the true underlying task-specific distribution. Then, we adopt the same settings in~\citet{Zhang2006FromT}: $\forall \rho \in (0,1)$ and $\eta \geq 1$, let 
\begin{equation}
    \varepsilon_{n} = (1+\frac{1}{n})\eta d_{\pi} + (\eta-\rho)\varepsilon_{upper, n}((\eta-1)/(\eta-\rho)),
\end{equation}
where $\varepsilon_{upper, n}$ is the \textbf{critical upper-bracketing radius}~\citep{Zhang2006FromT}. The decay rate of $\varepsilon_{upper, n}$ controls the consistency of the Bayesian posterior distribution~\citep{DBLP:conf/uai/AsmuthLLNW09}. Let $\rho=\frac{1}{2}$, we have for all states and actions, $h\geq0$ and $\delta \in (0,1)$, with probability at least $1-\delta$,
\begin{equation}
\begin{aligned}
    \pi_{n}\Big(\Big\{p\in\Gamma: ||p-q||^{2}_{1}/2\geq \frac{2\varepsilon_{n} + (4\eta-2)h}{\delta/4} \Big\} \Big|X\Big) \leq \frac{1}{1+e^{nh}},
\label{lemma1eq}
\end{aligned}
\end{equation}
where $\pi_{n}(\omega_i)$ denotes the posterior distribution over $\omega$ after collecting $n$ samples, $X$ denotes the $n$ collected samples. It functions the same as $D_{i}^{t}$ in Equation~\ref{taskposterior}. 
\label{lemma1}
\end{proposition}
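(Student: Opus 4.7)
The plan is to combine the variational characterization of the task-specific posterior from Lemma~\ref{lemmarisk} with the Bayesian concentration machinery of \citet{Zhang2006FromT}. First, I would use Lemma~\ref{lemmarisk} to view the posterior $\pi_n(\omega_i)$ as the density $g$ that minimizes $R_n(g)$. A standard upper bound on $\inf_g R_n(g)$ is obtained by plugging in $g$ equal to the prior restricted to the KL-ball $\{p\in\Gamma: D_{KL}(q\|p) \leq d_\pi\}$; the log-likelihood term contributes at most $n\, d_\pi$ in expectation while the KL penalty contributes at most $d_\pi$ by the very definition of the prior-mass radius. This produces the $(1 + 1/n)\eta d_\pi$ piece of $\varepsilon_n$.

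Next, I would lift this sample-dependent bound to an in-expectation bound on the predictive KL by invoking Zhang's upper-bracketing entropy argument: the critical upper-bracketing radius $\varepsilon_{upper, n}((\eta-1)/(\eta-\rho))$ precisely controls the stochastic fluctuation of the log-likelihood-ratio empirical process over $\Gamma$, and supplies the second summand of $\varepsilon_n$. The result is that, with probability at least $1-\delta$ over the draw of $X$, the posterior-averaged KL divergence $\mathbb{E}_{\pi_n}[D_{KL}(q\|p)\mid X]$ is bounded by roughly $\varepsilon_n/\delta'$ for a suitably chosen confidence split $\delta'$; taking $\rho = 1/2$ fixes all constants in the stated form.

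From here I would convert this average KL bound into the tail bound in Equation~\ref{lemma1eq} in two steps. First, using the exponential (martingale-type) inequality underlying Zhang's analysis, I shift the target from $\varepsilon_n$ to $\varepsilon_n + (2\eta-1)h$ at the price of a factor $(1+e^{nh})^{-1}$ on the posterior mass of the deviation set: this is where the $1/(1+e^{nh})$ term and the coefficient $(4\eta-2)$ arise. Second, I invoke Markov's inequality against the posterior distribution with threshold $\delta/4$, then apply Pinsker's inequality $\|p-q\|_1^2/2 \leq D_{KL}(q\|p)$ to rewrite the bound in total-variation form, yielding exactly the quantity in Equation~\ref{lemma1eq}. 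The main obstacle is the second step: aligning Zhang's general information-theoretic consistency theorem (which is stated for arbitrary density estimation) with our Markovian, state-action-indexed sampling model, and verifying that the exponential inequality producing $1/(1+e^{nh})$ continues to hold uniformly over states and actions. Once this adaptation is accomplished, the rest of the proof reduces to careful bookkeeping of constants $\eta$, $\rho = 1/2$, and the confidence split $\delta/4$.
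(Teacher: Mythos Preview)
Your proposal follows essentially the same architecture as the paper's proof: bound the Bayesian resolvability by $(1+1/n)d_\pi$ via the prior restricted to the KL-ball, add the upper-bracketing contribution from Zhang's Theorem~5.2/Proposition~5.2, and then combine Pinsker with a Markov step at level $\delta/4$ to convert the posterior-averaged divergence into the total-variation tail statement. So the high-level plan is correct.

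The one place where your description is less concrete than the paper is the origin of the factor $1/(1+e^{nh})$. You attribute it to an ``exponential (martingale-type) inequality,'' but the actual device in the paper (inherited from Zhang) is a \emph{tilted-prior} construction: one introduces $\pi'$ by downweighting the ``good'' region $\Gamma_1=\{p:D^{Re}_{\rho}(q\|p)<\epsilon_h\}$ by the factor $a=e^{-nh}$, applies the resolvability/bracketing bound to the $\pi'$-posterior to conclude $\pi'(\Gamma_2\mid X)\leq 1/2$, and then uses the algebraic identity
\[
\pi(\Gamma_2\mid X)=\frac{a\,\pi'(\Gamma_2\mid X)}{1-(1-a)\pi'(\Gamma_2\mid X)}\leq \frac{a}{1+a}=\frac{1}{1+e^{nh}}.
\]
This is also where the $(2\eta-1)h$ shift appears: the tilting adds $\ln(1/a)=nh$ to the resolvability term (giving the $\eta h$ piece) and $(\eta-1)h/(\eta-\rho)$ to the bracketing term, which at $\rho=1/2$ sums to $(2\eta-1)h/\rho(1-\rho)=(4\eta-2)h$ after multiplying through. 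A second minor difference in ordering: the paper applies Pinsker (and the Markov step producing the $\delta/4$) \emph{before} invoking Zhang's theorem on the tilted posterior, not after; your ordering still works, but the bookkeeping of constants is cleaner if you follow the paper's sequence. Once you make the tilted-prior step explicit, your sketch coincides with the paper's argument.
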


Similar to BOSS, for a new MDP $m^{*}\sim M$ with hidden parameters $\omega_{m^{*}}$, we can define the Bayesian concentration sample complexity for the task-specific posterior: $f(s,a,\epsilon_{0},\delta_{0},\rho_{0})$, as the minimum number $u$ such that, if $u$ IID transitions from $(s,a)$ are observed, then, with probability at least $1-\delta_{0}$,
\begin{equation}
\begin{aligned}
Pr_{m\sim posterior} (||T(\cdot|s,a,\omega_{m}) - T(\cdot|s,a,\omega_{m^{*}})||_{1} < \epsilon_{0}) \geq 1-\rho_{0}.
\label{bayesconceninq}
\end{aligned}
\end{equation}
Intuitively, the inequality means that for a model with the hidden parameter sampled from the learned posterior distribution, the probability that that it is within $\epsilon_0$ far away from the true model is larger than $1-\rho_0$.
\begin{lemma}
 Assume the posterior of each task is consistent (that is, $\varepsilon_{upper, n}=o(1)$) and set $\eta=2$, then the Bayesian concentration sample complexity for the task-specific posterior $f(s,a,\epsilon, \delta, \rho) = O\left(\frac{d_{\pi}+\ln \frac{1}{\rho}}{\epsilon^2\delta-d_{\pi}}\right)$.
\end{lemma}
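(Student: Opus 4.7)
The plan is to translate the posterior--mass bound of Proposition~\ref{lemma1} into the Bayesian--concentration form required by Equation~\ref{bayesconceninq}, then resolve the resulting inequality for the minimum sample size $n$. The three levers are (i) the free parameter $h \ge 0$ in Proposition~\ref{lemma1}, (ii) the consistency assumption $\varepsilon_{upper,n}=o(1)$, and (iii) the explicit form of $\varepsilon_n$ once $\eta=2$ is substituted.

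First I would instantiate Proposition~\ref{lemma1} at $\eta=2$ (with $\rho=1/2$ fixed as in the statement), so that
$$\varepsilon_n \;=\; 2(1+1/n)\,d_\pi + \tfrac{3}{2}\,\varepsilon_{upper,n}(2/3),\qquad \frac{2\varepsilon_n + 6h}{\delta/4} \;=\; \frac{8\varepsilon_n + 24h}{\delta}.$$
The target event $\{\|T(\cdot|s,a,\omega_m)-T(\cdot|s,a,\omega_{m^{*}})\|_1 \ge \epsilon\}$ in Equation~\ref{bayesconceninq} is exactly the $\|p-q\|_1\ge\epsilon$ event, and it is contained in the set controlled by the proposition as soon as $\epsilon^2/2 \ge (8\varepsilon_n + 24h)/\delta$. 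Choosing $h = (\epsilon^2\delta - 16\varepsilon_n)/48$ (admissible provided the right-hand side is positive, i.e.\ $\varepsilon_n < \epsilon^2\delta/16$) saturates this containment and carries the posterior bound $\pi_n(\{\|p-q\|_1\ge\epsilon\}\mid X) \le 1/(1+e^{nh})$ with probability $\ge 1-\delta$ over the data.

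Next I would close the posterior--mass tail to the target $\rho$: requiring $1/(1+e^{nh}) \le \rho$ gives $nh \ge \ln((1-\rho)/\rho) = \Theta(\ln(1/\rho))$, so
$$n \;\ge\; \frac{48\,\ln(1/\rho)}{\epsilon^2\delta - 16\varepsilon_n}.$$
Plugging in $16\varepsilon_n = 32 d_\pi + 32 d_\pi/n + 24\,\varepsilon_{upper,n}(2/3)$ turns the denominator into $(\epsilon^2\delta - 32 d_\pi) - 32 d_\pi/n - o(1)$, which is self--referential in $n$. I would resolve this by a standard fixed--point argument: try $n_0 = C(d_\pi + \ln(1/\rho))/(\epsilon^2\delta - d_\pi)$ for a sufficiently large absolute constant $C$, and verify that (i) the residual $32 d_\pi/n_0$ is at most, say, half of $\epsilon^2\delta - 32 d_\pi$, and (ii) once $\varepsilon_{upper,n_0}$ has decayed enough by consistency, the resulting denominator remains within a constant factor of $\epsilon^2\delta - d_\pi$. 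The $d_\pi$ in the numerator of the claimed rate appears precisely because one must choose $n$ large enough to drive the $1/n$ slack in $\varepsilon_n$ below the available gap.

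The main obstacle is exactly this self--referential structure: $\varepsilon_n$ depends on $n$ through both the explicit $1/n$ term and the consistency remainder $\varepsilon_{upper,n}$. The fixed--point step will require care in how these two terms are separately absorbed into the big--$O$ constant, otherwise one can easily pick up parasitic factors of $d_\pi$ or $\epsilon^{-1}$. A minor secondary issue is that $\ln((1-\rho)/\rho) = \Theta(\ln(1/\rho))$ only when $\rho$ is bounded away from $1/2$; for $\rho$ near $1/2$ the tail is trivially satisfied for $n = O(d_\pi/(\epsilon^2\delta - d_\pi))$, which still matches the stated rate and so can be folded into a single unified bound.
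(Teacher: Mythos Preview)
Your proposal is correct and follows exactly the route the paper takes: the paper's own proof is the single sentence ``this bound can be derived by directly combining Lemma~\ref{lemma1} and Equation~\ref{bayesconceninq},'' and you have carried out precisely that combination---choosing $h$ so that the $\ell_1$ threshold matches $\epsilon$, closing the $1/(1+e^{nh})$ tail to $\rho$, and then inverting for $n$. The fixed--point handling of the $n$--dependence in $\varepsilon_n$ is a detail the paper's sketch leaves entirely implicit, so your write--up is in fact more complete than the original.
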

\begin{proof}
This bound can be derived by directly combining Lemma~\ref{lemma1} and Equation~\ref{bayesconceninq}. 
\end{proof}

The above lemma suggests an upper bound of the Bayesian concentration sample complexity using the prior-mass radius. We can further combine this result with PAC-MDP theory~\citep{Strehl2006IncrementalML} and derive the sample complexity of the algorithm for each new task.

\begin{proposition}
For each new task, set the sample size $K=\Theta(\frac{S^2 A}{\delta}\ln \frac{SA}{\delta})$ and the 
parameters $\epsilon_{0}=\epsilon(1-\gamma)^2, \delta_{0}=\frac{\delta}{SA}, \rho_{0}=\frac{\delta}{S^2A^2 K}$, then, with probability at least $1-4\delta$, $V^{t}(s^{t})\geq V^{*}(s^{t}) - 4\epsilon_{0}$ in all but $\tilde{O}(\frac{S^2A^2 d_{\pi}}{\delta \epsilon^{3} (1-\gamma)^{6}})$ steps, where $\tilde{O}(\cdot)$ suppresses logarithmic dependence.
\end{proposition}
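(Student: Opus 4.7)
The plan is to invoke the standard PAC-MDP analysis of BOSS~\citep{DBLP:conf/uai/AsmuthLLNW09} with the Bayesian concentration sample complexity replaced by the bound from the preceding lemma, and then simply substitute the parameter choices. At a high level, nothing new is needed beyond re-interpreting the world-model posterior as the ``prior'' that BOSS would have used in a single-task setting; the lifelong element of the argument is absorbed into the size of $d_{\pi}$.

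First I would restate the BOSS guarantee: given any concentration sample complexity $f(s,a,\epsilon_{0},\delta_{0},\rho_{0})$ satisfying Equation~\ref{bayesconceninq}, with $K=\Theta\!\left(\frac{S^{2}A}{\delta}\ln\frac{SA}{\delta}\right)$ models sampled from the posterior at each under-visited state--action pair and with the calibrations $\epsilon_{0}=\epsilon(1-\gamma)^{2}$, $\delta_{0}=\delta/(SA)$, $\rho_{0}=\delta/(S^{2}A^{2}K)$, the policy obeys $V^{t}(s^{t})\geq V^{*}(s^{t})-4\epsilon_{0}$ on all but $\tilde O\!\left(\frac{SAf}{\epsilon(1-\gamma)^{2}}\right)$ steps with probability at least $1-4\delta$. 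The four failure modes that produce the $4\delta$ union bound are: (i) the Bayesian concentration condition failing at some visited $(s,a)$, (ii) the merged optimistic MDP failing to upper-bound the true value, (iii) the simulation-lemma error exceeding $\epsilon_{0}$, and (iv) the escape-probability argument failing. I would verify that the hierarchical posterior $P(\omega_{i}\mid D_{i}^{t})$ initialized from $P(\omega\mid D_{1:i-1})$ plays exactly the role of BOSS's prior, so the same four concentration bounds transfer, conditioned on the past-task data $D_{1:i-1}$.

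Second, I would substitute the bound from the preceding lemma, $f=O\!\left(\frac{d_{\pi}+\ln(1/\rho_{0})}{\epsilon_{0}^{2}\delta_{0}-d_{\pi}}\right)$, with the three parameter settings above. Under the regime $d_{\pi}\ll\epsilon_{0}^{2}\delta_{0}=\epsilon^{2}(1-\gamma)^{4}\delta/(SA)$, the denominator is $\Theta(\epsilon^{2}(1-\gamma)^{4}\delta/(SA))$ and the logarithmic term $\ln(1/\rho_{0})=O(\log(SAK/\delta))$ is absorbed into the $\tilde O$. This yields
\begin{equation*}
f=\tilde O\!\left(\frac{SA\,d_{\pi}}{\epsilon^{2}(1-\gamma)^{4}\delta}\right),
\end{equation*}
and multiplying by the PAC-MDP prefactor $\tilde O(SA/(\epsilon(1-\gamma)^{2}))$ reproduces the claimed $\tilde O\!\left(\frac{S^{2}A^{2}d_{\pi}}{\delta\epsilon^{3}(1-\gamma)^{6}}\right)$ bound.

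The main obstacle will be the probability bookkeeping when the ``prior'' is itself a data-dependent posterior inherited from previous tasks. The original BOSS analysis tacitly assumes a fixed prior, so I would need to condition on $D_{1:i-1}$ throughout and ensure the Bayesian concentration event in Equation~\ref{bayesconceninq} is measured with respect to the correct data-dependent posterior, taking the outer expectation only at the end. A secondary point, worth flagging but not genuinely hard, is justifying the regime $d_{\pi}\ll\epsilon_{0}^{2}\delta_{0}$: if the world-model posterior is still far from $P_{\Omega}$ after few tasks, the denominator collapses and the bound degenerates, which is exactly the qualitative message that fewer prior tasks produce weaker forward transfer. Once those issues are addressed, the remainder is the algebra above.
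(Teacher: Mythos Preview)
Your proposal is correct and follows essentially the same route as the paper: both reduce the proof to the BOSS PAC-MDP guarantee (via the Strehl--Li--Littman theorem) with the Bayesian concentration sample complexity replaced by the $d_{\pi}$-dependent bound of Lemma~4.3, and then substitute the stated parameter choices. Your write-up is in fact more explicit than the paper's---the paper simply cites BOSS and omits the algebra---and your flagged subtleties (conditioning on $D_{1:i-1}$ for the data-dependent prior, and the regime $d_{\pi}\ll\epsilon_{0}^{2}\delta_{0}$ needed for the denominator not to collapse) are genuine points the paper leaves implicit.
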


\begin{proof}
The central part of the proof is Proposition 4.2 (detailed proof in appendix), and the re-
maining parts are exactly the same as those for BOSS. In general, the proof is based on the PAC-MDP theorem~\citep{Strehl2009ReinforcementLI} combined with the new bound for the Bayesian concentration sample complexity we derived in Lemma~2. For each new task, the main difference between BLRL and BOSS is that we use the world-model posterior to initialize the task-specific posterior, which results in a new sample complexity bound with $d_{\pi}$ .
\end{proof}
The result formalizes how the sample complexity of the lifelong RL algorithm will change with respect to the initialization quality of the posterior: if we put a larger prior mass at a density  close to the true $q$ such that $d_{\pi}$ is small, the sample efficiency of the algorithm will increase accordingly. In other words, the sample complexity of our algorithm drops proportionally to $d_{\pi}$, which is the distance between the Bayesian prior of the task-specific model initialized by the parameters of the world-model posterior and the true underlying task-specific distribution. We provide a illustrative example in the appendix~\ref{coineg}. 

\subsection{Variational Bayesian Lifelong RL}
\label{vblrlalg}

The intuition from the last section is that, if we initialize the task-specific distribution
with a prior that is close to the true distribution, sample
complexity will decrease accordingly.
To scale our approach, we must find a efficient way to explicitly approximate these distributions.  We propose a practical approximate algorithm, VBLRL, that uses neural networks and variational inference~\citep{DBLP:conf/colt/HintonC93}.  

We choose Bayesian neural networks (BNN) to approximate the posterior. The intuition is that, in the context of stochastic outputs, BNNs naturally approximate the hierarchical Bayesian model since they also maintain a learnable distribution over their weights and biases~\citep{DBLP:conf/nips/Graves11,DBLP:conf/nips/HouthooftCCDSTA16}.
We use the uncertainty embedded in the weights and biases of networks to capture the epistemic uncertainty introduced by hidden parameters of different tasks, while we also set the outputs of the neural networks to be stochastic to capture the aleatory uncertainty within each specific task. In our case, the BNN weights and biases distribution $q(\omega;\phi)$ (a distribution over $\omega$ but parameterized by $\phi$) can be modeled as fully factorized Gaussian distributions~\citep{DBLP:journals/corr/BlundellCKW15}:
\begin{equation}
    q(\omega;\phi) = \prod_{j=1}^{|\Omega|} \mathcal{N}(\omega_{j}|\mu_{j}, \sigma^{2}_{j}),
\end{equation}
where $\phi = \{ \mu, \sigma\}$, and $\mu$ is the Gaussian's mean vector while $\sigma$ is the covariance matrix diagonal. 

We maintain a world-model BNN across all the tasks and a task-specific BNN for each task. The input for all the BNNs is a state–action pair,
and the output are the mean and variance of the prediction
for reward and next state. Then, the posterior distribution over the model parameters can be computed leveraging variational lower bounds~\citep{Hinton1993KeepingTN,DBLP:conf/nips/HouthooftCCDSTA16}:
\begin{equation}
\begin{aligned}
    \phi_{t} = \argmin_{\phi} \Big[D_{KL} [q(\omega; \phi)||p(\omega)] - \mathbb{E}_{\omega \sim q(\cdot;\phi)}[\log p(s^{t+1}, r^{t}|D^{t}, a^{t}; \omega)]\Big],
    \label{equ44}
\end{aligned}
\end{equation}
where $p(\omega)$ represents the fixed prior distribution of $\omega$, also recall that $D^t = \{s^1, a^1, r^1, \cdots, s^t\}$. In VBLRL, $p(\omega)$ for the task-specific distribution is initialized by the world-model distribution, while $p(\omega)$ of the world-model distribution is simply set to a Gaussian. (That could be improved with more informed task family knowledge.) The second term on the right hand side can be approximated through $\frac{1}{N}\sum_{i=1}^{N}\log p(s^{t+1}, r^{t}|D^{t}, a^{t}; \omega_{i})$ with $N$ samples from $\omega_{i} \sim q(\phi)$. This optimization can be performed in parallel for each $s$, keeping $\phi_{t-1}$ fixed. Details about our BNN structure can be found in appendix~\ref{bnndetail}.

Once we have the world-model as well as the task-specific posterior model, we can do posterior sampling to drive Bayesian exploration. \citet{Osband2016DeepEV} and \citet{DBLP:conf/icml/RakellyZFLQ19} already show the benefit of posterior sampling for improving sample efficiency in single-task RL and meta-RL settings. Here, instead of sampling from the posterior of value functions or latent context representations, we directly sample from the posterior of the transition model and choose the optimal action based on the transition samples. Then we use the collected samples to update the posterior and do sampling again. The agent thus continually do Bayesian exploration and acts more and more optimally as the posterior distribution narrows.
\begin{wrapfigure}{r}{0.5\textwidth}
    \includegraphics[width=0.45\textwidth, trim={1cm 0.cm 0cm 0cm}]{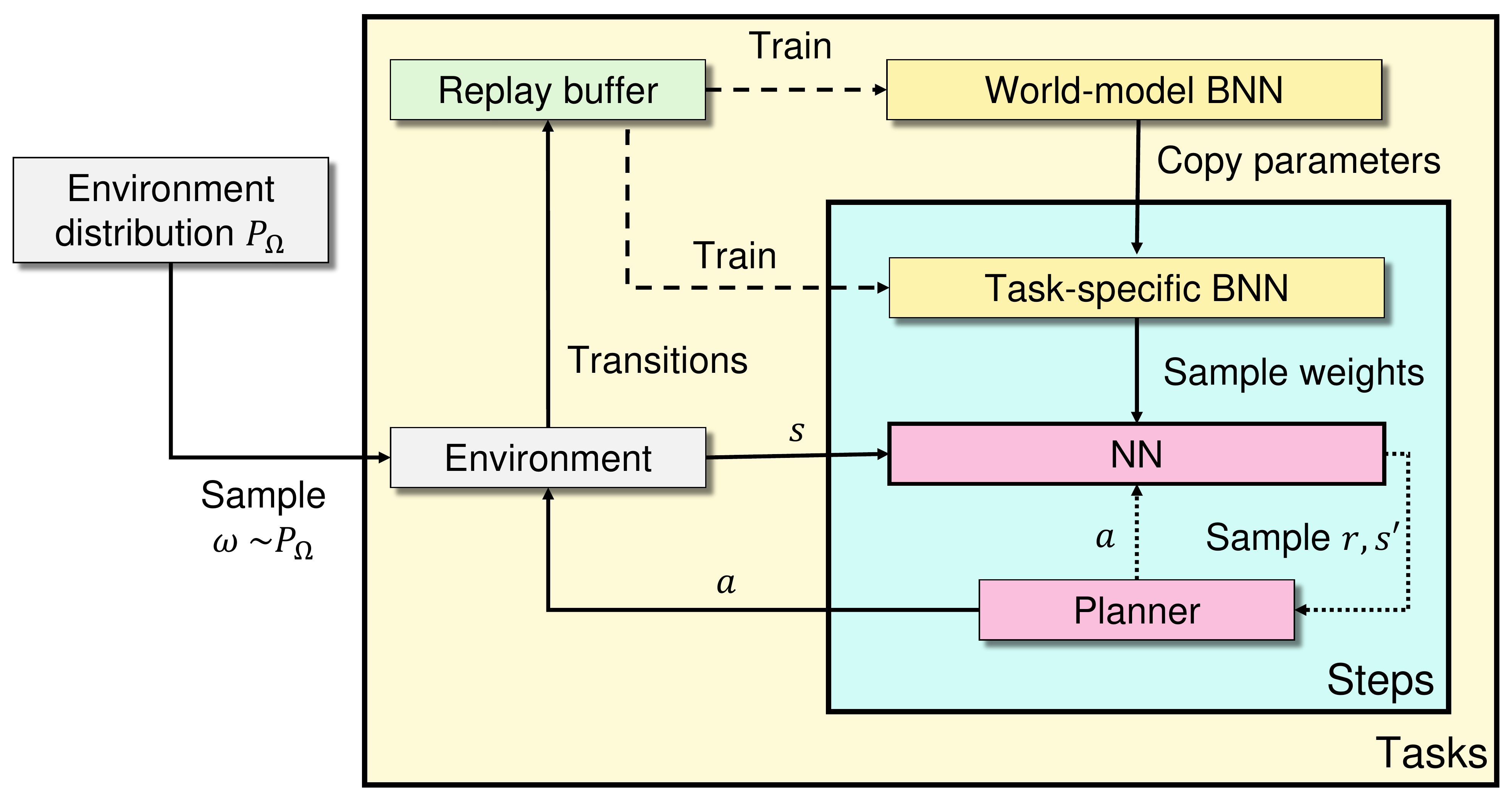}
         
         \label{fig:vblrl}

    \caption{Variational Bayesian Lifelong RL (VBLRL). ``NN'' is the network using fixed parameters  sampled from the weight distribution of the task-specific BNN. When the agent does CEM planning, we let it propagate the state particles using several NNs that use different network parameters, all sampled from the BNN to achieve randomness in transitions.}
    \label{fig:gridres22}
\end{wrapfigure}
We provide the framework of VBLRL in Figure~\ref{fig:gridres22}. The outer loop (yellow rectangle) stands for the loop of training world-model BNN across tasks (noted at the bottom-right corner), the inner loop (cyan rectangle) stands for the training of the task-specific BNN within each specific task.We employ our posterior knowledge models in the context of a model-based RL method. When encountering a new task, VBLRL first uses the model parameters (that is, $\{\mu,\sigma\}$ of weights and biases of BNN) from the general knowledge model to initialize the task-specific posterior network (directly copy parameters). 
We sample transitions from the task-specific posterior and select actions based on the generated transitions. The detailed algorithm is summarized in Algorithm~\ref{variational2}. Specifically, for planning, at each step we begin by creating $P$ particles from the current state $s^{p}_{\tau=t}=s_{t}\forall p$. Then, we sample $N$ candidate action sequences $a_{t:t+T}$ from a learnable distribution. These two steps are the same as PETS~\citep{DBLP:conf/nips/ChuaCML18}. Then we propagate the state--action pairs using the learned task-specific model $p_{m_{i}}(\cdot|s,a)$ (BNN) and use the cross entropy method~\citep{Botev2013Chapter3} to update the sampling distribution to make the sampled action sequences close to previous action sequences that achieved high reward. We further calculate the cumulative reward estimated (via the learned model) for previously sampled sequences and select the current action based on the mean of that distribution. By sampling from the task-specific posterior at each step, the agent explores in a temporally extended and diverse manner.

For each specific task, we use the task-specific BNN to plan during forward learning. They are updated with only the
data from the current task thus avoid catastrophic forgetting
in the forward transfer process. The world model is updated
using the data from {\bf all} the visited tasks.
The intuition is to guide the two posteriors to separately learn two categories of uncertainty within lifelong learning tasks. The world-model posterior captures the epistemic uncertainty of the general knowledge distribution (shared across all tasks controlled by the hidden parameters) via the internal variance of world-model BNN. As the learner is exposed to more and more tasks, the posterior should converge to $P_{\Omega}$. The task-specific posterior captures the epistemic uncertainty of the current task $m_i$, which comes from the aleatory uncertainty of the world model when generating $\omega_{i}$ for a new task, via the internal variance of task-specific BNN. We provide additional illustration in Figure~\ref{fig:undert}. In general, we used the world-model BNN to encode $P_{\Omega}$ over a family of tasks, and the task-specific BNN to encode $P(\omega_i)$, which should peak at the specific true $\omega_i$ sampled for the current task.

Note that in single-task cases, other CEM-based algorithms like PETS usually maintain a set of neural networks using the same training data, and sample action sequences from each of the neural nets to achieve randomness in transitions. However, in lifelong RL settings, it is unrealistic to maintain ($\geq 30$) models for each task encountered. Our usage of BNNs avoids such problems as we only have to train one neural network using the same data for each task, and we can sample an unlimited number of different action sequences to cover more possibilities as needed. In PETS, the epistemic uncertainty is estimated via the variance of the output mean of different neural networks, while, in VBLRL, it is estimated via the variance of the weights and biases distribution of the BNN. 

\subsection{Backward Transfer of Variational Bayesian Lifelong RL}
\label{backwardalg}
In our lifelong RL setting, the agent interacts with each task for only a limited number of episodes and the task-specific model stops learning when the next task is initiated. As a result, there may exist portions of the transition dynamics in which model uncertainty remains high. However, as the world-model posterior continues to train on new tasks, it gathers more experience in the whole state space and can provide improved guesses concerning the ``unknown" transition dynamics, even for previously encountered tasks.

Intuitively, the performance of an agent on one task has the potential to be further improved (positive backward transfer) if there exists a sufficiently large set of state--action transition pairs of which the task-specific model's predictions are not confident due to lack of data. 
In our algorithm, the aleatory uncertainty
(irreducible chance in the outcome) is measured by the output variance of the prediction $\{\sigma_{r^{p}_{\tau}}, \sigma_{s^{p}_{\tau}}\}$, and the epistemic uncertainty (due to lack of experience) corresponds to the uncertainty of the output mean and variance (see Definition 1 below). Thus, a straightforward method to improve a previously learned task-specific model is to find the predictions it needs to make that have high epistemic uncertainty, and replace them with the predictions from the world-model posterior, which has lower epistemic uncertainty. If we only consider reward prediction, the quantity for measuring whether a task-specific/world model is sufficiently confident is as follows. 
\begin{definition}
For a given state--action pair $(s,a)$, we define the confidence level $c$ of the predictions (reward) $r^{p}_{\tau}(s,a)$ from a task-specific/world model as: 
\begin{equation}
\begin{aligned}
c = -\frac{\sum_{p=1}^P(\mu_{r^{p}_{\tau}}-\overline{\mu}_{r^{p}_{\tau}})^2}{P-1} - \alpha * \frac{\sum_{p=1}^P(\sigma_{r^{p}_{\tau}}-\overline{\sigma}_{r^{p}_{\tau}})^2}{P-1}, \\
\end{aligned}
\end{equation}
where $P$ is the number of particles and $\alpha$ is a hyperparameter controlling the scale of the second term. A similar definition applies to the task-specific/world model's next-state prediction. Intuitively, $c$ measures the uncertainty of the output mean and variance for each dynamic prediction. During CEM planning, we propagate each pair of $P$ state particles with different network parameters thanks to the usage of BNN, resulting in $P$ different $\{\mu_{r^{p}_{\tau}}, \sigma_{r^{p}_{\tau}}\}$. Thus we can further calculate the confidence level of current predictions based on these $P$ pairs of output mean and variance.
\label{def1}
\end{definition}

We show the detailed backward transfer algorithm in Algorithm~\ref{variationalb}. Compared with the forward training version, the agent will also calculate the confidence level of the task-specific and general-knowledge model for each particular transition and compare the value of them when predicting the state particles. If the task-specific model is not confident enough for this state--action pair (i.e. $c_{m_{i}} < c_{wm}$), we will use the world-model to do the predictions instead. 


\section{Experiments}
\subsection{OpenAI Gym MuJoCo Domains}
\label{exp}
We evaluated the performance of VBLRL on HiP-MDP versions of several continuous control tasks from the Mujoco physics simulator~\citep{DBLP:conf/iros/TodorovET12}, \textit{HalfCheetah-gravity, HalfCheetah-bodyparts, Hopper-gravity, Hopper-bodyparts, Walker-gravity, Walker-bodyparts}, all of which are lifelong-RL benchmarks used in prior work
~\citep{Mendez2020LifelongPG}. For each of six different domains, the task-specific hidden parameters correspond to different gravity values or different sizes and masses of the simulated body parts. 
Details can be found in the appendix. Compared with prior work, we substantially reduced the number of iterations that the agent can sample and train on:  
100 iterations for each task and a horizon of 100 (Halfcheetah) or 400 (Hopper \& Walker) for each iteration. 
We used such settings to increase the difficulty of lifelong learning and test the sample efficiency of lifelong RL algorithms, given that it is hard for a single-task training algorithm to obtain a good policy within such limited number of interactions with the environments. However, we show in appendix~\ref{appendixsingle} and ~\ref{appendixlifelong} that, as the agent is exposed to more and more tasks, our lifelong learning agent is able to achieve similar performance to that of more fully trained single-task agents while requiring far fewer per-task samples.

\begin{table*}[htb]
\tiny
\centering
\begin{tabular}{c|c|c|c|c|c}
\toprule
\centering
  & \textbf{VBLRL} & \textbf{T-HiP-MDP} & \textbf{LPG-FTW ($2\times$ samples)} & \textbf{EWC ($2\times$ samples)} & \textbf{Single-task MBRL}\\\midrule 
 \textit{CG}-Start & $\mathbf{160.68\pm 48.80}$ & $126.95\pm31.41$ & $\ -81.59\pm\ 9.18$ & $-3426.76\pm827.99$& $-83.96\pm60.10$\\
 \textit{CG}-Train & $\mathbf{226.72\pm26.53}$ & $170.20\pm39.92$ & $\ -29.49\pm11.03$ & $-3440.66\pm1007.50$& $-40.47\pm10.68$\\
 \textit{CG}-Back & $\mathbf{231.79\pm23.49}$ & $\ 97.84\pm22.04$ & $\ -29.95\pm11.64$ & $-6672.33\pm3748.63$&/\\
 \textit{CB}-Start & $\mathbf{110.74\pm41.96}$ & $\ 78.95\pm18.43$ & $-263.94\pm40.80$ & $-5016.93\pm1708.10$ & $-101.02\pm39.11$\\
 \textit{CB}-Train & $\mathbf{173.97\pm78.26}$ & $\ 87.20\pm\ 9.42$ & $-217.86\pm42.82$ & $-5454.52\pm2145.82$ & $-58.93\pm33.24$\\
 \textit{CB}-Back & $\mathbf{181.60\pm67.50}$ & $116.03\pm17.35$ & $-116.41\pm65.64$ & $-13889.31\pm6851.05$ & /\\
 \textit{HG}-Start & $268.11\pm33.29$ & $230.21\pm30.75$ & $\ 305.63\pm34.55$ & $\mathbf{306.79\pm29.14}$ & $18.62\pm1.51$\\
 \textit{HG}-Train & $332.89\pm23.68$ & $285.87\pm41.48$ & $\mathbf{352.10\pm25.25}$ & $345.47\pm40.26$ & $20.46\pm1.77$\\
 \textit{HG}-Back & $\mathbf{360.94\pm\ 7.71}$ & $312.11\pm55.45$ & $\ 347.07\pm44.09$ & $301.06\pm114.11$ & /\\
 \textit{HB}-Start & $193.27\pm\ 8.03$ & $181.94\pm12.97$ & $\mathbf{256.13\pm69.68}$ & $133.95\pm27.61$ & $19.12\pm1.75$\\
 \textit{HB}-Train & $\mathbf{296.34\pm11.15}$ & $227.50\pm41.78$ & $\ 285.62\pm78.18$ & $139.01\pm46.24$ & $21.25\pm2.67$\\
 \textit{HB}-Back & $\mathbf{316.74\pm20.72}$ & $213.93\pm75.95$ & $\ 281.99\pm74.96$ & $-384.27\pm199.27$ & /\\
 \textit{WG}-Start & $\mathbf{248.97\pm19.95}$ & $217.94\pm39.99$ & $\ 140.75\pm67.64$ & $163.61\pm86.55$ & $4.32\pm 0.23$\\
 \textit{WG}-Train & $\mathbf{333.65\pm26.73}$ & $268.97\pm36.26$ & $\ 166.00\pm45.04$ & $181.76\pm97.42$ & $210.41\pm 39.34$\\
 \textit{WG}-Back & $\mathbf{364.18\pm49.60}$ & $173.33\pm58.43$ & $\ 168.39\pm94.19$ & $216.09\pm62.67$&/\\
 \textit{WB}-Start & $\mathbf{222.88\pm34.49}$ & $220.07\pm20.31$ & $\ 164.79\pm10.31$ & $164.00\pm34.31$ & $4.64\pm0.23$\\
 \textit{WB}-Train & $\mathbf{311.41\pm16.30}$ & $266.97\pm26.72$ & $\ 165.75\pm\ 2.39$ & $206.97\pm41.40$ &$196.42\pm22.04$\\
 \textit{WB}-Back & $\mathbf{317.82\pm13.90}$ & $229.10\pm16.60$ & $\ 195.01\pm49.25$ & $152.94\pm 70.59$ &/ \\\bottomrule
\end{tabular}
\caption{Results on OpenAI Gym Mujoco domains. \textit{CG} denotes \textbf{Cheetah-Gravity}, \textit{CB} denotes \textbf{Cheetah-Bodyparts}, \textit{HG} denotes \textbf{Hopper-Gravity}, \textit{HB} denotes \textbf{Hopper-Bodyparts}, \textit{WG} denotes \textbf{Walker-Gravity}, \textit{WB} denotes \textbf{Walker-Bodyparts}.}
\label{tab_mujoco}
\end{table*}

 We compared VBLRL against: 1. state-of-the-art lifelong RL method LPG-FTW~\citep{Mendez2020LifelongPG}, 2.  EWC~\citep{DBLP:journals/corr/KirkpatrickPRVD16}, which is a single-model lifelong RL algorithm that achieves comparable performance with LPG-FTW as shown in the latter paper, 3. single-task Model-Based RL (using the same BNN structure and planning procedures), which stands for training the agent from scratch for each new task and do not use the world model to initialize the task-specific model, 4. T-HiP-MDP~\citep{DBLP:conf/nips/KillianDDK17}, which is a model-based lifelong RL baseline. For a fair comparison, we further replaced the DDQN algorithm~\citep{DBLP:conf/aaai/HasseltGS16} used in T-HiP-MDP with CEM planning, and let the transition model also predict the reward for each state--action pair. This modified baseline is similar to the single-model version of VBLRL (i.e., only using the world-model posterior across all the tasks). Moreover, LPG-FTW and EWC are built upon a relatively simple policy gradient baseline, which does not perform as well as our model-based baseline in the single-task setting within the same amount of interactions. Thus, we let them collect \textbf{twice the amount of samples} each iteration compared to VBLRL.
 We show in Appendix~\ref{appendixsingle} that, with such modifications, in single-task settings the baselines achieve better or at least similar performance compared to the single-task version of VBLRL.

The results are shown in Table~\ref{tab_mujoco} and Figure~\ref{fig:full} in Appendix~\ref{appendixlifelong}. For all six domains, we report the average performance of all the tasks at the beginning of training (\textbf{Start}) and after all training for each new task (\textbf{Train}), as well as the average performance for all previous tasks after training for a given number of tasks, which is our backward transfer test (\textbf{Back}). As shown in the results, our VBLRL shows better performance on all three test stages of the HalfCheetah domain and Walker domain, as well as better backward transfer performance on Hopper-gravity and Hopper-bodyparts than the other three algorithms. Comparing Figure~\ref{fig:single11} and Figure~\ref{fig:full}, we find that, in the Hopper domains, even though the single-task baseline used by LPG-FTW and EWC performed much better than VBLRL after we let them collect  ${\bf 2\times}$ samples each iteration, VBLRL still achieves comparable performance with LPG-FTW and EWC in lifelong RL experiments, suggesting that VBLRL is capable of making better use of the data in the lifelong learning setting. In the walker domains where the single-task baselines achieved similar performance, VBLRL shows significantly better performance than LPG-FTW/EWC in lifelong RL experiments. These comparisons also suggest that VBLRL's lifelong learning performance may be further improved when combined with better model-based deep RL algorithms, like Dreamer~\cite{Hafner2020DreamTC}. 
EWC fails in some of the tasks as it is hard to directly learn a single shared policy that achieves good performance when the tasks are highly diverse. T-HiP-MDP shows good results on some of the tasks because it is more sample-efficient in learning a shared model across all the tasks and more easily captures the world-model uncertainty. However, it cannot achieve as good performance as VBLRL as it is hard to model the task-specific uncertainty using only one model across all tasks, which also leads to  negative backward transfer performance on tasks like Cheetah-Gravity. Comparing VBLRL's performance on the \textbf{Train} stage and \textbf{Back} stage, we also find that it shows positive backward transfer results on most tasks, without showing patterns of catastrophic forgetting. Overall, VBLRL's world-model posterior contributes to better forward transfer performance (\textbf{Start}), the learning of task-specific posterior
contributes to better forward transfer training for each new task (\textbf{Train}), and the combination of these two posteriors guides the agent to achieve better backward transfer performance (\textbf{Back}).  Comparing our method to LPG-FTW and EWC highlights the advantage of model-based techniques for sample efficiency in lifelong RL setting, while comparing to T-HiP-MDP and single-task MBRL demonstrates the importance of estimating both global and local uncertainty.

\subsection{Meta-world Domains}
\begin{wrapfigure}{r}{0.48\textwidth}
    \centering

     \includegraphics[trim={1cm 0.cm 1.5cm 1cm},width=0.235\textwidth]{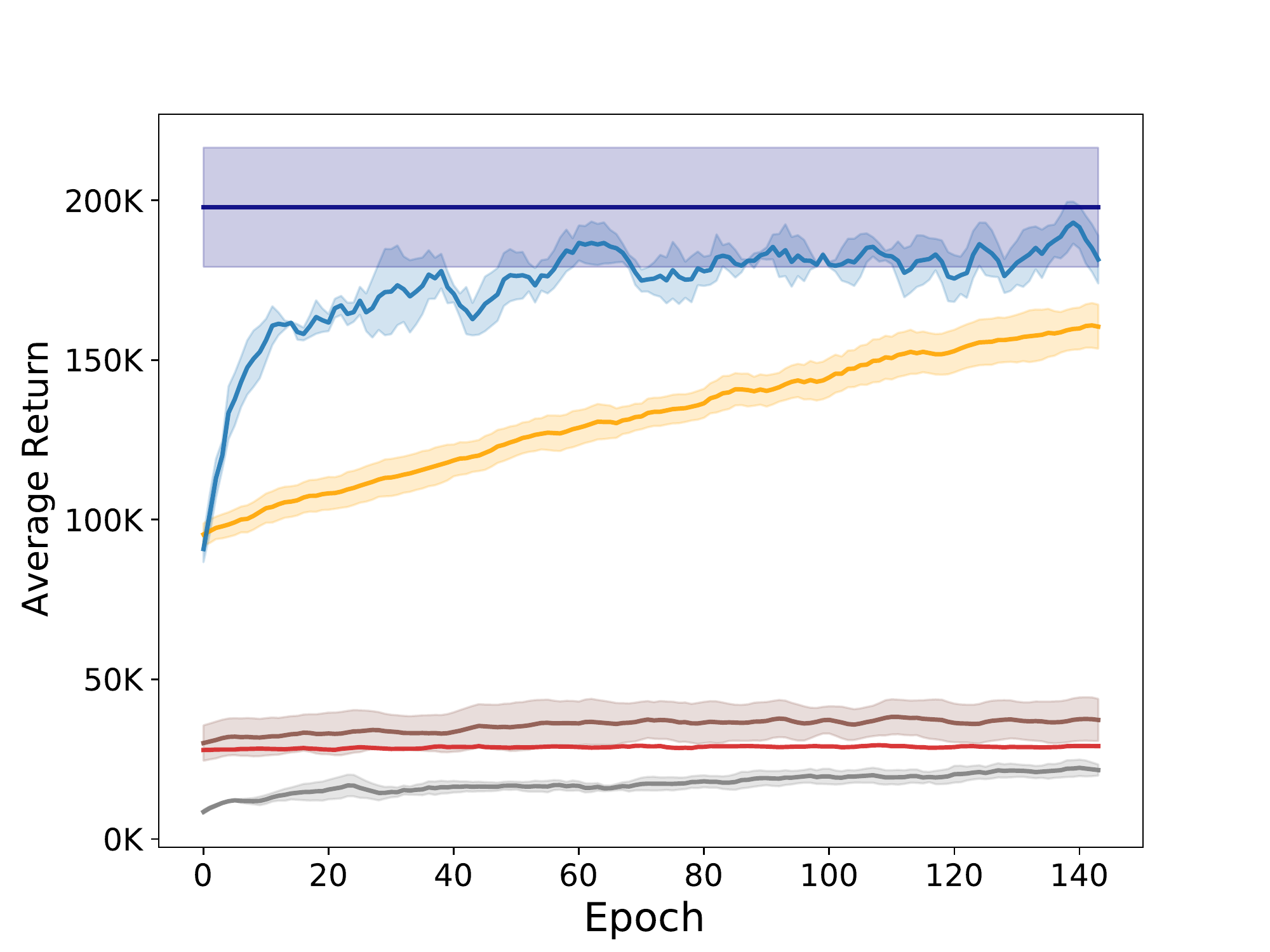}
         \includegraphics[trim={0.9cm 0.cm 1.7cm 1cm},width=0.235\textwidth]{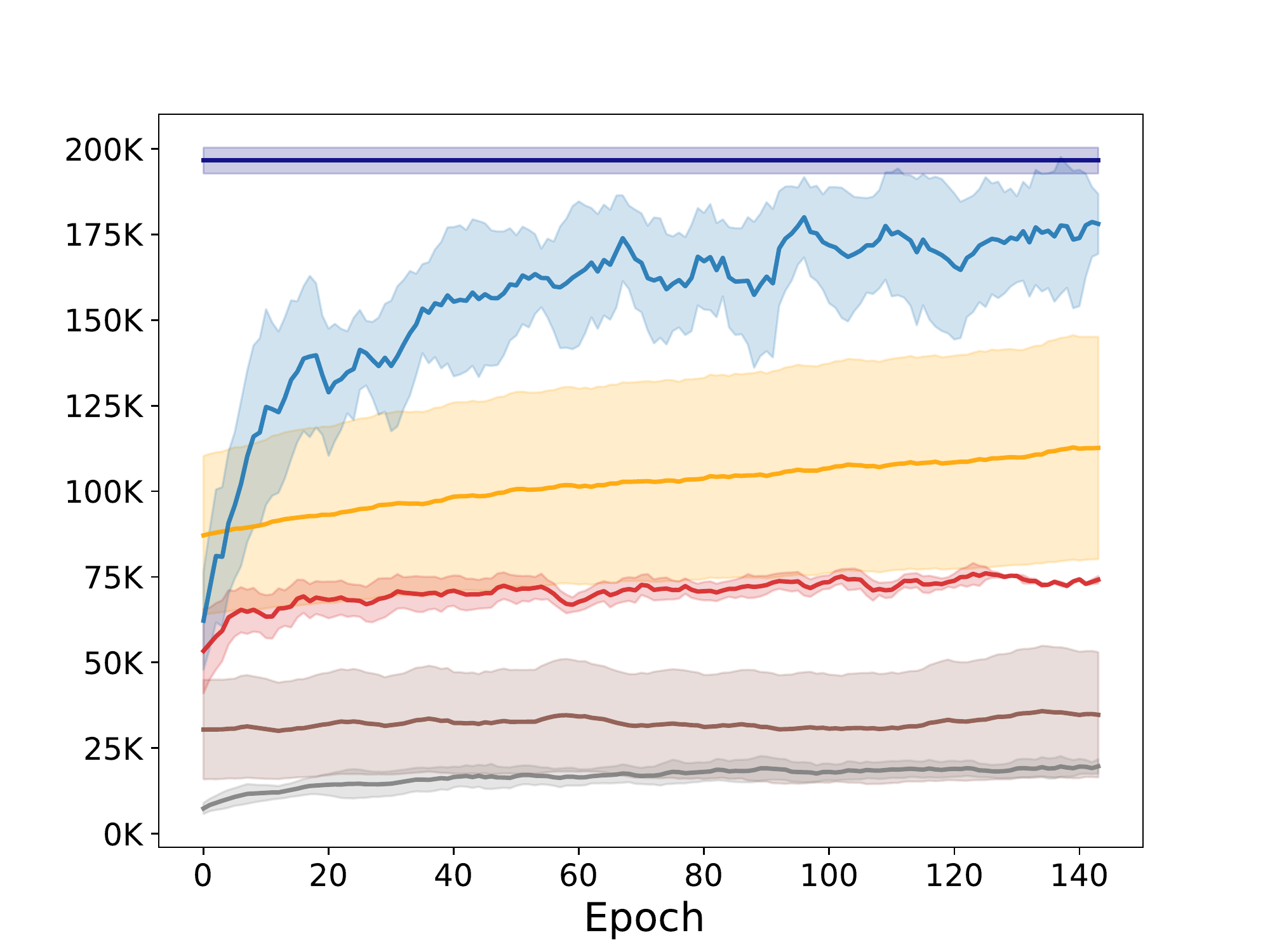}
\includegraphics[width=0.4\textwidth]{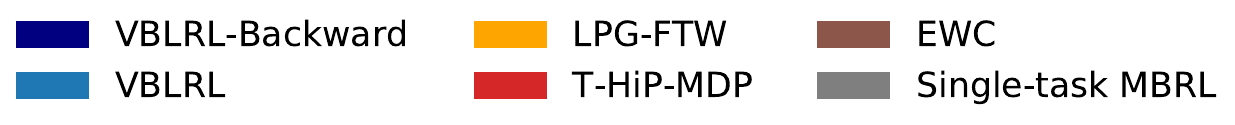}
    \caption{Average performance during training across all tasks. Left: Reach; Right: Reach-Wall.}
    \label{fig:reachw}
\end{wrapfigure}
Meta-World~\citep{Yu2019MetaWorldAB} contains a suite of challenging evaluation benchmarks on a robotic SAWYER arm. We evaluated the performance of VBLRL as well as the baselines on two Meta-World task sets: Reach and Reach-Wall following the settings given by~\citep{Zhang2021MetaCUREMR}. We used the v1 version of Meta-World, with state observations and dense versions of the reward functions. The hidden parameters in these cases are the goals we want the robot to reach, which controls the reward functions and are not included in the agent's observation space. We measured performance in terms of average cumulative rewards across tasks. As shown in Figure~\ref{fig:reachw}, VBLRL achieves significantly higher performance than the other baselines. VBLRL-Backward denotes VBLRL's average backward transfer performance on all the tasks and is higher than the final performance of VBLRL's forward training in both tasks, suggesting that our approach can achieve positive backward transfer. 
\begin{figure}[htbp]
    \centering
         \includegraphics[width=0.25\textwidth]{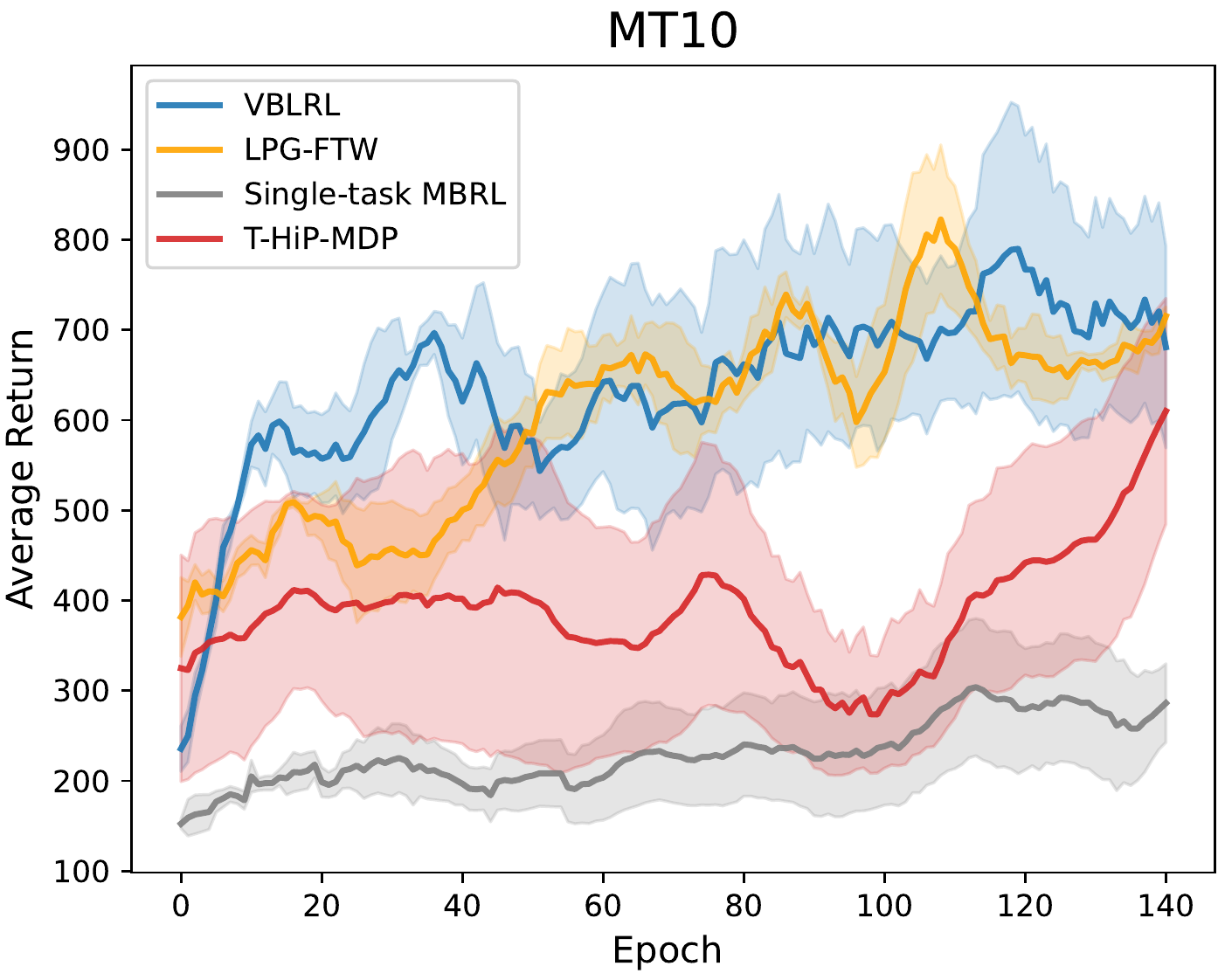}
         \includegraphics[width=0.22\textwidth]{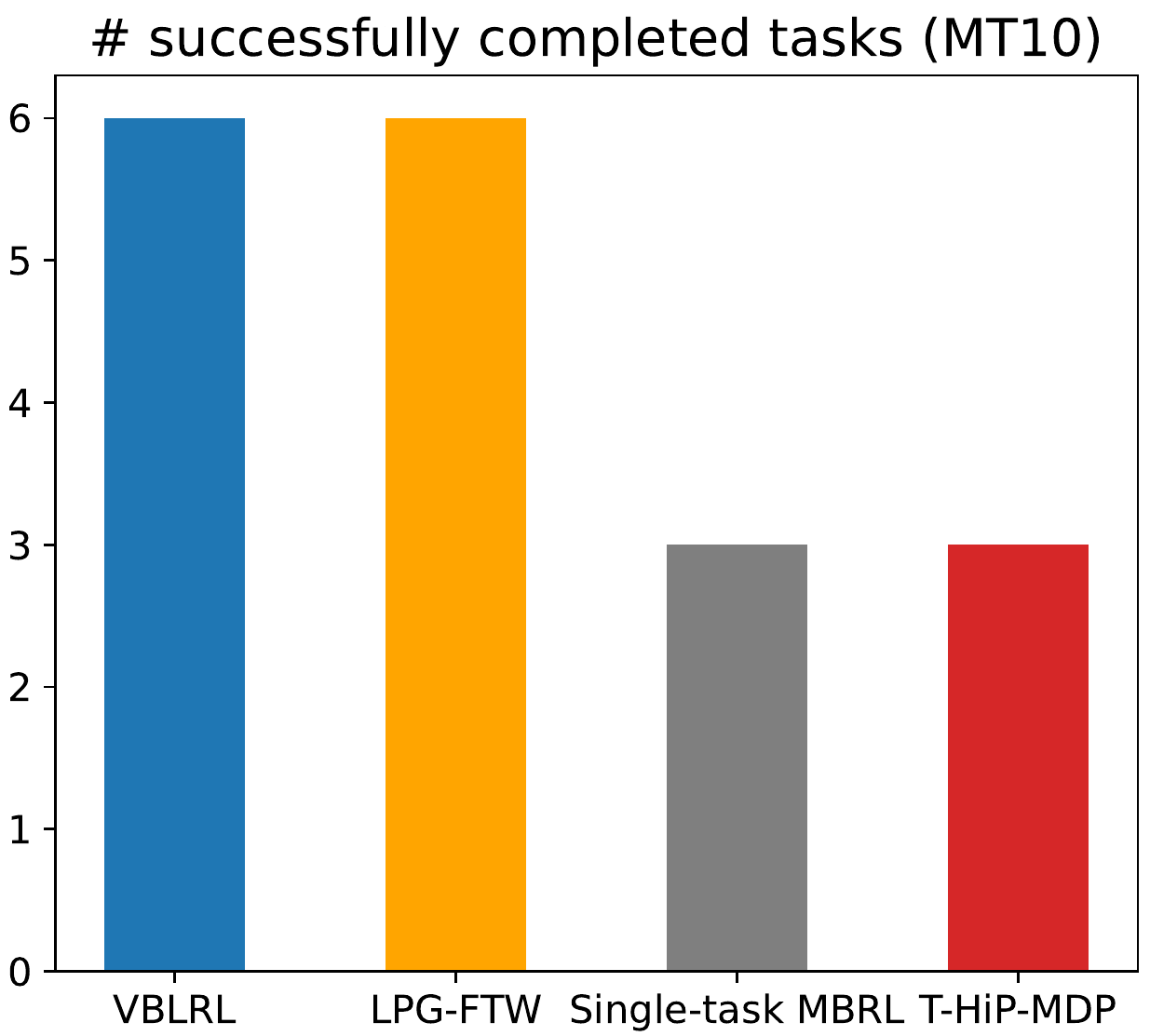}
         \includegraphics[width=0.25\textwidth]{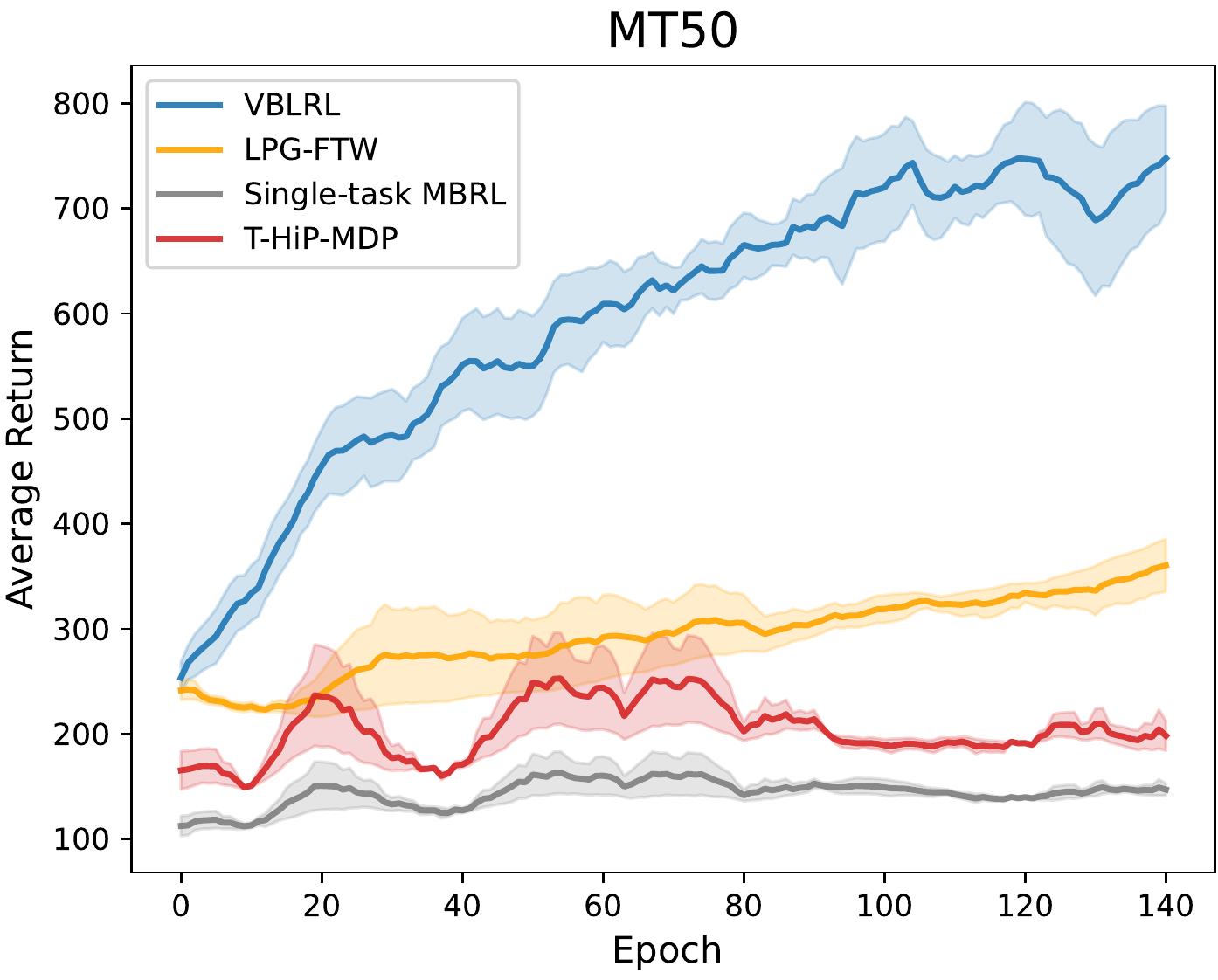}
         \includegraphics[width=0.22\textwidth]{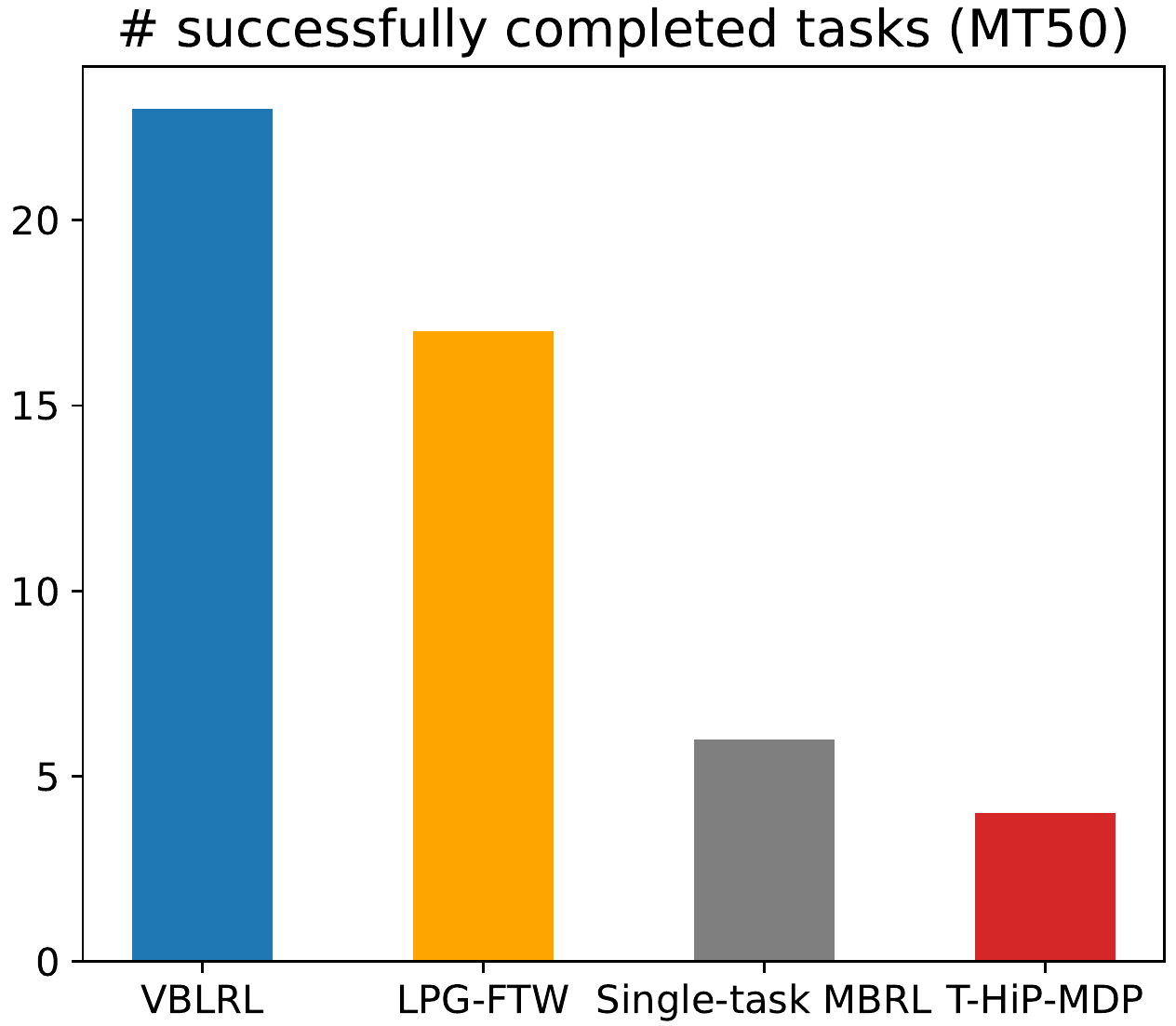}
    \caption{Performance comparison of VBLRL with the other baselines on MT10 and MT50. We also compare the {\bf maximum} number of successfully completed tasks of different methods. The performance of VBLRL is similar to LPG-FTW on MT10 but significantly outperforms LPG-FTW on MT50 as the world-model posterior becomes more accurate when the number of tasks is larger.}
   \label{fig:ml10}
\end{figure}

We include additional experiments on MT10 and MT50 domains of MetaWorld (v2) in Figure~\ref{fig:ml10}. MT10 and MT50 consist of 10 and 50 different categories of manipulation tasks, including reach, push, drawer close, button press etc., and is a highly difficult setting for multi-task RL. We modified these two domains to be lifelong learning settings: instead of getting all 10/50 tasks at the same time at the beginning, we let them arrive sequentially. For instance, the first task is “reach” and we let the agent interact with it for 150 iterations, and then switch to “push” and the agent cannot collect data from “reach” anymore, after another 150 iterations the task changes to “pick-place”, etc.  We compare the results with LPG-FTW, T-HiP-MDP as well as Single-task MBRL (train the agent from scratch for each new task without using the world-model posterior to initialize the task specific model). With the limited number of interactions for each task, our proposed algorithm is still able to achieve reasonably good performance in such settings. In MT10, the total number of tasks (10) is relatively small so the world-model distribution may be far from accurate, but VBLRL still performs better than Single-task MBRL which does not use the world model to initialize the task-specific model. In MT50, the number of tasks is large enough (50) and we can see that VBLRL significantly outperforms all the other baselines. This is consistent with our intuition that sample efficiency improves in the task specific regime when the world model approaches the environment distribution. As the single-task model-based RL baseline we used here is relatively simple (CEM planning) to be comparable with the single-task baseline used by LPG-FTW, it is reasonable to expect the lifelong learning performance to be even better after combining our framework with model-based methods like Dreamer~\citep{Hafner2020DreamTC}; we leave such evaluations for future work.


\subsection{Ablation Study}
\begin{wrapfigure}{r}{0.5\textwidth}
    \centering

         \includegraphics[trim={0.8cm 0.cm 0.5cm 1cm},width=0.22\textwidth]{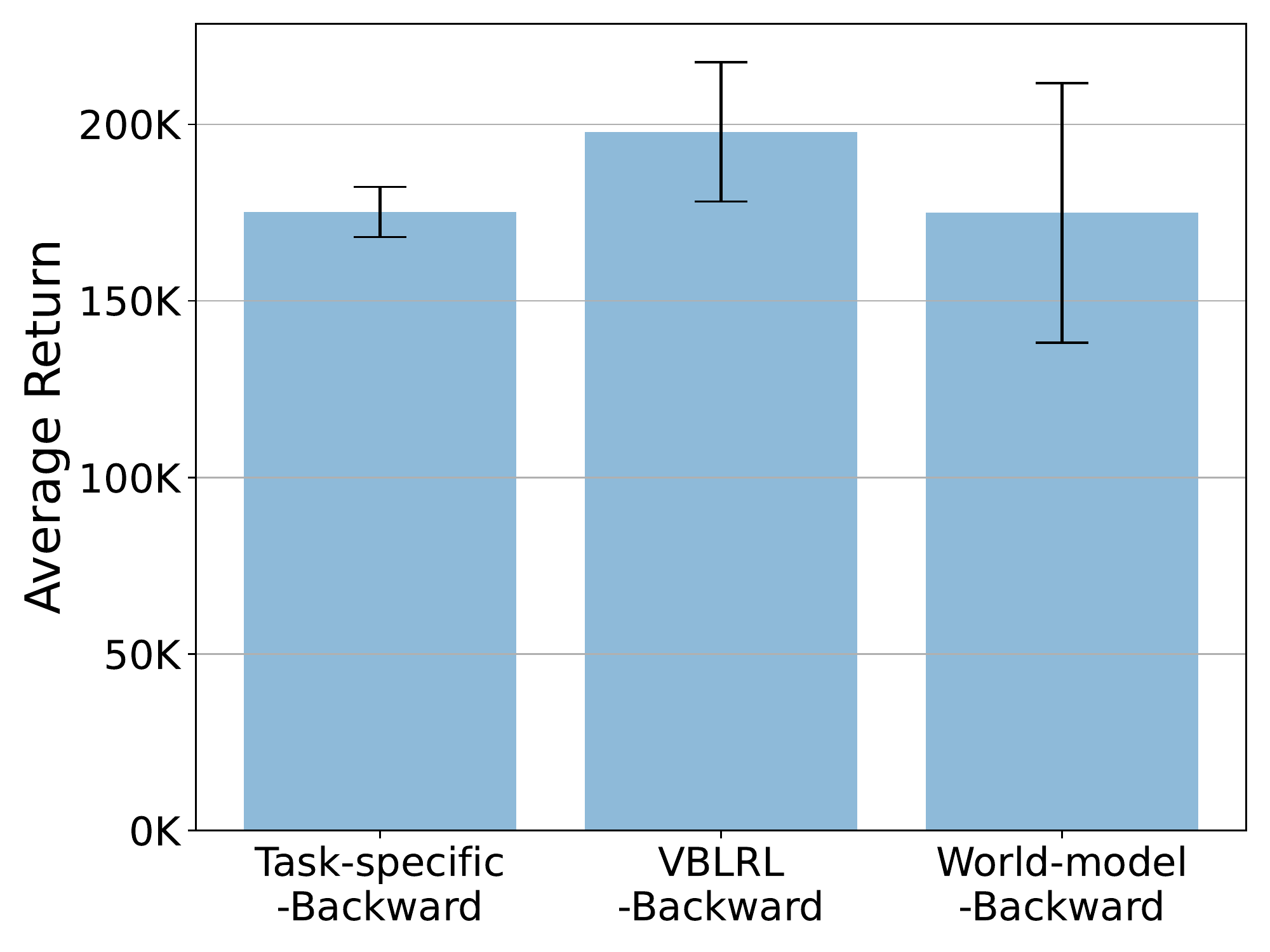}
         \includegraphics[trim={0.cm 0.cm 1.6cm 0.2cm},width=0.24\textwidth]{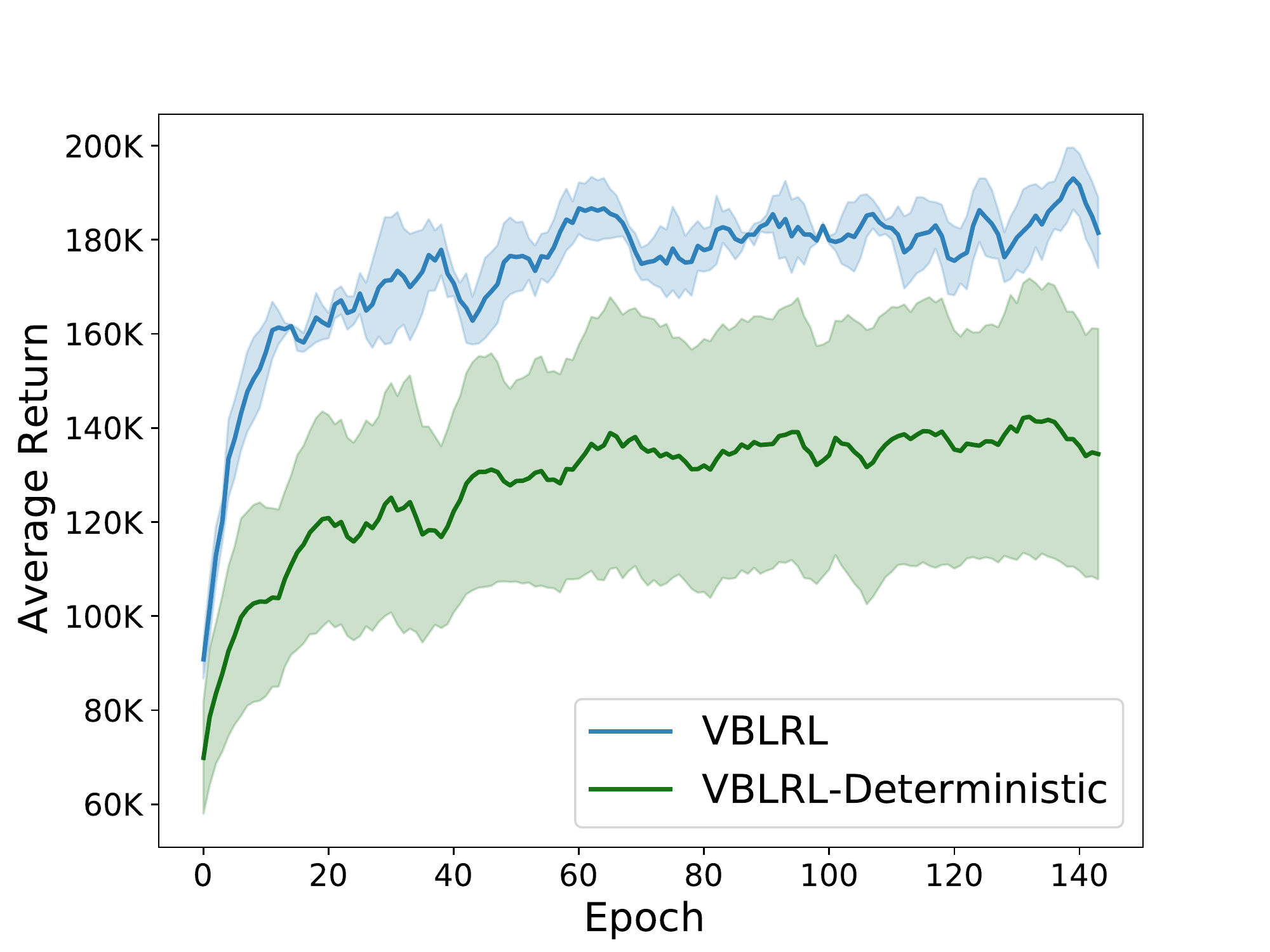}

    \caption{Ablation Study on Reach domain. Left: Backward transfer strategy; Right: Using Bayesian neural networks to model the task-specific posterior.}
    \label{fig:ablation}
\end{wrapfigure}
This section evaluates the essentiality of VBLRL's components. As shown in Figure~\ref{fig:ablation} left, we tested different backward transfer strategy for VBLRL. Task-specific-Backward denotes the performance if directly using the learned task-specific model to do all the predictions without using world model. World-model-Backward denotes the performance if using world-model to do all the predictions without using each task-specific model. Compared to these baselines, VBLRL's backward transfer strategy achieves the best performance by combining these two kinds of models according to their confidence level. We also explored the influence of using a Bayesian neural network to model the task-specific model. As shown in the right side of the figure, using BNN enables faster adaptation to new tasks compared with using regular neural networks (VBLRL-Deterministic). We also include ablation studies on the number of particles in CEM planning in Appendix~\ref{numofpart}.





\section{Conclusion and Discussion}

To improve sample efficiency in lifelong RL, our work proposed a model-based lifelong RL approach that distills shared knowledge from similar MDPs and maintains a Bayesian posterior to approximate the distribution derived from that knowledge. We gave a sample-complexity analysis of the algorithm in the finite MDP setting. Then, we extended our method to use variational inference, which scales better and supports both backward and forward transfer. Our experimental results show that the proposed algorithms enable faster training on new tasks through collecting and transferring the knowledge learned from preceding tasks. 

One of the core questions in lifelong learning setting is how tasks should be related for transfer to be effective, and in this paper we are making the HiP-MDP assumption, that is, a family of tasks is generated by varying a latent task parameter vector drawn for each task according to the world-model distribution. The relationship between different tasks can be modeled in different ways in lifelong RL and this problem setting will affect the algorithm’s performance. However, we do believe our method can help the agent transfer knowledge for tasks that are in other ways related, as we show in the results on MT10 and MT50, which are not typical HiP-MDP settings.

The overall lifelong learning performance of our proposed algorithm is largely limited by the performance of the single-task model-based baseline. We show the advantages of model-based lifelong learning methods over model-free methods when the single-task baselines' performance are close. In practice, it's possible that the model-free lifelong RL methods still outperforms model-based methods simply because the model-free single-task baseline is much stronger. Bayesian Neural Networks are more computationally expensive than regular neural networks. The longest forward running time among all our experiments is hopper which took around 96 hours to finish training and evaluation on all the tasks. This is in general acceptable time cost and can be potentially improved with more recent BNN techniques.

\section*{Acknowledgement}
The authors thank the members of Brown bigAI for discussions and helpful feedback, and the anonymous reviewers for valuable feedback that improved the paper substantially. This research was supported by the NSF under grant \#1955361 and CAREER award \#1844960, and the DARPA Lifelong Learning Machines
program under grant \#FA8750-18-2-0117. The U.S. Government is authorised to reproduce
and distribute reprints for Governmental purposes notwithstanding any copyright notation thereon.
The content is solely the responsibility of the authors and does not necessarily represent the official
views of the NSF or DARPA. This research  was conducted using computational 
resources and services at the Center for Computation and Visualization, Brown University.

\bibliography{example_paper}

\begin{thebibliography}{56}
\providecommand{\natexlab}[1]{#1}
\providecommand{\url}[1]{\texttt{#1}}
\expandafter\ifx\csname urlstyle\endcsname\relax
  \providecommand{\doi}[1]{doi: #1}\else
  \providecommand{\doi}{doi: \begingroup \urlstyle{rm}\Url}\fi

\bibitem[Asmuth \& Littman(2011)Asmuth and Littman]{Asmuth2011LearningIP}
Asmuth, J. and Littman, M.
\newblock Learning is planning: {N}ear {B}ayes-optimal reinforcement learning
  via {M}onte-{C}arlo tree search.
\newblock In \emph{UAI}, 2011.

\bibitem[Asmuth et~al.(2009)Asmuth, Li, Littman, Nouri, and
  Wingate]{DBLP:conf/uai/AsmuthLLNW09}
Asmuth, J., Li, L., Littman, M.~L., Nouri, A., and Wingate, D.
\newblock A {B}ayesian sampling approach to exploration in reinforcement
  learning.
\newblock In \emph{{UAI} 2009, Proceedings of the Twenty-Fifth Conference on
  Uncertainty in Artificial Intelligence, 2009}, pp.\  19--26. {AUAI} Press,
  2009.

\bibitem[Blundell et~al.(2015)Blundell, Cornebise, Kavukcuoglu, and
  Wierstra]{DBLP:journals/corr/BlundellCKW15}
Blundell, C., Cornebise, J., Kavukcuoglu, K., and Wierstra, D.
\newblock Weight uncertainty in neural networks.
\newblock \emph{CoRR}, abs/1505.05424, 2015.

\bibitem[Botev et~al.(2013)Botev, Kroese, Rubinstein, and
  L'Ecuyer]{Botev2013Chapter3}
Botev, Z., Kroese, D.~P., Rubinstein, R., and L'Ecuyer, P.
\newblock Chapter 3 – the cross-entropy method for optimization.
\newblock \emph{Handbook of Statistics}, 31:\penalty0 35--59, 2013.

\bibitem[Bou-Ammar et~al.(2014)Bou-Ammar, Eaton, Ruvolo, and
  Taylor]{BouAmmar2014OnlineML}
Bou-Ammar, H., Eaton, E., Ruvolo, P., and Taylor, M.~E.
\newblock Online multi-task learning for policy gradient methods.
\newblock In \emph{ICML}, 2014.

\bibitem[Brafman \& Tennenholtz(2003)Brafman and Tennenholtz]{Ronen03}
Brafman, R.~I. and Tennenholtz, M.
\newblock R-max - a general polynomial time algorithm for near-optimal
  reinforcement learning.
\newblock 3\penalty0 (null), 2003.
\newblock ISSN 1532-4435.

\bibitem[Brunskill \& Li(2014)Brunskill and Li]{Brunskill2014PACinspiredOD}
Brunskill, E. and Li, L.
\newblock Pac-inspired option discovery in lifelong reinforcement learning.
\newblock In \emph{ICML}, 2014.

\bibitem[Chua et~al.(2018)Chua, Calandra, McAllister, and
  Levine]{DBLP:conf/nips/ChuaCML18}
Chua, K., Calandra, R., McAllister, R., and Levine, S.
\newblock Deep reinforcement learning in a handful of trials using
  probabilistic dynamics models.
\newblock In \emph{Advances in Neural Information Processing Systems 31: Annual
  Conference on Neural Information Processing Systems 2018, NeurIPS 2018}, pp.\
   4759--4770, 2018.

\bibitem[des Combes et~al.(2018)des Combes, Bachman, and van
  Seijen]{DBLP:conf/iclr/CombesBS18}
des Combes, R.~T., Bachman, P., and van Seijen, H.
\newblock Learning invariances for policy generalization.
\newblock In \emph{6th International Conference on Learning Representations,
  {ICLR} 2018}. OpenReview.net, 2018.

\bibitem[Doshi{-}Velez \& Konidaris(2016)Doshi{-}Velez and
  Konidaris]{DBLP:conf/ijcai/Doshi-VelezK16}
Doshi{-}Velez, F. and Konidaris, G.~D.
\newblock Hidden parameter markov decision processes: {A} semiparametric
  regression approach for discovering latent task parametrizations.
\newblock In \emph{Proceedings of the Twenty-Fifth International Joint
  Conference on Artificial Intelligence, {IJCAI} 2016}, pp.\  1432--1440.
  {IJCAI/AAAI} Press, 2016.

\bibitem[Duan et~al.(2016)Duan, Chen, Houthooft, Schulman, and
  Abbeel]{Duan2016BenchmarkingDR}
Duan, Y., Chen, X., Houthooft, R., Schulman, J., and Abbeel, P.
\newblock Benchmarking deep reinforcement learning for continuous control.
\newblock In \emph{ICML}, 2016.

\bibitem[Finn et~al.(2017)Finn, Abbeel, and Levine]{Finn2017ModelAgnosticMF}
Finn, C., Abbeel, P., and Levine, S.
\newblock Model-agnostic meta-learning for fast adaptation of deep networks.
\newblock In \emph{ICML}, 2017.

\bibitem[Finn et~al.(2018)Finn, Xu, and Levine]{DBLP:conf/nips/FinnXL18}
Finn, C., Xu, K., and Levine, S.
\newblock Probabilistic model-agnostic meta-learning.
\newblock In \emph{Advances in Neural Information Processing Systems 31: Annual
  Conference on Neural Information Processing Systems 2018, NeurIPS 2018}, pp.\
   9537--9548, 2018.

\bibitem[Fu et~al.(2021)Fu, Tang, Hao, Chen, Feng, Li, and
  Liu]{DBLP:conf/aaai/FuTHCFLL21}
Fu, H., Tang, H., Hao, J., Chen, C., Feng, X., Li, D., and Liu, W.
\newblock Towards effective context for meta-reinforcement learning: an
  approach based on contrastive learning.
\newblock In \emph{Thirty-Fifth {AAAI} Conference on Artificial Intelligence,
  {AAAI} 2021}, pp.\  7457--7465. {AAAI} Press, 2021.

\bibitem[Ghavamzadeh et~al.(2015)Ghavamzadeh, Mannor, Pineau, and
  Tamar]{Ghavamzadeh_2015}
Ghavamzadeh, M., Mannor, S., Pineau, J., and Tamar, A.
\newblock Convex optimization: Algorithms and complexity.
\newblock \emph{Foundations and Trends® in Machine Learning}, 8\penalty0
  (5-6):\penalty0 359–483, 2015.
\newblock ISSN 1935-8245.

\bibitem[Grant et~al.(2018)Grant, Finn, Levine, Darrell, and
  Griffiths]{DBLP:conf/iclr/GrantFLDG18}
Grant, E., Finn, C., Levine, S., Darrell, T., and Griffiths, T.~L.
\newblock Recasting gradient-based meta-learning as hierarchical bayes.
\newblock In \emph{6th International Conference on Learning Representations,
  {ICLR} 2018}. OpenReview.net, 2018.

\bibitem[Graves(2011)]{DBLP:conf/nips/Graves11}
Graves, A.
\newblock Practical variational inference for neural networks.
\newblock In \emph{Advances in Neural Information Processing Systems 24: 25th
  Annual Conference on Neural Information Processing Systems 2011}, pp.\
  2348--2356, 2011.

\bibitem[Guez et~al.(2012)Guez, Silver, and Dayan]{Guez2012EfficientBR}
Guez, A., Silver, D., and Dayan, P.
\newblock Efficient bayes-adaptive reinforcement learning using sample-based
  search.
\newblock In \emph{NIPS}, 2012.

\bibitem[Haarnoja et~al.(2018)Haarnoja, Zhou, Abbeel, and
  Levine]{DBLP:conf/icml/HaarnojaZAL18}
Haarnoja, T., Zhou, A., Abbeel, P., and Levine, S.
\newblock Soft actor-critic: Off-policy maximum entropy deep reinforcement
  learning with a stochastic actor.
\newblock In \emph{Proceedings of the 35th International Conference on Machine
  Learning, {ICML} 2018}, pp.\  1856--1865. {PMLR}, 2018.

\bibitem[Hafner et~al.(2020)Hafner, Lillicrap, Ba, and
  Norouzi]{Hafner2020DreamTC}
Hafner, D., Lillicrap, T.~P., Ba, J., and Norouzi, M.
\newblock Dream to control: Learning behaviors by latent imagination.
\newblock \emph{ArXiv}, abs/1912.01603, 2020.

\bibitem[Hinton \& van Camp(1993{\natexlab{a}})Hinton and van
  Camp]{DBLP:conf/colt/HintonC93}
Hinton, G.~E. and van Camp, D.
\newblock Keeping the neural networks simple by minimizing the description
  length of the weights.
\newblock In \emph{Proceedings of the Sixth Annual {ACM} Conference on
  Computational Learning Theory, {COLT} 1993}, pp.\  5--13. {ACM},
  1993{\natexlab{a}}.

\bibitem[Hinton \& van Camp(1993{\natexlab{b}})Hinton and van
  Camp]{Hinton1993KeepingTN}
Hinton, G.~E. and van Camp, D.
\newblock Keeping the neural networks simple by minimizing the description
  length of the weights.
\newblock In \emph{COLT '93}, 1993{\natexlab{b}}.

\bibitem[Houthooft et~al.(2016)Houthooft, Chen, Duan, Schulman, Turck, and
  Abbeel]{DBLP:conf/nips/HouthooftCCDSTA16}
Houthooft, R., Chen, X., Duan, Y., Schulman, J., Turck, F.~D., and Abbeel, P.
\newblock {VIME:} variational information maximizing exploration.
\newblock In \emph{Advances in Neural Information Processing Systems 29: Annual
  Conference on Neural Information Processing Systems 2016}, pp.\  1109--1117,
  2016.

\bibitem[Isele et~al.(2016{\natexlab{a}})Isele, Rostami, and
  Eaton]{DBLP:conf/ijcai/IseleRE16}
Isele, D., Rostami, M., and Eaton, E.
\newblock Using task features for zero-shot knowledge transfer in lifelong
  learning.
\newblock In \emph{Proceedings of the Twenty-Fifth International Joint
  Conference on Artificial Intelligence, {IJCAI} 2016}, pp.\  1620--1626.
  {IJCAI/AAAI} Press, 2016{\natexlab{a}}.

\bibitem[Isele et~al.(2016{\natexlab{b}})Isele, Rostami, and
  Eaton]{Isele2016UsingTF}
Isele, D., Rostami, M., and Eaton, E.
\newblock Using task features for zero-shot knowledge transfer in lifelong
  learning.
\newblock In \emph{IJCAI}, 2016{\natexlab{b}}.

\bibitem[Kaelbling et~al.(1996)Kaelbling, Littman, and
  Moore]{Kaelbling1996ReinforcementLA}
Kaelbling, L.~P., Littman, M.~L., and Moore, A.~W.
\newblock Reinforcement learning: A survey.
\newblock \emph{J. Artif. Intell. Res.}, 4:\penalty0 237--285, 1996.

\bibitem[Killian et~al.(2017)Killian, Daulton, Doshi{-}Velez, and
  Konidaris]{DBLP:conf/nips/KillianDDK17}
Killian, T.~W., Daulton, S., Doshi{-}Velez, F., and Konidaris, G.~D.
\newblock Robust and efficient transfer learning with hidden parameter markov
  decision processes.
\newblock In \emph{Advances in Neural Information Processing Systems 30: Annual
  Conference on Neural Information Processing Systems 2017}, pp.\  6250--6261,
  2017.

\bibitem[Kirkpatrick et~al.(2016)Kirkpatrick, Pascanu, Rabinowitz, Veness,
  Desjardins, Rusu, Milan, Quan, Ramalho, Grabska{-}Barwinska, Hassabis,
  Clopath, Kumaran, and Hadsell]{DBLP:journals/corr/KirkpatrickPRVD16}
Kirkpatrick, J., Pascanu, R., Rabinowitz, N.~C., Veness, J., Desjardins, G.,
  Rusu, A.~A., Milan, K., Quan, J., Ramalho, T., Grabska{-}Barwinska, A.,
  Hassabis, D., Clopath, C., Kumaran, D., and Hadsell, R.
\newblock Overcoming catastrophic forgetting in neural networks.
\newblock \emph{CoRR}, abs/1612.00796, 2016.

\bibitem[Li \& Hoiem(2017)Li and Hoiem]{li2017learning}
Li, Z. and Hoiem, D.
\newblock Learning without forgetting, 2017.

\bibitem[Lopez{-}Paz \& Ranzato(2017)Lopez{-}Paz and
  Ranzato]{DBLP:conf/nips/Lopez-PazR17}
Lopez{-}Paz, D. and Ranzato, M.
\newblock Gradient episodic memory for continual learning.
\newblock In \emph{Advances in Neural Information Processing Systems 30: Annual
  Conference on Neural Information Processing Systems 2017}, pp.\  6467--6476,
  2017.

\bibitem[Mendez et~al.(2020)Mendez, Wang, and Eaton]{Mendez2020LifelongPG}
Mendez, J. A.~M., Wang, B., and Eaton, E.
\newblock Lifelong policy gradient learning of factored policies for faster
  training without forgetting.
\newblock \emph{ArXiv}, abs/2007.07011, 2020.

\bibitem[Nagabandi et~al.(2019)Nagabandi, Finn, and
  Levine]{Nagabandi2019DeepOL}
Nagabandi, A., Finn, C., and Levine, S.
\newblock Deep online learning via meta-learning: Continual adaptation for
  model-based rl.
\newblock \emph{ArXiv}, abs/1812.07671, 2019.

\bibitem[Nguyen et~al.(2018)Nguyen, Li, Bui, and Turner]{nguyen2018variational}
Nguyen, C.~V., Li, Y., Bui, T.~D., and Turner, R.~E.
\newblock Variational continual learning, 2018.

\bibitem[Osband et~al.(2016)Osband, Blundell, Pritzel, and
  Roy]{Osband2016DeepEV}
Osband, I., Blundell, C., Pritzel, A., and Roy, B.~V.
\newblock Deep exploration via bootstrapped dqn.
\newblock In \emph{NIPS}, 2016.

\bibitem[Parisotto et~al.(2016)Parisotto, Ba, and
  Salakhutdinov]{DBLP:journals/corr/ParisottoBS15}
Parisotto, E., Ba, L.~J., and Salakhutdinov, R.
\newblock Actor-mimic: Deep multitask and transfer reinforcement learning.
\newblock In \emph{4th International Conference on Learning Representations,
  {ICLR} 2016}, 2016.

\bibitem[Puterman(1994)]{Puterman94}
Puterman, M.~L.
\newblock \emph{Markov Decision Processes---Discrete Stochastic Dynamic
  Programming}.
\newblock John Wiley \& Sons, Inc., 1994.

\bibitem[Rakelly et~al.(2019)Rakelly, Zhou, Finn, Levine, and
  Quillen]{DBLP:conf/icml/RakellyZFLQ19}
Rakelly, K., Zhou, A., Finn, C., Levine, S., and Quillen, D.
\newblock Efficient off-policy meta-reinforcement learning via probabilistic
  context variables.
\newblock In \emph{Proceedings of the 36th International Conference on Machine
  Learning, {ICML} 2019}, volume~97 of \emph{Proceedings of Machine Learning
  Research}, pp.\  5331--5340. {PMLR}, 2019.

\bibitem[Silver et~al.(2017)Silver, Schrittwieser, Simonyan, Antonoglou, Huang,
  Guez, Hubert, baker, Lai, Bolton, Chen, Lillicrap, Hui, Sifre, Driessche,
  Graepel, and Hassabis]{Silver2017MasteringTG}
Silver, D., Schrittwieser, J., Simonyan, K., Antonoglou, I., Huang, A., Guez,
  A., Hubert, T., baker, L., Lai, M., Bolton, A., Chen, Y., Lillicrap, T., Hui,
  F., Sifre, L., Driessche, G. V.~D., Graepel, T., and Hassabis, D.
\newblock Mastering the game of go without human knowledge.
\newblock \emph{Nature}, 550:\penalty0 354--359, 2017.

\bibitem[Strehl et~al.(2006)Strehl, Li, and Littman]{Strehl2006IncrementalML}
Strehl, A., Li, L., and Littman, M.
\newblock Incremental model-based learners with formal learning-time
  guarantees.
\newblock \emph{ArXiv}, abs/1206.6870, 2006.

\bibitem[Strehl et~al.(2009)Strehl, Li, and Littman]{Strehl2009ReinforcementLI}
Strehl, A.~L., Li, L., and Littman, M.~L.
\newblock Reinforcement learning in finite mdps: Pac analysis.
\newblock \emph{J. Mach. Learn. Res.}, 10:\penalty0 2413--2444, 2009.

\bibitem[Strens(2000)]{strens00}
Strens, M. J.~A.
\newblock A {B}ayesian framework for reinforcement learning.
\newblock In \emph{Proceedings of the Seventeenth International Conference on
  Machine Learning (ICML 2000)}, pp.\  943--950, 2000.

\bibitem[Sutton \& Barto(1998)Sutton and Barto]{sutton98}
Sutton, R.~S. and Barto, A.~G.
\newblock \emph{Reinforcement Learning: {A}n Introduction}.
\newblock The MIT Press, 1998.

\bibitem[Teh et~al.(2017)Teh, Bapst, Czarnecki, Quan, Kirkpatrick, Hadsell,
  Heess, and Pascanu]{DBLP:conf/nips/TehBCQKHHP17}
Teh, Y.~W., Bapst, V., Czarnecki, W.~M., Quan, J., Kirkpatrick, J., Hadsell,
  R., Heess, N., and Pascanu, R.
\newblock Distral: Robust multitask reinforcement learning.
\newblock In \emph{Advances in Neural Information Processing Systems 30: Annual
  Conference on Neural Information Processing Systems 2017}, pp.\  4496--4506,
  2017.

\bibitem[Thompson(1933)]{Thompson1933ONTL}
Thompson, W.~R.
\newblock On the likelihood that one unknown probability exceeds another in
  view of the evidence of two samples.
\newblock \emph{Biometrika}, 25:\penalty0 285--294, 1933.

\bibitem[Todorov et~al.(2012)Todorov, Erez, and
  Tassa]{DBLP:conf/iros/TodorovET12}
Todorov, E., Erez, T., and Tassa, Y.
\newblock Mujoco: {A} physics engine for model-based control.
\newblock In \emph{2012 {IEEE/RSJ} International Conference on Intelligent
  Robots and Systems, {IROS} 2012}, pp.\  5026--5033. {IEEE}, 2012.

\bibitem[van Hasselt et~al.(2016)van Hasselt, Guez, and
  Silver]{DBLP:conf/aaai/HasseltGS16}
van Hasselt, H., Guez, A., and Silver, D.
\newblock Deep reinforcement learning with double q-learning.
\newblock In \emph{Proceedings of the Thirtieth {AAAI} Conference on Artificial
  Intelligence,2016}, pp.\  2094--2100. {AAAI} Press, 2016.

\bibitem[Wang et~al.(2016)Wang, Kurth{-}Nelson, Tirumala, Soyer, Leibo, Munos,
  Blundell, Kumaran, and Botvinick]{DBLP:journals/corr/WangKTSLMBKB16}
Wang, J.~X., Kurth{-}Nelson, Z., Tirumala, D., Soyer, H., Leibo, J.~Z., Munos,
  R., Blundell, C., Kumaran, D., and Botvinick, M.
\newblock Learning to reinforcement learn.
\newblock \emph{CoRR}, abs/1611.05763, 2016.

\bibitem[Wang et~al.(2019)Wang, Bao, Clavera, Hoang, Wen, Langlois, Zhang,
  Zhang, Abbeel, and Ba]{DBLP:journals/corr/abs-1907-02057}
Wang, T., Bao, X., Clavera, I., Hoang, J., Wen, Y., Langlois, E., Zhang, S.,
  Zhang, G., Abbeel, P., and Ba, J.
\newblock Benchmarking model-based reinforcement learning.
\newblock \emph{CoRR}, abs/1907.02057, 2019.

\bibitem[Wilson et~al.(2007{\natexlab{a}})Wilson, Fern, Ray, and
  Tadepalli]{DBLP:conf/icml/WilsonFRT07}
Wilson, A., Fern, A., Ray, S., and Tadepalli, P.
\newblock Multi-task reinforcement learning: {A} hierarchical {B}ayesian
  approach.
\newblock In \emph{Machine Learning, Proceedings of the Twenty-Fourth
  International Conference {(ICML} 2007), 2007}, volume 227 of \emph{{ACM}
  International Conference Proceeding Series}, pp.\  1015--1022. {ACM},
  2007{\natexlab{a}}.

\bibitem[Wilson et~al.(2007{\natexlab{b}})Wilson, Fern, Ray, and
  Tadepalli]{Wilson2007MultitaskRL}
Wilson, A., Fern, A., Ray, S., and Tadepalli, P.
\newblock Multi-task reinforcement learning: a hierarchical bayesian approach.
\newblock In \emph{ICML '07}, 2007{\natexlab{b}}.

\bibitem[Yoon et~al.(2018)Yoon, Kim, Dia, Kim, Bengio, and
  Ahn]{DBLP:conf/nips/YoonKDKBA18}
Yoon, J., Kim, T., Dia, O., Kim, S., Bengio, Y., and Ahn, S.
\newblock Bayesian model-agnostic meta-learning.
\newblock In \emph{Advances in Neural Information Processing Systems 31: Annual
  Conference on Neural Information Processing Systems 2018, NeurIPS 2018}, pp.\
   7343--7353, 2018.

\bibitem[Yu et~al.(2019)Yu, Quillen, He, Julian, Hausman, Finn, and
  Levine]{Yu2019MetaWorldAB}
Yu, T., Quillen, D., He, Z., Julian, R.~C., Hausman, K., Finn, C., and Levine,
  S.
\newblock Meta-world: A benchmark and evaluation for multi-task and meta
  reinforcement learning.
\newblock \emph{ArXiv}, abs/1910.10897, 2019.

\bibitem[Zenke et~al.(2017)Zenke, Poole, and Ganguli]{zenke2017continual}
Zenke, F., Poole, B., and Ganguli, S.
\newblock Continual learning through synaptic intelligence, 2017.

\bibitem[Zhang et~al.(2021)Zhang, Wang, Hu, Chen, Chen, Fan, and
  Zhang]{Zhang2021MetaCUREMR}
Zhang, J., Wang, J., Hu, H., Chen, T., Chen, Y., Fan, C., and Zhang, C.
\newblock Metacure: Meta reinforcement learning with empowerment-driven
  exploration.
\newblock In \emph{ICML}, 2021.

\bibitem[Zhang(2006)]{Zhang2006FromT}
Zhang, T.
\newblock From $\epsilon$-entropy to kl-entropy: Analysis of minimum
  information complexity density estimation.
\newblock \emph{Annals of Statistics}, 34:\penalty0 2180--2210, 2006.

\bibitem[Zintgraf et~al.(2020)Zintgraf, Shiarlis, Igl, Schulze, Gal, Hofmann,
  and Whiteson]{DBLP:conf/iclr/ZintgrafSISGHW20}
Zintgraf, L.~M., Shiarlis, K., Igl, M., Schulze, S., Gal, Y., Hofmann, K., and
  Whiteson, S.
\newblock Varibad: {A} very good method for bayes-adaptive deep {RL} via
  meta-learning.
\newblock In \emph{8th International Conference on Learning Representations,
  {ICLR} 2020}. OpenReview.net, 2020.

\end{thebibliography}
\bibliographystyle{icml2022}

\section*{Checklist}

\begin{enumerate}

\item For all authors...
\begin{enumerate}
  \item Do the main claims made in the abstract and introduction accurately reflect the paper's contributions and scope?
    \answerYes{}
  \item Did you describe the limitations of your work?
    \answerYes{}
  \item Did you discuss any potential negative societal impacts of your work?
    \answerNA{}
  \item Have you read the ethics review guidelines and ensured that your paper conforms to them?
    \answerYes{}
\end{enumerate}

\item If you are including theoretical results...
\begin{enumerate}
  \item Did you state the full set of assumptions of all theoretical results?
    \answerYes{}
        \item Did you include complete proofs of all theoretical results?
    \answerYes{}
\end{enumerate}

\item If you ran experiments...
\begin{enumerate}
  \item Did you include the code, data, and instructions needed to reproduce the main experimental results (either in the supplemental material or as a URL)?
    \answerYes{}
  \item Did you specify all the training details (e.g., data splits, hyperparameters, how they were chosen)?
    \answerYes{}
        \item Did you report error bars (e.g., with respect to the random seed after running experiments multiple times)?
    \answerYes{}
        \item Did you include the total amount of compute and the type of resources used (e.g., type of GPUs, internal cluster, or cloud provider)?
    \answerYes{}
\end{enumerate}

\item If you are using existing assets (e.g., code, data, models) or curating/releasing new assets...
\begin{enumerate}
  \item If your work uses existing assets, did you cite the creators?
    \answerNA{}
  \item Did you mention the license of the assets?
    \answerNA{}
  \item Did you include any new assets either in the supplemental material or as a URL?
    \answerNA{}
  \item Did you discuss whether and how consent was obtained from people whose data you're using/curating?
    \answerNA{}
  \item Did you discuss whether the data you are using/curating contains personally identifiable information or offensive content?
    \answerNA{}
\end{enumerate}

\item If you used crowdsourcing or conducted research with human subjects...
\begin{enumerate}
  \item Did you include the full text of instructions given to participants and screenshots, if applicable?
    \answerNA{}
  \item Did you describe any potential participant risks, with links to Institutional Review Board (IRB) approvals, if applicable?
    \answerNA{}
  \item Did you include the estimated hourly wage paid to participants and the total amount spent on participant compensation?
    \answerNA{}
\end{enumerate}

\end{enumerate}

\newpage
\appendix
\onecolumn

\section{VBLRL algorithm}
\begin{algorithm}[htbp]
\begin{algorithmic}

\STATE {\bfseries Input:} Initialize general knowledge(world) model $p_{wm}(\cdot|s,a; \omega_{wm})$, planning horizon $T$

\FOR{each task $m_{i}$ from $i = 1,2,3,\cdots, M$}
\STATE Initialize task-specific model $p_{m_{i}}(\cdot|s,a; \omega_{i})$ with parameters of general knowledge model $p_{wm}$

\FOR{each episode}
    \FOR{Time $t = 0$ to TaskHorizon}
    \STATE Sample Actions $a_{t:t+T}\sim$ CEM($\cdot$)
    
    \STATE Propagate state particles $s^{p}_{\tau}$ with $p_{m_{i}}(s'|s,a; \omega_{i})$
    
    \STATE Evaluate actions as $\sum_{\tau=t}^{t+T}\frac{1}{P}\sum_{p=1}^{P}p_{m_{i}}(r|s,a; \omega_{i})$
    
    \STATE Update CEM($\cdot$) distribution.
    
    \STATE Execute optimal actions $a^{*}_{t:t+T}$
    \ENDFOR
    
    \STATE Add transitions to replay buffer $D_{m_{i}}$
    
    
    \STATE Update task-specific model according to Equation~(\ref{equ44}) given replay buffer $D_{m_{i}}$
    
    \STATE Update general knowledge model according to Equation~(\ref{equ44}) given replay buffers $\{D_{m_{1}}, \cdots, D_{m_{i}}\}$
    
    \ENDFOR

\ENDFOR
 \end{algorithmic}
 \caption{Variational Bayesian Lifelong RL} \label{variational2}
\end{algorithm}
Note that in $p_{m_{i}}(\cdot|s,a; \omega_{i})$ and $p_{m_{wm}}(\cdot|s,a; \omega_{wm})$, $p$ stands for the task-specific/world probabilistic model we are using. In $s^{p}_{\tau}$ and $\sum_{\tau=t}^{t+T}\frac{1}{P}\sum_{p=1}^{P}r^{p}_{\tau}$, $p$ denotes one of the state particles $p\in \{1, \cdots, P\}$.

At the beginning of training (before encountering any tasks), the agent first randomly initialize the weights and bias of the world-model BNN $p_{wm}(\cdot|s,a; \omega_{wm})$. Then each time when the agent encounters a new task, the task-specific model $p_{m_{i}}(\cdot|s,a; \omega_{i})$ for that task will be initialized by copying network parameters from the world-model BNN. Then for planning, at each step we begin by creating $P$ particles from the current state $s^{p}_{\tau=t}=s_{t}\forall p$. Then, we sample $N$ candidate action sequences $a_{t:t+T}$ from a learnable distribution. These two steps are the same as PETS~\citep{DBLP:conf/nips/ChuaCML18}. Then we propagate the state--action pairs using the learned task-specific model $p_{m_{i}}(\cdot|s,a)$ (BNN) and use the cross entropy method~\citep{Botev2013Chapter3} to update the sampling distribution to make the sampled action sequences close to previous action sequences that achieved high reward. We further calculate the cumulative reward estimated (via the learned model) for previously sampled sequences and select the current action based on the mean of that distribution. Then we can add the new transitions to the replay buffer. We update the task specific model according to Equation~(\ref{equ44}) by sampling from the replay buffer of the current task, and update the world model with samples from all previous tasks' replay buffers.
\begin{algorithm}[htbp]
\begin{algorithmic}
\STATE {\bfseries Input:} Test task $m_{i}$, planning horizon $T$, task-specific model $p_{m_{i}}(s',r|s,a; \omega_{i})$, general-knowledge model $p_{wm}(s',r|s,a;\omega_{wm})$ 
\FOR{Time $t = 0$ to TaskHorizon}
\FOR{Trial $k = 1$ to $K$}
    \STATE Sample Actions $a_{t:t+T}\sim$ CEM($\cdot$)
    
    \FOR{each action}
    \STATE Propagate state particles $s^{p}_{\tau}$ with $p_{m_{i}}(s',r|s,a)$
    \STATE Propagate state particles $s^{p}_{\tau}$ with $p_{wm}(s',r|s,a)$
    \STATE Compute confidence level $c_{m_{i}}$ for task-specific model and $c_{wm}$ for general-knowledge model (\textbf{Definition 4.5})
    \STATE Choose the propagation results from the model with higher confidence level $c$
    \ENDFOR
    \STATE Evaluate actions as $\sum_{\tau=t}^{t+T}\frac{1}{P}\sum_{p=1}^{P}r^{p}_{\tau}$
    
    \STATE Update CEM($\cdot$) distribution.

\ENDFOR
    \STATE Execute optimal actions $a^{*}_{t:t+T}$
\ENDFOR
\end{algorithmic}
 \caption{Variational Bayesian Lifelong RL (Backward transfer)} \label{variationalb}
\end{algorithm}

For backward transfer, given a previously encountered task $m_{i}$, at each planning step we predict the next state and reward with both task-specific model and world model. Then we compare the confidence level of these two predictions and choose to use the prediction results that have higher confidence level. The other planning procedures are the same as in forward training.

\section{BNN model}
\label{bnndetail}
The form of
the BNN we used is the same as in VIME~\citep{DBLP:conf/nips/HouthooftCCDSTA16}. We model the transition models as Gaussian distributions:
\begin{equation}
T(\cdot|s,a) = \mathcal{N}(f^{\mu}_{\omega}(s,a),f^{\sigma}_{\omega}(s,a))
\end{equation}
The function $f_{\theta}$ is represented as a Bayesian neural network parameterized by $\theta$, which is further modeled as the posterior distribution parameterized by $\phi$, predicts the mean $\mu_{s},\mu_{r}$ and variance $\sigma_{s}, \sigma_{r}$ given current state and action $s,a$. We can view the BNN model in VBLRL as an infinite neural network ensemble by integrating out its parameters:
\begin{equation}
T(s', r|s,a) = \int_{\Omega}T(s', r|s,a; \omega)q(\omega;\phi)d\omega
\end{equation}
Compared to previous model-based algorithms that use finite number of neural network ensembles (e.g. PETS), our choice of BNN is more suitable for lifelong RL as we only need to maintain one neural network for each task, and we can sample an unlimited number of predictions from it which better estimates the uncertainty and is essential in our setting where both dynamic function and reward function are not given unlike prior model-based RL methods.

\section{BLRL algorithm}
\label{blrlalg}
Note that the single-task baseline that BLRL is built upon is BOSS,
and {\bf could be replaced by other Bayesian-exploration RL algorithm}. We use a hierarchical Bayesian model to represent the distribution over MDPs. Figure~\ref{fig:plate} shows our generative model in $plate~notation$. $\Psi$ is the parameter set that represents distribution $P_{\Omega}$. It functions as the world-model posterior that aims to capture the common structure across different tasks. 
The resulting MDP $m_{i}$ is created based on $\omega_{i}$, which is one hidden parameter sampled from $\Psi$. We can sample from our approximation of $\Psi$ to create and solve possible MDPs.

\begin{figure}[htbp]
    \includegraphics[width=0.3\linewidth]{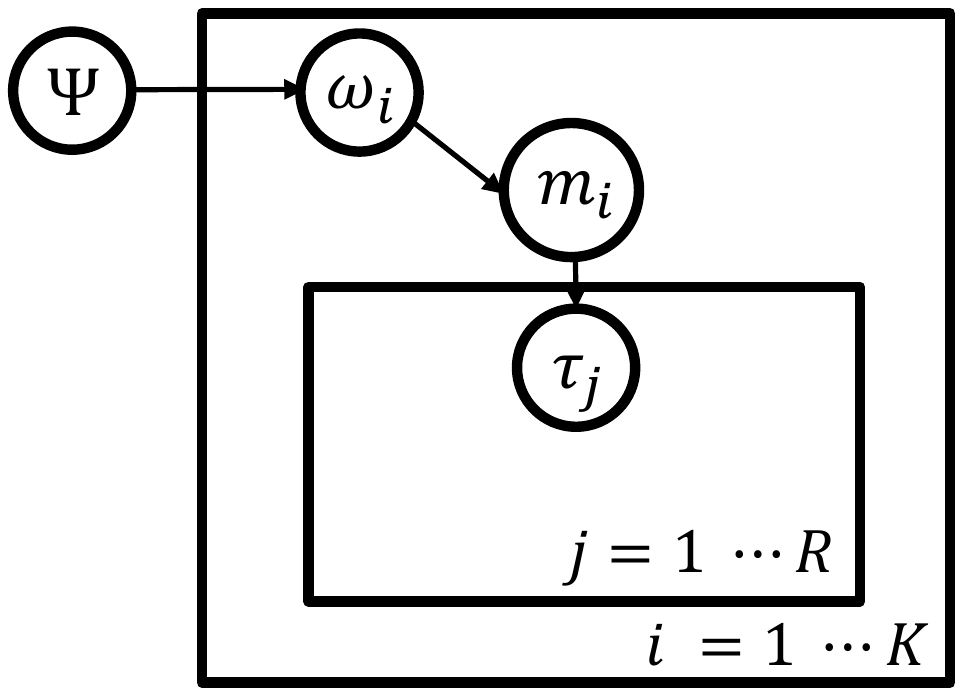}
    \centering
    \caption{Plate representation for the BLRL approach. $\tau_{j}$ denotes trajectory $\{s, a, r, s'\}_{j}$. There are $K$ different tasks and the agent samples $R$ trajectories from each task.} 
    \label{fig:plate}
\end{figure}

The full algorithm is shown in Algorithm~\ref{alg}.
Each time the agent encounters a new task $m_{i}$, it first initializes the task-specific posterior $p_{m_{i}}(\cdot|s_{t},a_{t})$ with the parameter values from the current world-model posterior $p_{wm}$, and then, for each timestep, 
selects actions following sampling-based Bayesian exploration procedures from this posterior~\citep{Thompson1933ONTL, DBLP:conf/uai/AsmuthLLNW09}. A set of sampled MDPs drawn from $p_{m_{i}}$ is a concrete representation of the uncertainty within the current task.

Concretely, BLRL samples $K$ models from the task-specific posterior whenever the number of transitions from a state--action pair has reached threshold $B$. Analogously to RMAX~\citep{Ronen03}, we call a state--action pair {\bf known} whenever it has been observed $N_{s_{t}, a_{t}} = B$ times. For each state--action pair, if it is {\bf known}, we use the task-specific posterior to sample the model. If it is {\bf unknown}, we instead sample from the world-model posterior. These models are combined into a merged MDP $m^{\#}_{i}$ and BLRL solves $m^{\#}_{i}$ with value iteration to get a policy $\pi_{m^{\#}_{i}}^{*}$. Intuitively, this approach creates optimism in the face of uncertainty as the agent can choose actions based on the highest performing transition of the $K$ models sampled, which drives exploration.
The new policy $\pi_{m^{\#}_{i}}^{*}$ will be used to interact with the environment until a new state--action pair reaches the sampling threshold. The collected transitions from the current task will be used to update the task-specific posterior immediately, while the world-model posterior will be updated using transitions from \textbf{all} the previous tasks at a slower pace. For simple finite MDP problems in practice, we use the Dirichlet distribution (the conjugate for the multinomial) to represent the Bayesian posterior. Thus, the updating process for the posterior is straightforward to compute. Intuitively, BLRL  rapidly adapts to new tasks as long as the prior of the task-specific model (that is, the world-model posterior) is close to the true underlying model and captures the uncertainty of the common structure of a set of tasks. Empirical evaluations of BLRL on gridworlds are given in the appendix.

\begin{algorithm}[htbp]
\begin{algorithmic}

 \STATE {\bfseries Input:} $K$, $B$
 \STATE initialize MDP set, the world-model posterior $p_{wm}(s_{t+1}, r_{t}|s_{t}, a_{t})$\;
 \FOR{each MDP $m_{i}$}
 \STATE $N_{s,a} \leftarrow 0, \forall s,a$ \\
   \STATE $do\_sample \leftarrow$ TRUE \;
   \STATE initialize the task-specific posterior $p_{m_{i}}(s_{t+1},r_{t} | s_t, a_t) \leftarrow p_{wm}(s_{t+1}, r_{t}|s_{t}, a_{t})$\;
   
   \FOR{\textbf{all} timesteps $t = 1, 2, 3,...$}
     \IF{$do\_sample$}
      \STATE Sample $K$ models $m_{i_{1}}$, $m_{i_{2}}$,···,$m_{i_{K}}$ from the task-specific posterior $p_{m_{i}}(s_{t+1},r_{t} | s_t, a_t)$.\;
      \STATE Merge the models into the mixed MDP $m^{\#}_{i}$\;
      \STATE Solve $m^{\#}_{i}$ to obtain $\pi^{*}_{m^{\#}_{i}}$ \;
      \STATE $do\_sample \leftarrow$ FALSE
    \ENDIF
    \STATE Use $\pi^{*}_{m^{\#}_{i}}$ for action selection: $a_{t} \leftarrow  \pi_{m^{\#}_{i}}(s_{t})$ and observe reward $r_{t}$ and next state $s_{t+1}$ \;
    \STATE $N_{s_{t},a_{t}} \leftarrow N_{s_{t},a_{t}} + 1$\;
    \STATE Update the task-specific posterior distribution  $p_{m_{i}}(s_{t+1}, r_{t}|s_{t}, a_{t})$ for the current MDP\;
    
    \IF{$N_{s_{t},a_{t}} = B$} 
    \STATE Update the world-model posterior distribution $p_{wm}(s_{t+1}, r_{t}|s_{t}, a_{t})$\ with the collected transitions\;
    
    \STATE $do\_sample \leftarrow$ TRUE
    \ENDIF

   \ENDFOR
  
  \ENDFOR
 \end{algorithmic}
 \caption{Lifelong Bayesian Sampling Approach Algorithm} \label{alg}
\end{algorithm}

\section{Experimental Setting}
\subsection{OpenAI Gym Mujoco Domains}

         

Similar to~\citep{Mendez2020LifelongPG}, we evaluated on the HalfCheetah, Hopper, and Walker-2D environments. For the gravity domain, we select a random gravity value between $0.5g$ and $1.5g$ for each task. For the body-parts domain, we set the size and mass of each of the four parts of the body (head, torso, thigh, and leg) to a random value between $0.5\times$ and $1.5\times$ its nominal value. As shown in Appendix~C of~\citep{Mendez2020LifelongPG}, these changes lead to highly diverse tasks for lifelong RL. Further, as required by CEM-based deep RL methods~\citep{DBLP:journals/corr/abs-1907-02057}, we added a check-done function for Hopper and Walker following the settings in previous paper.


When implementing VBLRL, we found that in the first few episodes of each new tasks, the agent hasn't collected enough samples of the new task, which results in overfitting problems when training the task-specific. Thus, we use the world-model posterior instead to do the first few rounds of predictions and let the task-specific model begin training after collecting enough samples. The world-model has lower possibility of overfitting as its training data comes from all the previous tasks and has much larger quantity. The results shown in the experiments section are collected after the task-specific model starts collecting samples. We list the other implementation details below. The planning horizons are selected from values suggested by previous model-based RL papers~\citep{DBLP:conf/nips/ChuaCML18,DBLP:journals/corr/abs-1907-02057}. We find that in Hopper and Walker, using regular neural networks instead of Bayesian neural networks to model the {\bf task-specific posterior} also works fine (the world model still uses BNN).

\begin{table}[htb]
\scriptsize
\centering
\begin{tabular}{c|c|c|c|c|c|c|c|c}
\toprule
\centering
  Hyper-parameters & \textbf{CG} & \textbf{CB} & \textbf{HG} & \textbf{HB} & \textbf{WG} & \textbf{WB} & \textbf{Reach} & \textbf{Reach-Wall}\\\midrule 
 \# iterations & $100$ & $100$ & $100$ & $100$ & $100$ & $100$ & $150$ & $150$\\
 \# Steps (each iteration) & $100$ & $100$ & $400$ & $400$ & $400$ & $400$ & $150$ & $150$\\
 learning rate (world model) & $0.001$ & $0.001$ & $0.0006$ & $0.0006$ & $0.0006$ & $0.0006$ & $0.001$ & $0.001$\\
 learning rate (task-specific model) & $0.0005$ & $0.0005$ & $0.0006$ & $0.0006$ & $0.0006$ & $0.0006$ & $0.001$ & $0.001$\\
 planning horizon & $20$ & $20$ & $30$ & $30$ & $30$ & $30$ & $1$ & $1$\\
 kl-divergence weight & $0.0001$ & $0.0001$ & $0.0001$ & $0.0001$ & $0.0001$ & $0.0001$ & $0.0001$ & $0.0001$\\
 \# particles (CEM) & $50$ & $50$ & $\{1,20\}$ & $\{1,20\}$ & $\{1,20\}$ & $\{1,20\}$ & $\{1,20\}$ & $\{1,20\}$\\
 batch size (world-model) & $8\times64$ & $8\times64$ & $8\times64$ & $8\times64$ & $8\times64$ & $8\times64$ & $8\times64$ & $8\times64$ \\
 batch size (task-specific) & $256$ & $256$ & $256$ & $256$ & $256$ & $256$ & $256$ & $256$\\
 \# tasks & $40$ & $40$ & $20$ & $20$ & $20$ & $20$ & $30$ & $30$\\
 search population size & $500$ & $500$ & $500$ & $500$ & $500$ & $500$ & $500$ & $500$\\
 \# elites (CEM) & $50$ & $50$ & $50$ & $50$ & $50$ & $50$ & $50$ & $50$\\
 \bottomrule
\end{tabular}
\caption{Hyperparameters for different task sets}
\label{tab3}
\end{table}

For LPG-FTW and EWC, we use the original source code\footnote{https://github.com/Lifelong-ML/LPG-FTW} with parameters and model architectures suggested in the original paper. Specifically, we select step size from $\{0.005,0.05,0.5\}$. For LPG-FTW, e use $\lambda=1e-5, \mu=1e-5$ and select $k$ from {3,5,10}. For EWC, we select $\lambda$ from $\{1e-6,1e-7,1e-4\}$. For HiP-MDP baseline, we modify the original algorithm for a fair comparison. We replace the DDQN algorithm used in T-HiP-MDP with the exact same CEM planning method we used in VBLRL as well as the same parameters. And we use the same model architecture of Bayesian Neural network by modifying the baseline algorithm to also predict reward for each state-action pair (the original method only considers next-state prediction). 

For BOSS and BLRL, we set the number of sampled models $K=5$, and $\gamma = 0.95, \Delta = 0.01$ for value iteration.

We reported the results averaged over three random seeds, and the error bar shows one standard deviation. All experiments were run on our university's high performance computing cluster.

One of the limitations of the current experiments is that we did not evaluate our algorithm on image-based environments. We leave this for future work.

\subsection{Meta-World Domains}
The hyperparameters used are included in Table~\ref{tab3}.

\subsection{Grid-World Item Searching}
Our testbed consists of a collection of houses, each of which has four rooms. The goal of each task is to find a specific object (blue, green or purple) in the current house. The type of each room is sampled based on an underlying distribution given by the environment. Each room type has a corresponding probability distribution of which kind of objects can be found in rooms of this type. Different tasks/houses vary in terms of which rooms are which types and precisely where objects are located in the room (the task's hidden parameters). Room types are sampled from a joint distribution.

\begin{table}[htb]
\small
\centering
\begin{tabular}{c|c|c|c|c}
\toprule
\centering
  Room type probability & \textbf{Room 1} & \textbf{Room 2} & \textbf{Room 3} & \textbf{Room 4}\\\midrule 
 Top-left & $0.4$ & $0$ & $0.4$ & $0.2$\\
 Bottom-left & $0$ & $0.8$ & $0$ & $0.2$\\
 Top-right & $0.1$ & $0$ & $0$ & $0.9$\\
 Bottom-right & $0$ & $0$ & $0.8$ & $0.2$\\
 \bottomrule
\end{tabular}
\caption{Room type probability distribution}
\label{tab3}
\end{table}

\begin{table}[htb]
\small
\centering
\begin{tabular}{c|c|c|c}
\toprule
\centering
  Object type probability & \textbf{Blue ball} & \textbf{Green box} & \textbf{Purple box} \\\midrule 
 \textbf{Room 1} & $0$ & $0.3$ & $0$ \\
 \textbf{Room 2} & $0$ & $0.2$ & $1$ \\
 \textbf{Room 3} & $0.6$ & $0$ & $0$ \\
 \textbf{Room 4} & $0$ & $0$ & $0$ \\
 \bottomrule
\end{tabular}
\caption{Object type probability distribution}
\label{tab3}
\end{table}

\subsection{Box-jumping Task}
We use a simplified version of jumping task~\citep{DBLP:conf/iclr/CombesBS18} as a simple testbed for the proposed algorithm VBLRL. We select a random position of obstacle between $15\sim33$ for each task. The 4-element state vector describes the $(x, y)$ coordinates of the agent's current position, and its  velocity in the $x$ and $y$ directions. The agent can choose from two actions: jump and right. The reward function for this box-jumping task is:
\begin{equation}
    R_{t} = \mathbb{I}\{s_{t}~\text{reach the right wall}\} - \mathbb{I}\{s_{t+1}~\text{hit the obstacle}\} + \Dot{x}_{t}\cdot  \mathbb{I}\{s_{t+1}~\text{not hit the obstacle}\}
\end{equation}

\section{Proof for Lemma 4.1}
\label{proof31}
In the following proofs as well as in the main text, $n$ and $T$ {\bf both} denote the number of samples collected from the environment.

We first rewrite the Bayesian posterior density $g(\omega|D_{i}^{T})$ with respect to $\pi$ as:
\begin{equation}
\begin{aligned}
    g(\omega|D_{i}^{T}) &= \frac{p(D_{i}^{T}|\omega)}{\int_{\Gamma}p(D_{i}^{T}|\omega)d\pi(\omega)} \\ &=\frac{\prod_{t=1}^{T}p(s^{t+1},r^{t}|D_{i}^{t},a^{t}; \omega)}{\int_{\Gamma}\prod_{t=1}^{T}p(s^{t+1},r^{t}|D_{i}^{t},a^{t}; \omega)d\pi(\omega)}\\
    &=\frac{\prod_{t=1}^{T}p(s^{t+1},r^{t}|D_{i}^{t},a^{t}; \omega)}{\mathbb{E}_{\pi}\prod_{t=1}^{T}p(s^{t+1},r^{t}|D_{i}^{t},a^{t}; \omega)}\\
    &=\frac{\prod_{t=1}^{T}p(s^{t+1},r^{t}|D_{i}^{t},a^{t}; \omega)}{\prod_{t=1}^{T}q(s^{t+1},r^{t}|D_{i}^{t},a^{t})}
\end{aligned}
\end{equation}
Then, we refer to the following lemma~\citep{Zhang2006FromT} which is a known information-theoretical inequality:
\begin{lemma}
Assume that $f(\omega)$ is a measurable real-valued function on $\Gamma$, and $g(\omega)$ is a density with respect to $\pi$; we have
\begin{equation}
    \mathbb{E}_{\pi}g(\omega)f(\omega) \leq D_{KL}(g d\pi||d\pi) + \ln \mathbb{E}_{\pi}\exp(f(\omega))
\end{equation}
\label{information}
\end{lemma}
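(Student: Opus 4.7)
The plan is to prove this (the Donsker--Varadhan / Gibbs variational inequality) by introducing the tilted density $g^{\star}(\omega) := e^{f(\omega)}/Z$, where $Z := \mathbb{E}_{\pi}[\exp(f(\omega))]$ is the normalizing constant, and then reading the desired inequality off from non-negativity of a single KL divergence. If $Z = +\infty$ the right-hand side of the claimed bound is $+\infty$ and the inequality is vacuous, so I assume $Z$ is finite; then $g^{\star}$ is a bona fide probability density with respect to $\pi$.

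First I would expand $D_{KL}(g\, d\pi \,\|\, g^{\star}\, d\pi) = \int g(\omega) \ln (g(\omega)/g^{\star}(\omega))\, d\pi(\omega)$. Substituting $\ln g^{\star} = f - \ln Z$ and using the fact that $\int g\, d\pi = 1$ (because $g$ is a density with respect to $\pi$), this simplifies to $D_{KL}(g\, d\pi \,\|\, d\pi) - \mathbb{E}_{\pi}[g(\omega) f(\omega)] + \ln Z$. Non-negativity of KL then rearranges directly into the claim $\mathbb{E}_{\pi}[g f] \leq D_{KL}(g\, d\pi \,\|\, d\pi) + \ln Z$, with equality iff $g = g^{\star}$ almost everywhere with respect to $\pi$.

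There is essentially no obstacle: this is the textbook Gibbs variational identity, which the authors themselves flag as a known result from \citet{Zhang2006FromT}. The only loose ends are (i) dispatching the $Z = +\infty$ corner case trivially and (ii) verifying measurability of $f$ and $g$, both of which are built into the statement. The utility of this lemma inside the proof of Lemma~\ref{lemmarisk} is that choosing $f(\omega_i) = \ln \prod_{t=1}^{T} p(s^{t+1}, r^{t} \mid D_{i}^{t}, a^{t}; \omega_i)$ makes the equality case of the inequality coincide exactly with the Bayesian posterior of Equation~\ref{taskposterior}, so the same argument simultaneously establishes Lemma~\ref{lemmarisk} and identifies its minimizer.
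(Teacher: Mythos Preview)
Your proof is correct and is the standard Donsker--Varadhan/Gibbs variational argument. The paper itself does not supply a proof of this lemma---it simply states the inequality and refers the reader to \citet{Zhang2006FromT}---so there is nothing to compare against beyond noting that your tilted-density argument is the canonical one and would be exactly what one finds in the cited source.
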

We refer the readers to the original paper for detailed proof.

Based on the definition of $R_{n}(g)$, we have:
\begin{equation}
\begin{aligned}
    R_{n}(g)&= \mathbb{E}_{\pi} g(\omega_i)\sum_{t=1}^T \ln \frac{q(s^{t+1},r^{t}|D_{i}^{t},a^{t})}{p(s^{t+1},r^{t}|D_{i}^{t},a^{t}; \omega_i)} + D_{KL}(gd\pi||d\pi)\\ &= \mathbb{E}_{\pi} \frac{\prod_{t=1}^{T}p(s^{t+1},r^{t}|D_{i}^{t},a^{t}; \omega_i)}{\prod_{t=1}^{T}q(s^{t+1},r^{t}|D_{i}^{t},a^{t})}\sum_{t=1}^T \ln \frac{q(s^{t+1},r^{t}|D_{i}^{t},a^{t})}{p(s^{t+1},r^{t}|D_{i}^{t},a^{t}; \omega_i)} \\ &+ \mathbb{E}_{\pi}\frac{\prod_{t=1}^{T}p(s^{t+1},r^{t}|D_{i}^{t},a^{t}; \omega_i)}{\prod_{t=1}^{T}q(s^{t+1},r^{t}|D_{i}^{t},a^{t})}\sum_{t=1}^T \ln \frac{p(s^{t+1},r^{t}|D_{i}^{t},a^{t}; \omega_i)}{q(s^{t+1},r^{t}|D_{i}^{t},a^{t})}\\ &= \mathbb{E}_{\pi} \frac{\prod_{t=1}^{T}p(s^{t+1},r^{t}|D_{i}^{t},a^{t}; \omega_i)}{\prod_{t=1}^{T}q(s^{t+1},r^{t}|D_{i}^{t},a^{t})}\Big[\ln \frac{\prod_{t=1}^{T}q(s^{t+1},r^{t}|D_{i}^{t},a^{t})}{\prod_{t=1}^{T}p(s^{t+1},r^{t}|D_{i}^{t},a^{t}; \omega_i)} - \ln \frac{\prod_{t=1}^{T}q(s^{t+1},r^{t}|D_{i}^{t},a^{t})}{\prod_{t=1}^{T}p(s^{t+1},r^{t}|D_{i}^{t},a^{t}; \omega_i)}\Big]\\&=0
    \end{aligned}
\label{16}
\end{equation}
Then, let $f(\omega) = -\sum_{t=1}^T \ln \frac{q(s^{t+1},r^{t}|D_{i}^{t},a^{t})}{p(s^{t+1},r^{t}|D_{i}^{t},a^{t}; \omega)}$ in Lemma~\ref{information}, we have
\begin{equation}
\begin{aligned}
     R_{n}(\cdot) &= D_{KL}(gd\pi||d\pi) - \mathbb{E}_{\pi}g(\omega)f(\omega) \\ &\geq \ln\mathbb{E}_{\pi}\exp(f(\omega)) \\ &= \ln \mathbb{E}_{\pi}\frac{\prod_{t=1}^{T}p(s^{t+1},r^{t}|D_{i}^{t},a^{t}; \omega_i)}{\prod_{t=1}^{T}q(s^{t+1},r^{t}|D_{i}^{t},a^{t})} \\ &= \ln \frac{\prod_{t=1}^{T}q(s^{t+1},r^{t}|D_{i}^{t},a^{t})}{\prod_{t=1}^{T}q(s^{t+1},r^{t}|D_{i}^{t},a^{t})} \\&=0
\end{aligned}
\label{17}
\end{equation}
Combine Equation~\ref{16} and~\ref{17}, we have that
\begin{equation}
    \inf R_{n}(\cdot) \geq 0 = R_{n}(g)
\end{equation}
Thus, we have that $g(\omega)$ attains the infimum of $R_{n}(\cdot)$.
\section{Proof for Proposition 4.2}
\label{detailedproof}
In general, instead of using the critical prior-mass radius $\varepsilon_{\pi,n}$ to describe certain characteristics of the Bayesian prior as in Corollary 5.2 of \citet{Zhang2006FromT}, we define and use the prior-mass radius $d_{\pi}$ in Proposition 4.2, which is independent of the sample size $n$ and measures the distance between the prior and true distribution.

Firstly, by definition of KL divergence:
As $T \rightarrow \infty$, in Lemma~\ref{lemmarisk}
\[\sum_{t=1}^T \ln \frac{q(s^{t+1},r^{t}|D_{i}^{t},a^{t})}{p(s^{t+1},r^{t}|D_{i}^{t},a^{t}; \omega_i)}\rightarrow T\cdot D_{KL}(q||p(\cdot|\omega_i))\]
Then, we use $n$ instead of $T$ to denote the number of samples collected, and we rewrite the original form for infimum of $R_n(\cdot)$ as:
\begin{equation}
\begin{aligned}
\inf R_n(g) &= \inf[\mathbb{E}_{\pi}g(\omega_i)\cdot n \cdot D_{KL}(q||p(\cdot|\omega_i)) + D_{KL}(g d\pi||d\pi)]\\
& = \inf[\mathbb{E}_{\pi}g(\omega_i) D_{KL}(q||p(\cdot|\omega_i)) + \frac{1}{n}D_{KL}(g d\pi||d\pi)]
\end{aligned}
\label{infi}
\end{equation}
Given Equation~(\ref{infi}) and Following~\citep{Zhang2006FromT}, we define the Bayesian resolvability as
\begin{equation}
\begin{aligned}
r_{n}(q) &= \inf_{g}[\mathbb{E}_{\pi}g(\omega_i)D_{KL}(q||p(\cdot|\omega_i)) + \frac{1}{n}D_{KL}(g d\pi||d\pi)] \\
& = -\frac{1}{n}\ln \mathbb{E}_{\pi}e^{-nD_{KL}(q||p(\cdot|\omega_i))}.
\label{defres}
\end{aligned}
\end{equation}

 Intuitively, the Bayesian resolvability controls the complexity of the density estimation process. Based on this definition and our previous definitions of $d_{\pi}$, we can derive a simple and intuitive estimate of the standard Bayesian resolvability.

\begin{lemma}
The \textit{resolvability of standard Bayesian posterior} defined in~(\ref{defres}) can be bounded as

\begin{equation*}
\begin{aligned}
r_{n}(q) \leq \frac{n+1}{n} d_{\pi}
\label{defres1}
\end{aligned}
\end{equation*}
\end{lemma}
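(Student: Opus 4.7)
The plan is to unwind the definition of the Bayesian resolvability and bound the expectation $\mathbb{E}_{\pi} e^{-n D_{KL}(q \| p(\cdot|\omega_i))}$ from below by restricting integration to a small ball of densities close to $q$ in KL divergence. Since $r_n(q)$ equals the negative logarithm (divided by $n$) of that expectation, a lower bound on the expectation yields the required upper bound on $r_n(q)$.

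Concretely, first I would introduce the level set $A_{d} = \{p \in \Gamma : D_{KL}(q\|p) \leq d\}$. The definition of the prior-mass radius $d_{\pi}$ tells us that $d_{\pi} \geq -\ln \pi(A_{d_{\pi}})$, which rearranges to $\pi(A_{d_{\pi}}) \geq e^{-d_{\pi}}$. Next, I would simply lower bound the full expectation by integration over $A_{d_{\pi}}$:
\begin{equation*}
\mathbb{E}_{\pi} e^{-n D_{KL}(q\|p(\cdot|\omega_i))}
 \;\geq\; \int_{A_{d_{\pi}}} e^{-n D_{KL}(q\|p(\cdot|\omega_i))} \, d\pi(\omega_i)
 \;\geq\; e^{-n d_{\pi}} \, \pi(A_{d_{\pi}})
 \;\geq\; e^{-(n+1) d_{\pi}},
\end{equation*}
where the middle inequality uses the definition of $A_{d_{\pi}}$ to lower bound the integrand uniformly, and the last inequality uses the prior-mass bound above.

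Taking $-\tfrac{1}{n}\ln(\cdot)$ on both sides of the chain then gives $r_n(q) \leq \tfrac{n+1}{n} d_{\pi}$, which is the claim. There is no real obstacle: the only subtlety is verifying that the infimum defining $d_{\pi}$ is actually attained (or can be approached) in the sense that $\pi(A_{d_{\pi}}) \geq e^{-d_{\pi}}$ rather than strictly less; this follows by a standard monotone-continuity argument on the measure of nested sets $A_{d}$ as $d$ decreases to $d_{\pi}$, or alternatively by working with $d_{\pi} + \varepsilon$ and letting $\varepsilon \to 0$. Once this technical point is dispatched, the rest is a one-line bound.
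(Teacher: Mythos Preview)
Your proposal is correct and follows essentially the same route as the paper: both arguments restrict the expectation $\mathbb{E}_{\pi} e^{-n D_{KL}(q\|p)}$ to the KL-ball $\{p:D_{KL}(q\|p)\le d\}$, lower-bound the integrand there by $e^{-nd}$, and then invoke the defining inequality $d_\pi \ge -\ln \pi(A_{d_\pi})$ to conclude. The only difference is cosmetic---the paper carries a generic $d$ through the first line before specializing to $d_\pi$, whereas you plug in $d_\pi$ immediately---and your remark about approaching the infimum via $d_\pi+\varepsilon$ is, if anything, slightly more careful than the paper's presentation.
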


\begin{proof}[proof]
For all $d > 0$, we have

\begin{equation*}
\begin{aligned}
r_{n}(q) & = -\frac{1}{n}\ln \mathbb{E}_{\pi}e^{-nD_{KL}(q||p(\cdot|\omega_i))} \leq -\frac{1}{n} \ln [e^{-nd}\pi({p\in \Gamma: D_{KL}(q||p) \leq d})] \\ &= d + \frac{1}{n} \times [-\ln \pi({p\in \Gamma: D_{KL}(q||p) \leq d})] \leq \frac{n+1}{n} d_{\pi}
\label{defres3}
\end{aligned}
\end{equation*}
\end{proof}

This bound links the Bayesian resolvability to the number of samples $n$ and prior-mass radius $d_{\pi}$ which is a fixed property of the density given a specific prior and the true underlying density. Intuitively, the Bayesian posterior is better behaved when the Bayesian prior is closer to the true distribution ($d_{\pi}$ is smaller) and more samples are used (n is larger).

Now we can prove the main theorem of Lemma 1. Let $\rho=\frac{1}{2}$, $\epsilon_{h} = \frac{2\varepsilon_{n} + (4\eta-2)h}{\delta/4}$, define $\Gamma_{1} = \{p\in \Gamma: D^{Re}_{\rho}(q||p) < \epsilon_{h}\}$ and $\Gamma_{2} = \{p \in \Gamma: D^{Re}_{\rho}(q||p) \geq \epsilon_{h}\}$. We let $a=e^{-nh}$ and define $\pi'(\theta) = a \pi(\theta)C$ when $\theta \in \Gamma_{1}$ and $\pi'(\theta)C$ when $\theta \in \Gamma_{2}$, where the normalization constant $C = (a\pi(\Gamma_{1}) +\pi(\Gamma_{2}))^{-1}\in[1,1/a]$. Firstly, 
\begin{equation*}
\begin{aligned}
    &\mathbb{E}_{X}\pi'(\Gamma_{2}|X)\epsilon_{h} \leq \mathbb{E}_{X}\mathbb{E}_{\pi'}\pi'(\theta|X)\frac{1}{2}||p-q||^{2}_{1} \leq \mathbb{E}_{X}\mathbb{E}_{\pi'}\pi'(\theta|X)D_{KL}(q||p) \\
\end{aligned}
\end{equation*}

according to the Markov inequality (with probability at least $1-\delta$) and Pinsker's inequality. Then, according to Theorem ~5.2 and Proposition~5.2 in Zhang's paper,

\begin{equation*}
\begin{aligned}
    &\mathbb{E}_{X}\mathbb{E}_{\pi'}\pi'(\theta|X)D_{KL}(q||p) \leq \frac{\eta \ln \mathbb{E}_{\pi'}e^{-nD_{KL}(q||p(\cdot|\omega_i))}}{\rho(\rho-1)n} \\
     &+\frac{\eta-\rho}{\rho(1-\rho)n}\inf_{\{\Gamma_{j}\}}\ln\sum_{j}\pi'(\Gamma_{j})^{(\eta-1)/(\eta-\rho)}(1+r_{ub}(\Gamma_{j}))^{n} \\
    &\leq  \frac{\eta h - (\eta/n)\ln \mathbb{E}_{\pi}e^{-nD_{KL}(q||p(\cdot|\omega_i))}}{\rho(1-\rho)} + \frac{\eta-\rho}{\rho(1-\rho)}\Big[\frac{(\eta-1)h}{\eta-\rho}+\varepsilon_{upper,n}\Big(\frac{\eta-1}{\eta-\rho}\Big)\Big]\\
    &= \frac{(2\eta - 1)h}{\rho(1-\rho)} + \frac{-(\eta/n)\ln \mathbb{E}_{\pi}e^{-nD_{KL}(q||p(\cdot|\omega_i))}+(\eta-\rho)\varepsilon_{upper, n}((\eta-1)/(\eta-\rho))}{\rho(1-\rho)}\\
    \end{aligned}
\end{equation*}

Then, using the definitions of $d_{\pi}$, we further obtain

\begin{equation*}
\begin{aligned}
    & \mathbb{E}_{X}\pi'(\Gamma_{2}|X)\epsilon_{h} \\
    &\leq \frac{(2\eta - 1)h}{\rho(1-\rho)} + \frac{\eta \inf_{d>0}[d-\frac{1}{n}\ln \pi(\{p\in\Gamma: D_{KL}(q||p)\leq d\})]+(\eta-\rho)\varepsilon_{upper, n}((\eta-1)/(\eta-\rho))}{\rho(1-\rho)} \\
    &\leq \frac{(2\eta - 1)h}{\rho(1-\rho)} + \frac{\eta(1+\frac{1}{n})d_{\pi}+(\eta-\rho)\varepsilon_{upper, n}((\eta-1)/(\eta-\rho))}{\rho(1-\rho)} \\
    &= \frac{(2\eta - 1)h+\varepsilon_{n}}{\rho(1-\rho)}
\end{aligned}
\end{equation*}

We use $\eta$ instead of $\gamma$ which is used in the original paper to avoid confusion with the discount factor. Then, we further divide both sides by $\epsilon_{h}$ and obtain $\pi'(\Gamma|X)\leq 0.5$. Then, by definition,

\begin{equation*}
\begin{aligned}
    \pi(\Gamma_{2}|X) &= a\pi'(\Gamma_{2}|X)/(1-(1-a)\pi'(\Gamma|X))\\ &\leq \frac{a}{a+1} =\frac{1}{1+e^{nh}}
\end{aligned}
\end{equation*}
Thus, we get the desired bound.

\section{Proof for Proposition 4.4}
The result shown in Proposition 4.4 can be derived simply by replacing the Bayesian concentration sample complexity term in BOSS with the result in Lemma 4.3. So the central part is the proof of Proposition 4.2, which we already did. The other parts are the same as the proof in BOSS, so we refer the readers to BOSS's original paper and omit the proof here. Note that the single-task baseline that BLRL is built upon is BOSS,
and could be replaced by other Bayesian-exploration RL
algorithm.

\section{Single-task baseline comparison on Mujoco domain}
\label{appendixsingle}
In this section, we compare the performance of the single-task version of our algorithm with the single-task version of LPG-FTW/EWC. Note that to make it a fair comparison, we let the model-free single-task RL baseline used by LPG-FTW/EWC collect $2.0\times$ more samples (interactions with the environment) than VBLRL as in the lifelong learning setting. 
\begin{figure}[htbp]
    \centering

         \includegraphics[ width=0.38\textwidth]{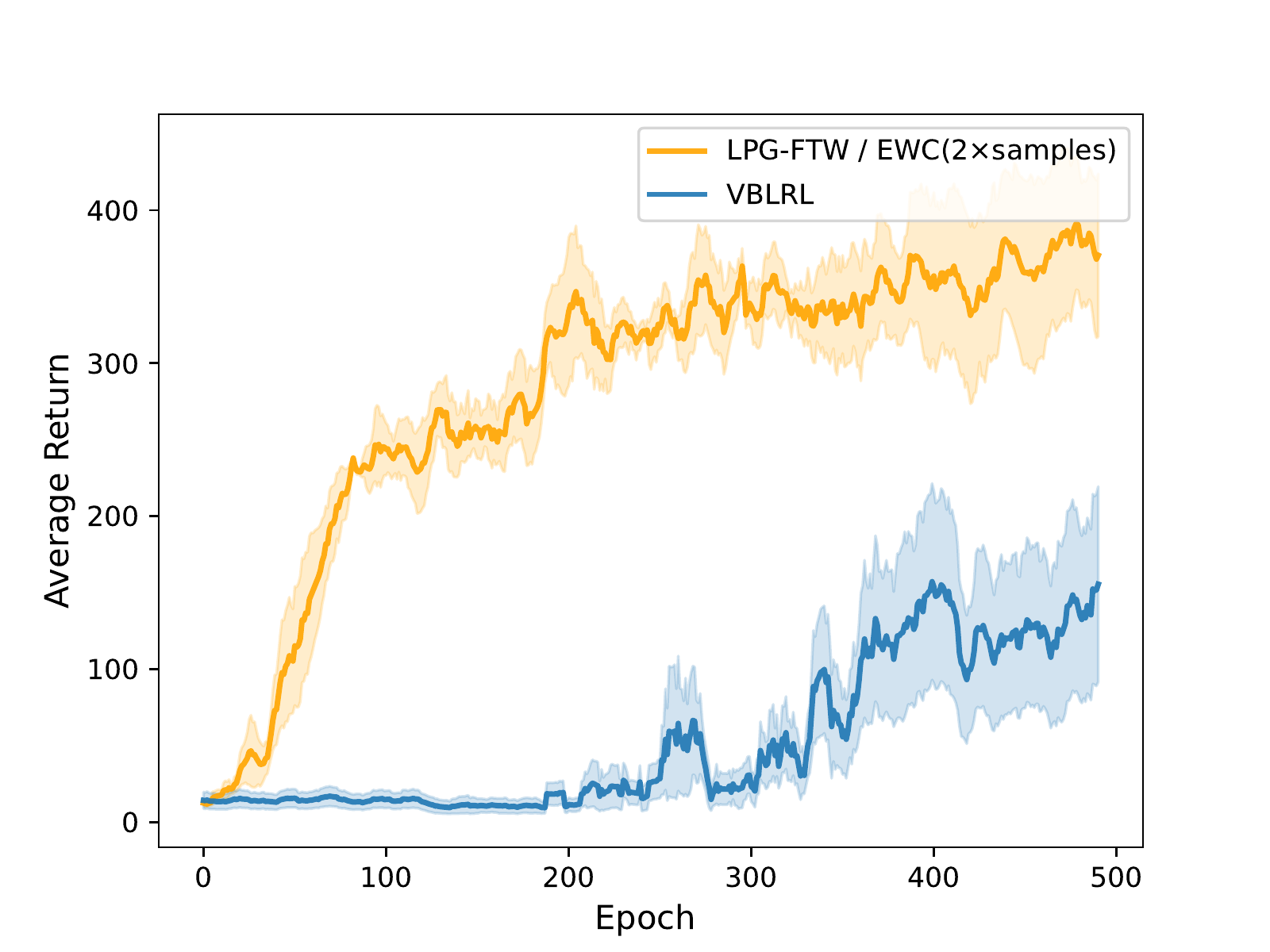}
         \includegraphics[ width=0.38\textwidth]{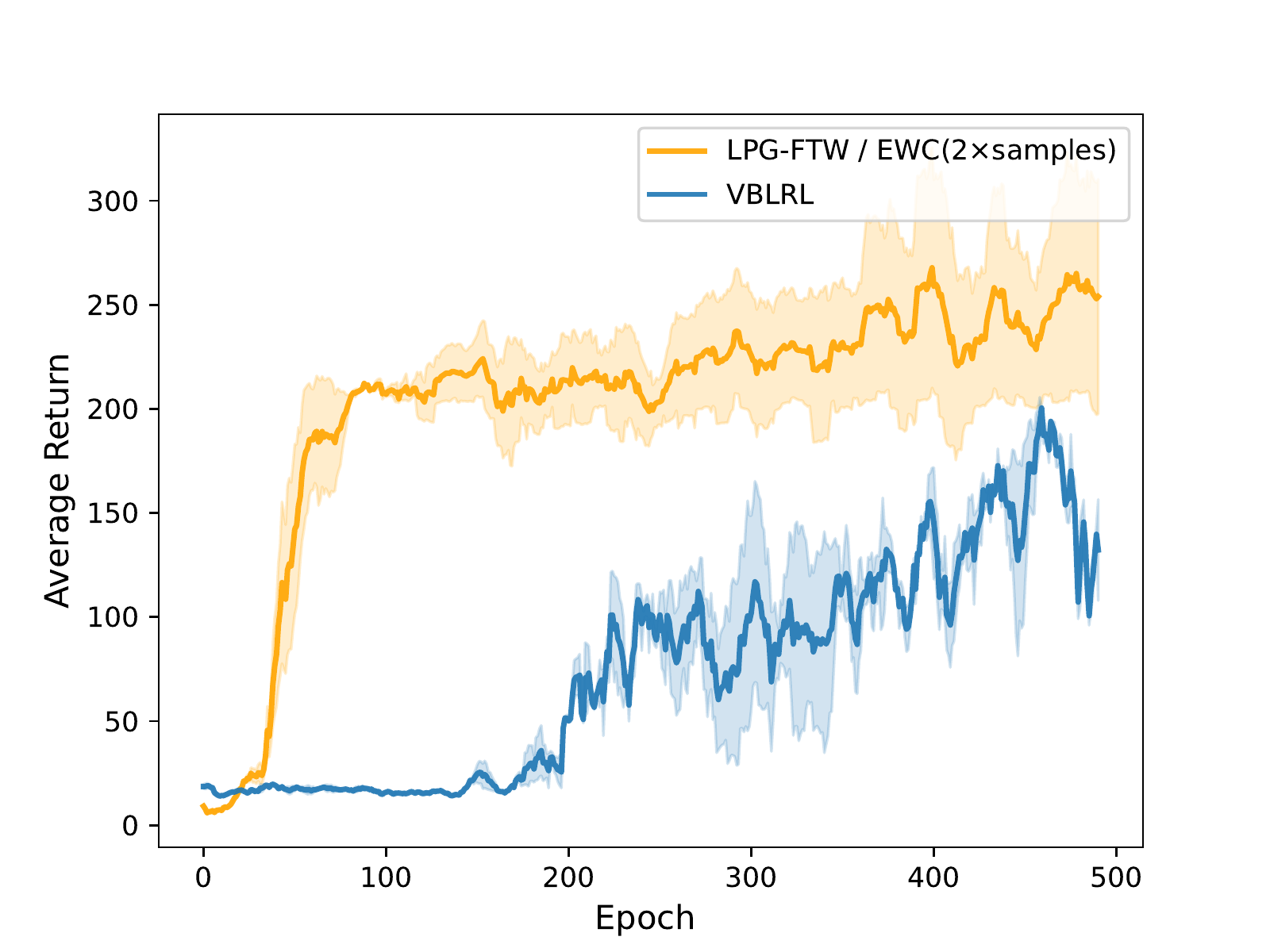}
         \includegraphics[ width=0.38\textwidth]{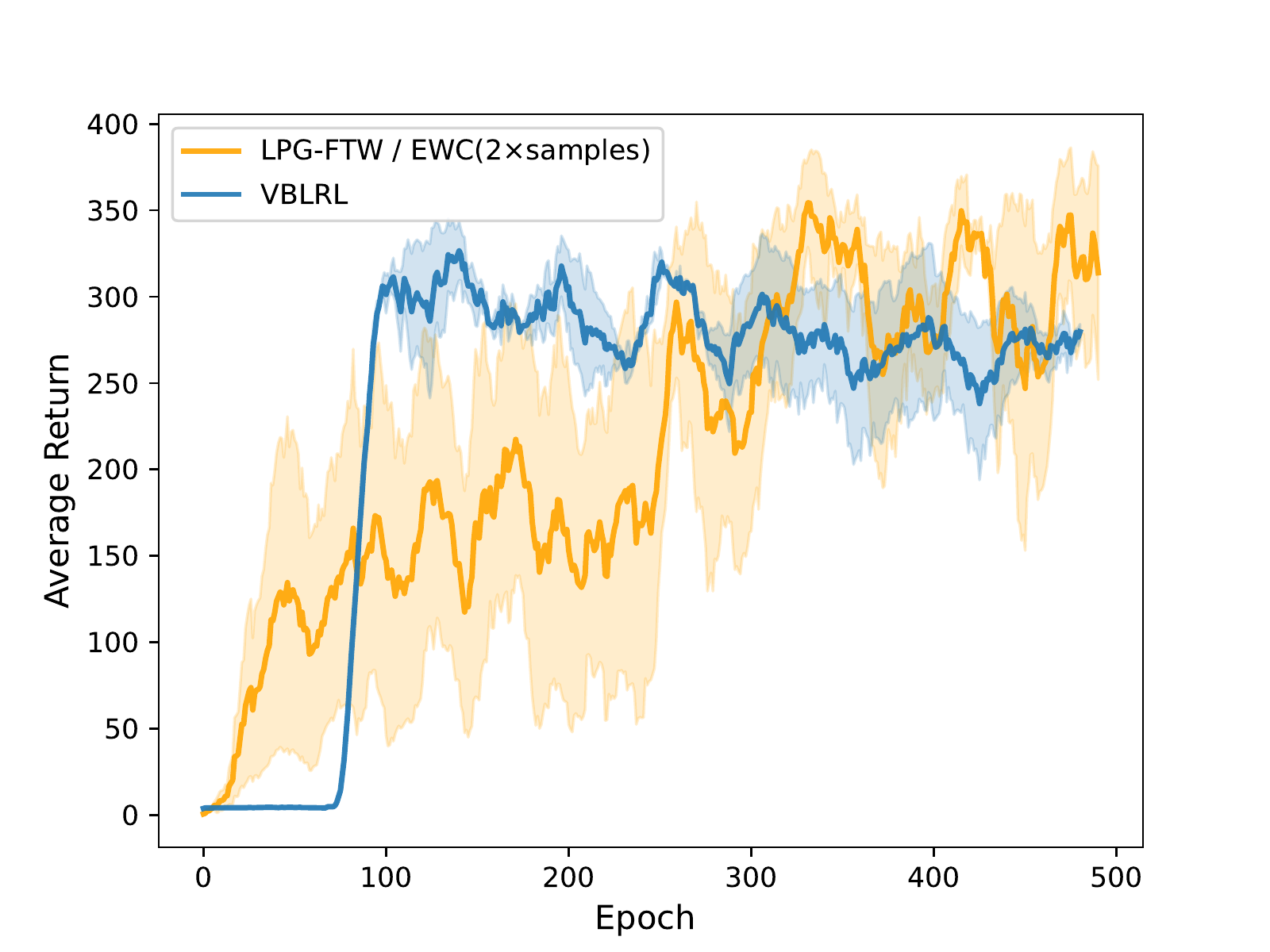}
         \includegraphics[ width=0.38\textwidth]{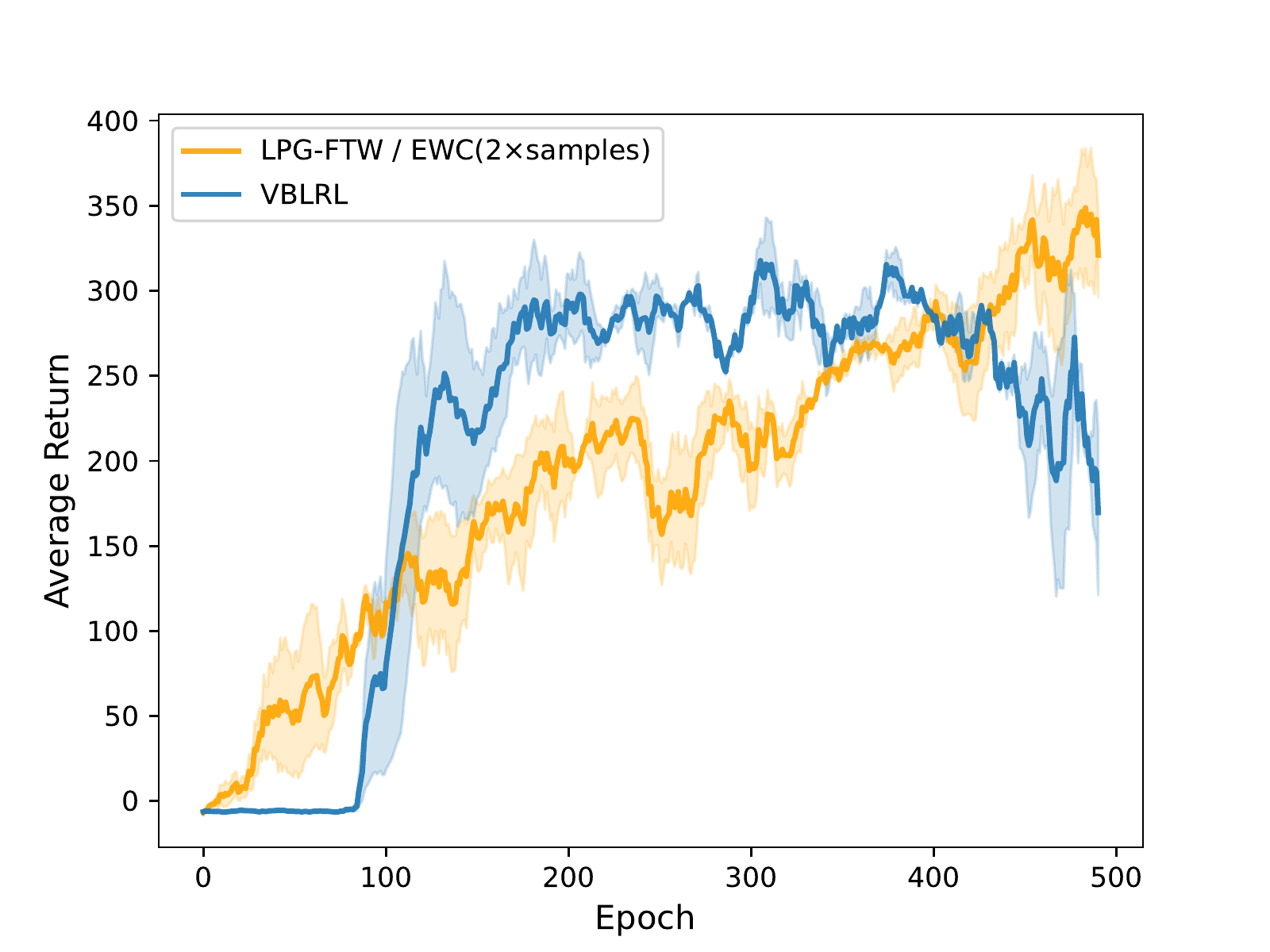}
         
    \caption{Average performance comparison for \textbf{single-task} baselines. Top-Left: \textbf{Hopper-Gravity}; Top-Right: \textbf{Hopper-Bodyparts}; Bottom-Left: \textbf{Walker-Gravity}; Bottom-Right: \textbf{Walker-Bodyparts}. }
    \label{fig:single11}
\end{figure}

\section{Full lifelong RL comparison on Mujoco domain}
\label{appendixlifelong}
As LPG-FTW and EWC are built upon a model-free RL baseline with relatively lower sample efficiency, we let LPG-FTW/EWC collect $2.0\times$ more samples (interactions with the environment) than VBLRL as in the single-task learning setting. Comparing Figure~\ref{fig:single11} and Figure~\ref{fig:full}, we find that in the Hopper domains, even though the single-task baseline used by LPG-FTW and EWC performs much better than VBLRL after we let them collect $2\times$ samples each iteration, in lifelong RL experiments VBLRL still achieves comparable performance with LPG-FTW and EWC. In the walker domains where the single task baselines achieves similar performance, VBLRL shows significant better performance than LPG-FTW/EWC in lifelong RL experiments.
\begin{figure}[htbp]
    \centering

         \includegraphics[width=0.38\textwidth]{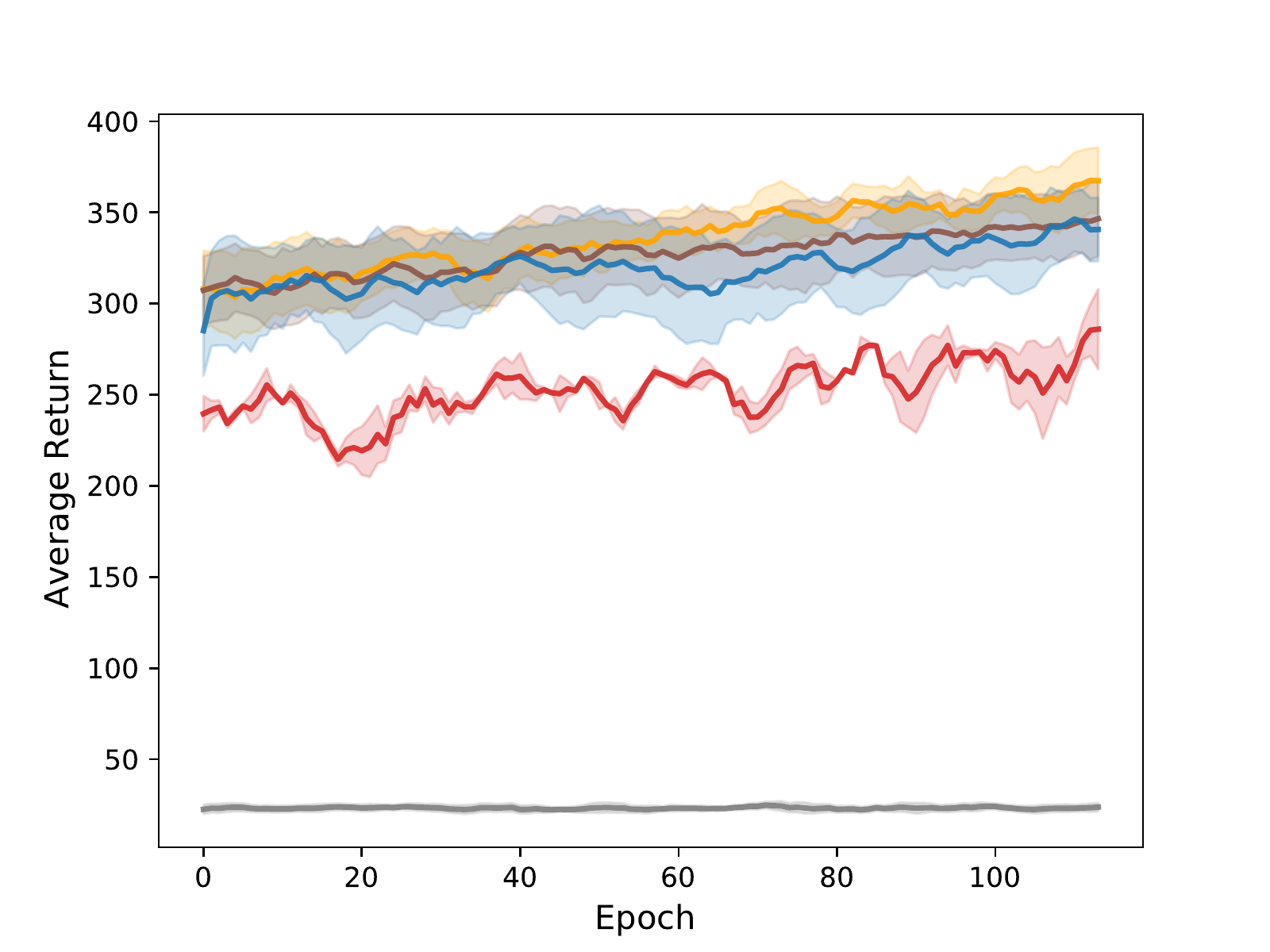}
         \includegraphics[width=0.38\textwidth]{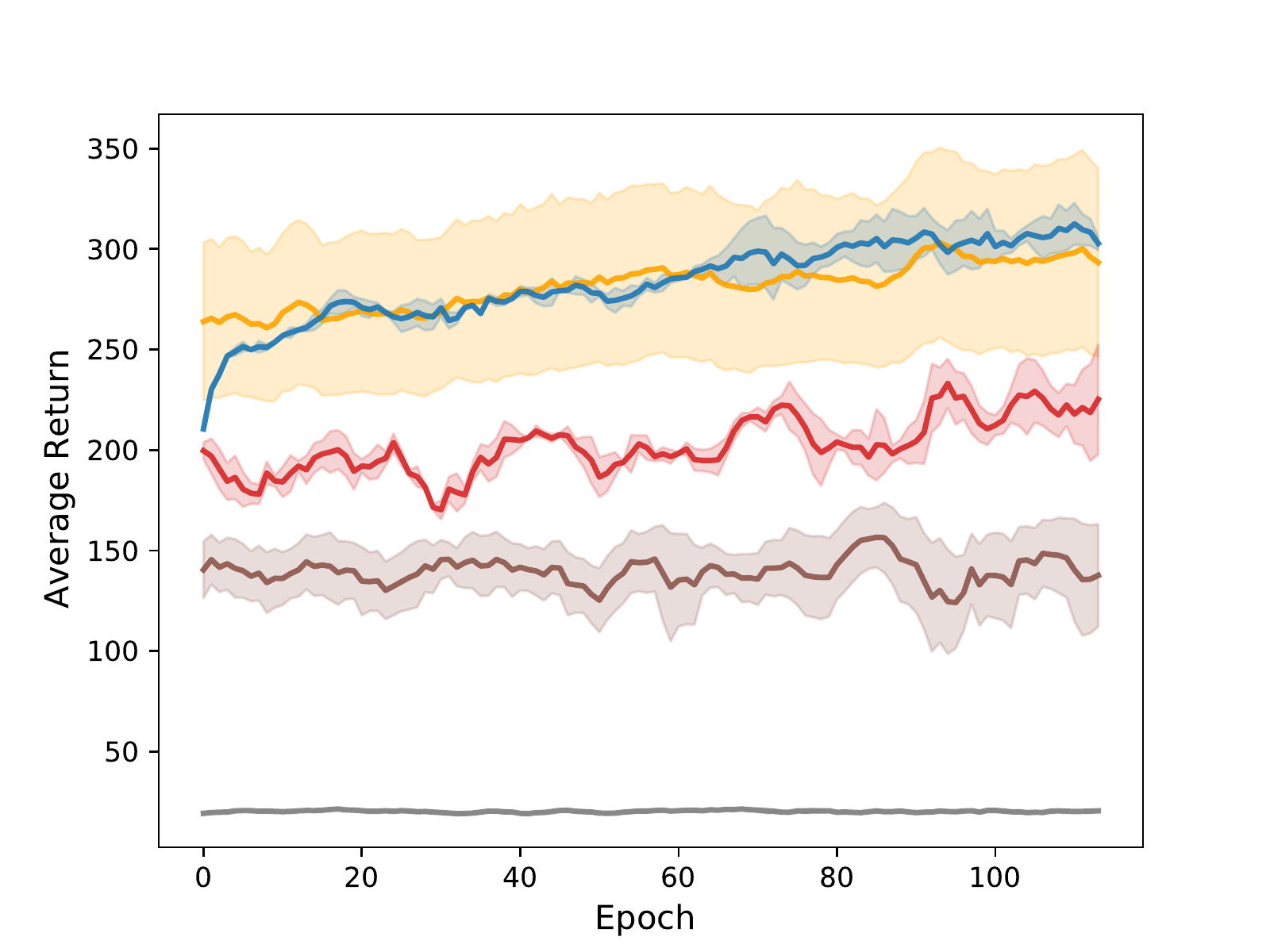}
         \includegraphics[width=0.38\textwidth]{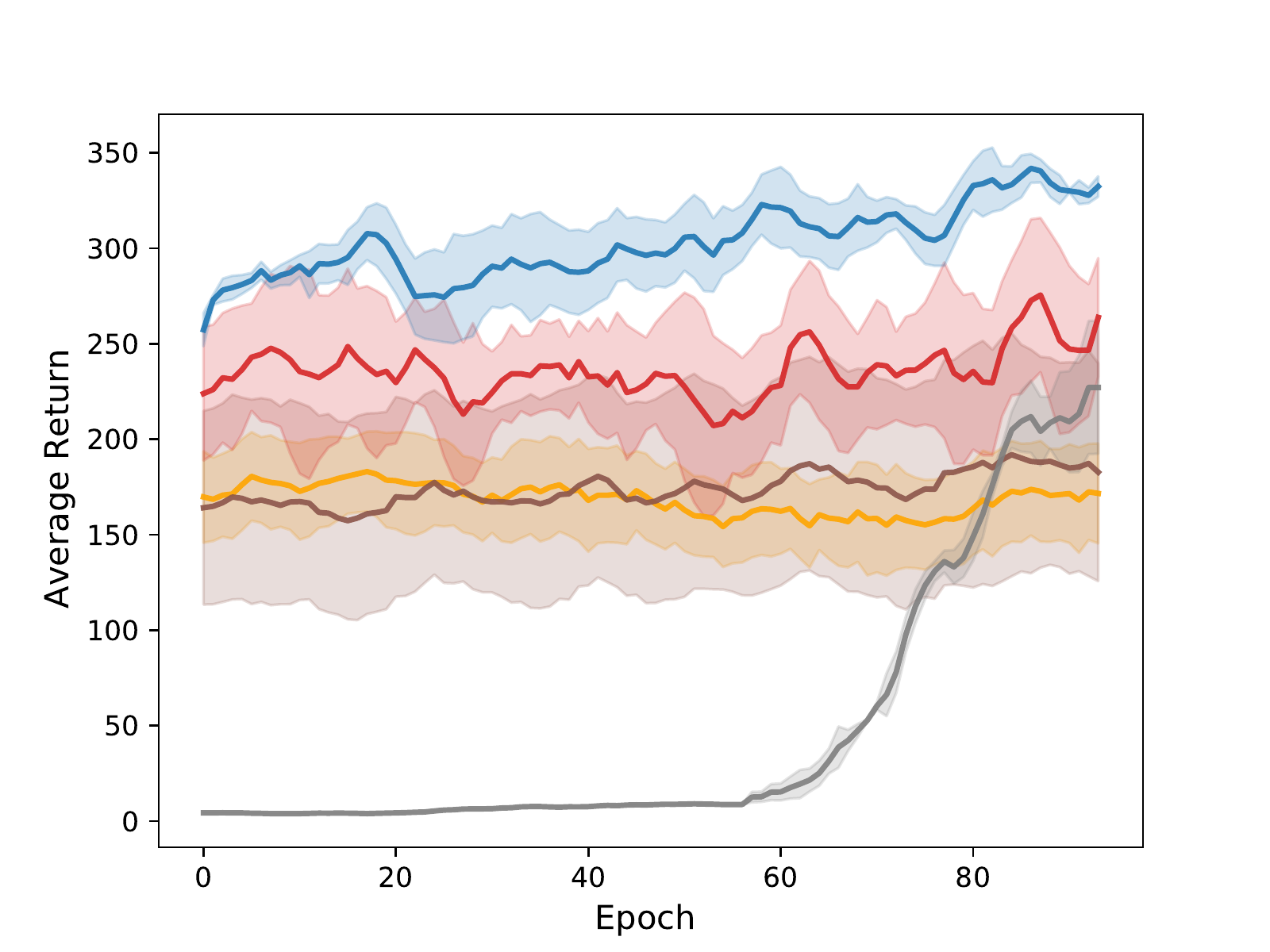}
         \includegraphics[width=0.38\textwidth]{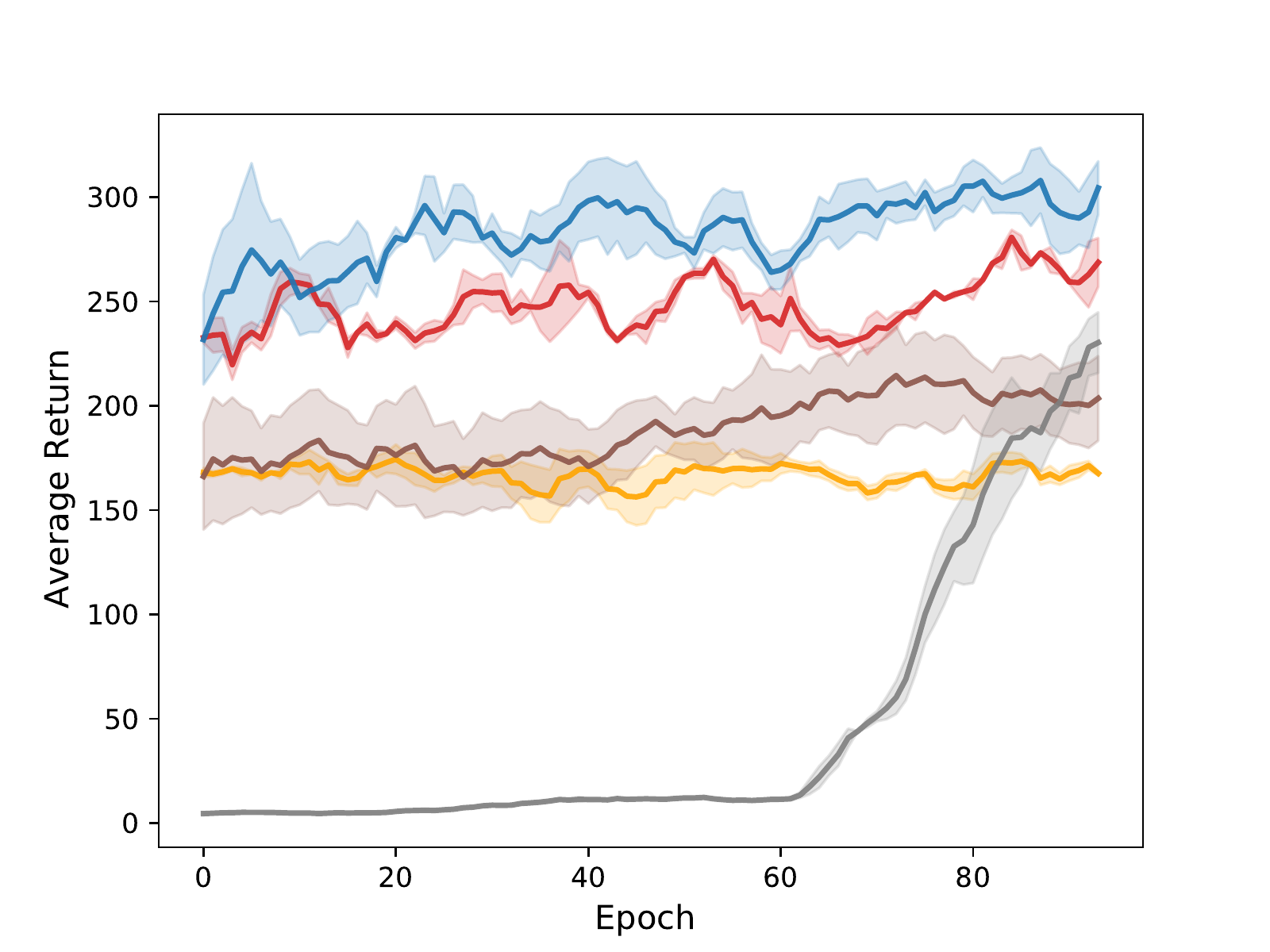}
    \includegraphics[width=0.8\textwidth]{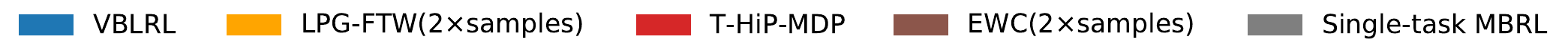}
    \caption{Average performance comparison for \textbf{lifelong RL algorithms}. Top-Left: \textbf{Hopper-Gravity}; Top-Right: \textbf{Hopper-Bodyparts}; Bottom-Left: \textbf{Walker-Gravity}; Bottom-Right: \textbf{Walker-Bodyparts}. }
    \label{fig:full}
\end{figure}

\section{Grid-World Item Searching}
\label{gridworldexp}
We also evaluate BLRL in a simple Grid-World domain. Our testbed consists of a collection of houses, each of which has four rooms. The goal of each task is to find a specific object (blue, green or purple) in the current house. The type of each room is sampled based on an underlying distribution given by the environment. Each room type has a corresponding probability distribution of which kind of objects can be found in rooms of this type. Different tasks/houses vary in terms of which rooms are which types and precisely where objects are located in the room (the task's hidden parameters). 

To simplify the problem, instead of modeling the whole MDP distribution, we use BLRL to model the object distribution as the Bayesian posterior and sample MDPs from the distribution. We use BOSS with a fixed prior (no intertask transfer) as our baseline. The average training performance of all 300 tasks are shown in Figure~\ref{fig:gridres1} top right. Each task consists of 10 epochs, with 21 sample steps for each epoch. Within the limited steps allotted for each task, BLRL is able to discover and transfer the common knowledge and helps the agent quickly adapt to new tasks as the training goes on. In comparison, running BOSS with a fixed prior is able to find the optimal policy eventually but needs more sample steps and learns more slowly than BLRL.

\begin{figure}[htbp]
    \centering

         \includegraphics[width=0.60\textwidth]{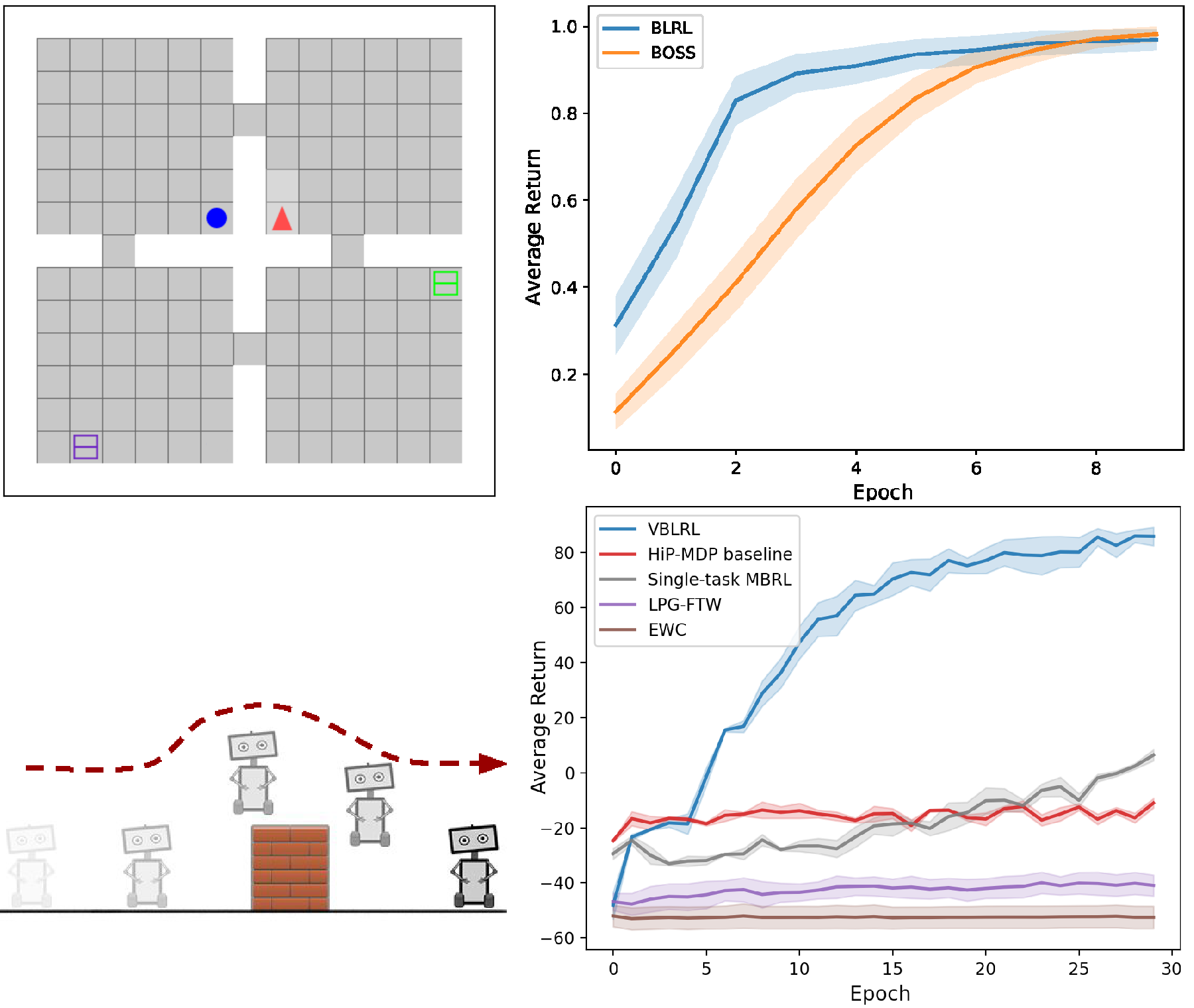}
         
         \label{fig:color_domain}

    \caption{Top-left: Grid-World Item Searching; Top-right: Grid-World Item Searching evaluation results; Bottom-left: Box-jumping Task; Bottom-right: Box-jumping Task evaluation results.}
    \label{fig:gridres1}
\end{figure}


         
         



\section{Box-Jumping Task}
\label{box-jumping}
We use a simplified version of the jumping task~\citep{DBLP:conf/iclr/CombesBS18} as a testbed for the proposed algorithm VBLRL. As shown in Figure~\ref{fig:gridres1} bottom left, the goal of the agent is to reach the right side of the screen by jumping over the obstacle. The agent can only choose from two actions: jump and right. It will hit the obstacle unless the jump action is chosen at precisely the right time. We set different obstacle positions as different tasks, constituting the HiP-MDP hidden parameters. The 4-element state vector describes the $(x, y)$ coordinates of the agent's current position, and its velocity in the $x$ and $y$ directions.

Figure~\ref{fig:gridres1} bottom right presents the average performance during training across all 300 tasks. Each task is run for~30 episodes. VBLRL clearly learns faster than the HiP-MDP baseline and reaches better final performance. Thus, in the lifelong RL setting, separating the updating processes of the world-model posterior and the task-specific posterior can lead to better learning efficiency. 

\section{How VBLRL models different categories of uncertainty}
\label{uncert}
\begin{figure}[htbp]
    \centering

         \includegraphics[width=0.78\textwidth]{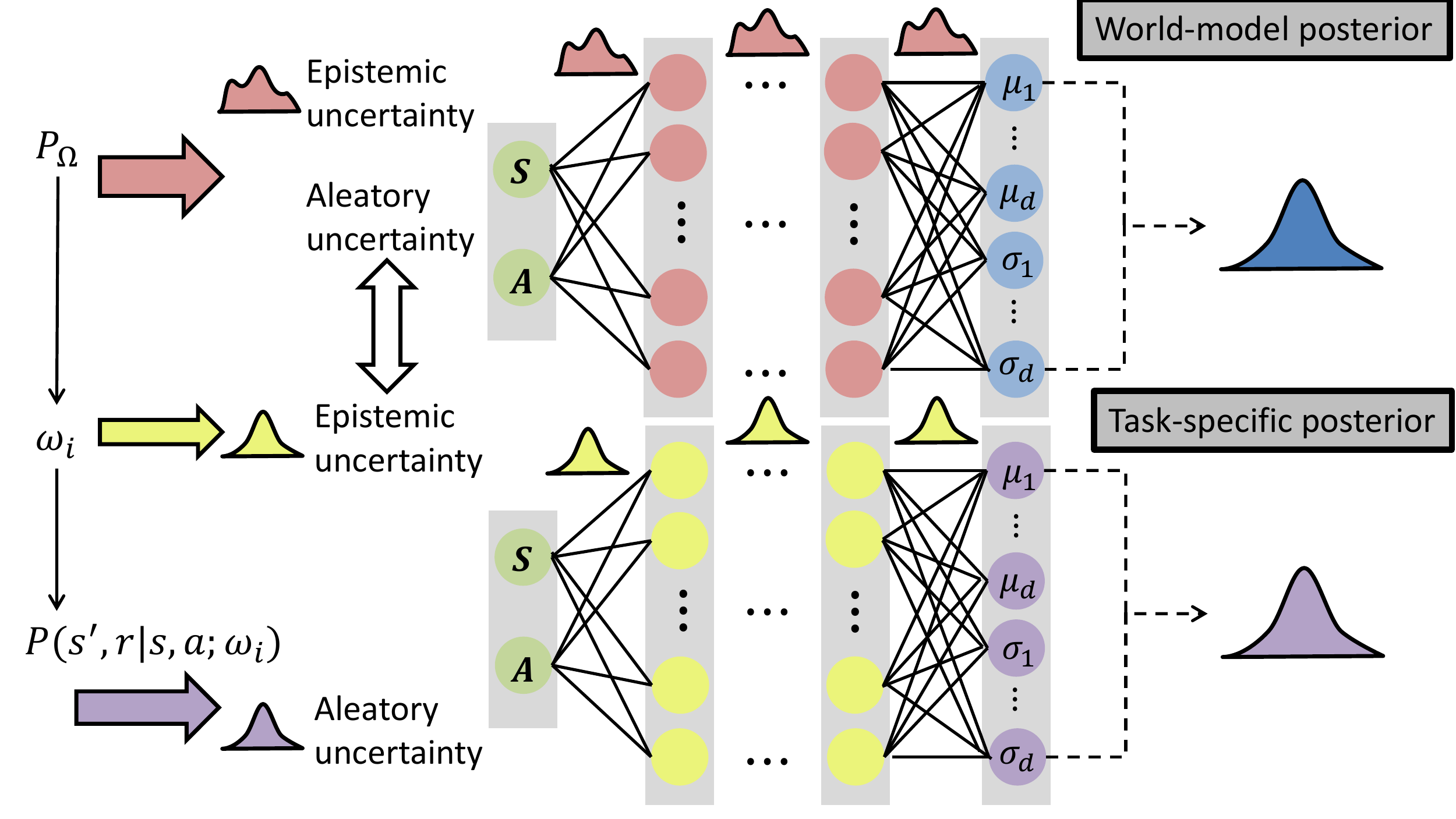}

    \caption{How VBLRL estimates different kinds of uncertainties in HiP-MDP. The world-model posterior captures the epistemic uncertainty of the general knowledge distribution (shared across all tasks controlled by the hidden parameters) via the internal variance of world-model BNN. As the learner is exposed to more and more tasks, the posterior should converge to $P_{\Omega}$. The task-specific posterior captures the epistemic uncertainty of the current task $i$, which comes from the alleatory uncertainty of the world model when generating $\omega_{i}$ for a new task, via the internal variance of task-specific BNN. The posterior should output the highest probability for $\omega$ near the true $\omega_{i}$ as the agent collects enough data from the task. The aleatory uncertainty of the final prediction is measured by the output variance of the prediction.}
    \label{fig:undert}
\end{figure}

\section{Additional explanation of the algorithm}
Here we first provide an example to help the readers better understand our plate notation. In our Gridworld Item Searching case, $\Psi$ represents the parameters of $P_{\Omega}$, which is the room-type and object distribution. For each task, the environment samples a hidden parameter $\omega$, which is the actual room and object layout of this house, from this distribution $P_{\Omega}$. The sampled $\omega$ then will result in an MDP $m$ and let the agent interact with it.

\subsection{planning algorithm}
With the transition dynamics and reward functions, a planning algorithm like CEM is not the only way to solve the MDP to get an optimal policy. Another option would be using other Deep RL algorithms like Soft actor-critic~\citep{DBLP:conf/icml/HaarnojaZAL18} with data generated from the model. 
However, in this case, incorporating a deep RL algorithm means that we need to introduce additional neural networks (that is, policy/value networks) for each task. The update signal from the RL loss is usually stochastic and weak, which is even worse in this case when our model is still far from accurate. So, here we assume applying a planning algorithm is a better way to get the policy.


\section{Ablation study of the number of particles}
\label{numofpart}
We also run ablation studies on the number of particles in CEM planning. The agent performance is close when the number of particles is around 50. The computational complexity drops as we use fewer particles and is especially larger when the number of particles $\geq 70$.
\begin{figure}[htbp]
    \centering

         \includegraphics[width=0.38\textwidth]{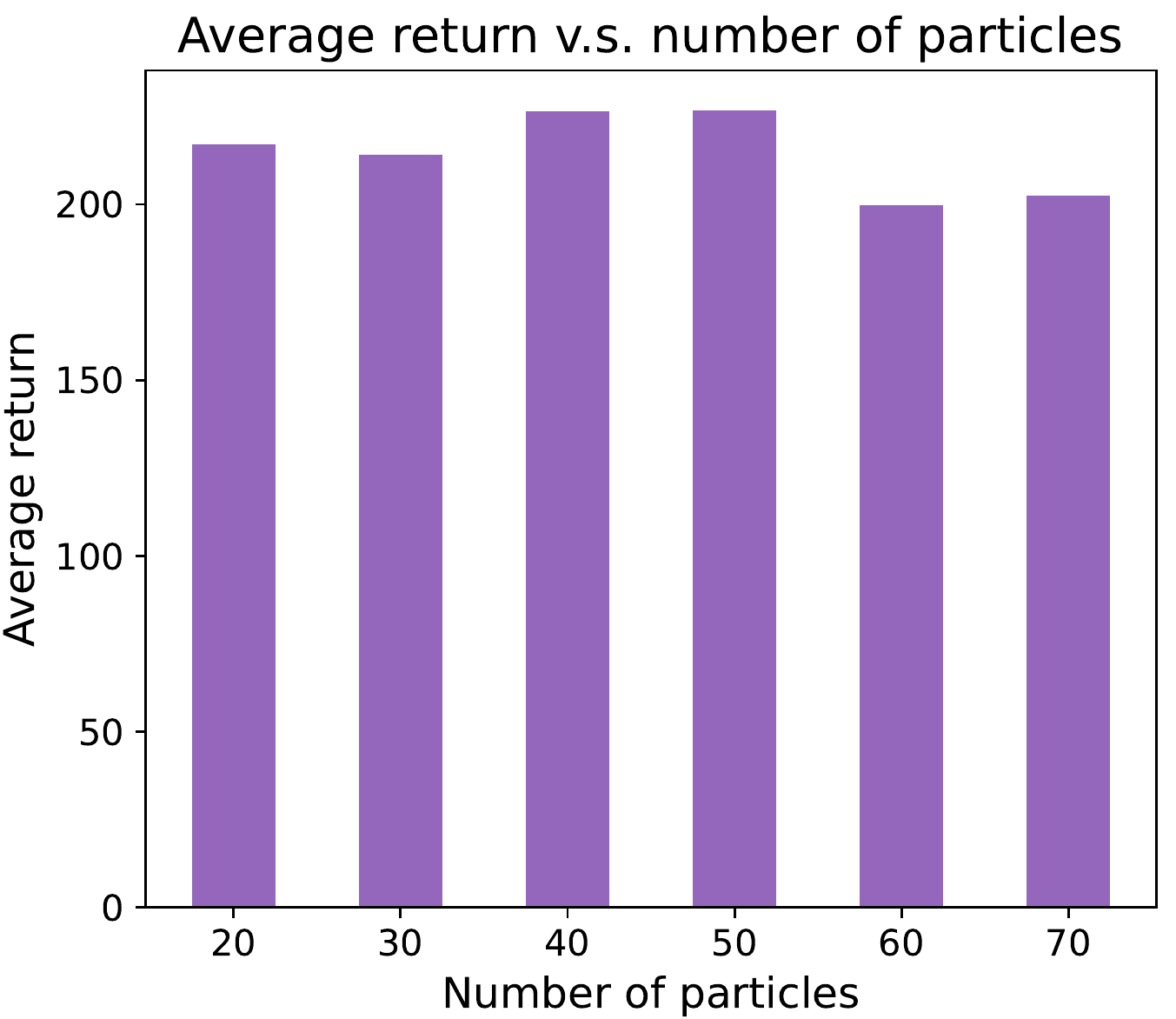}
         \includegraphics[width=0.38\textwidth]{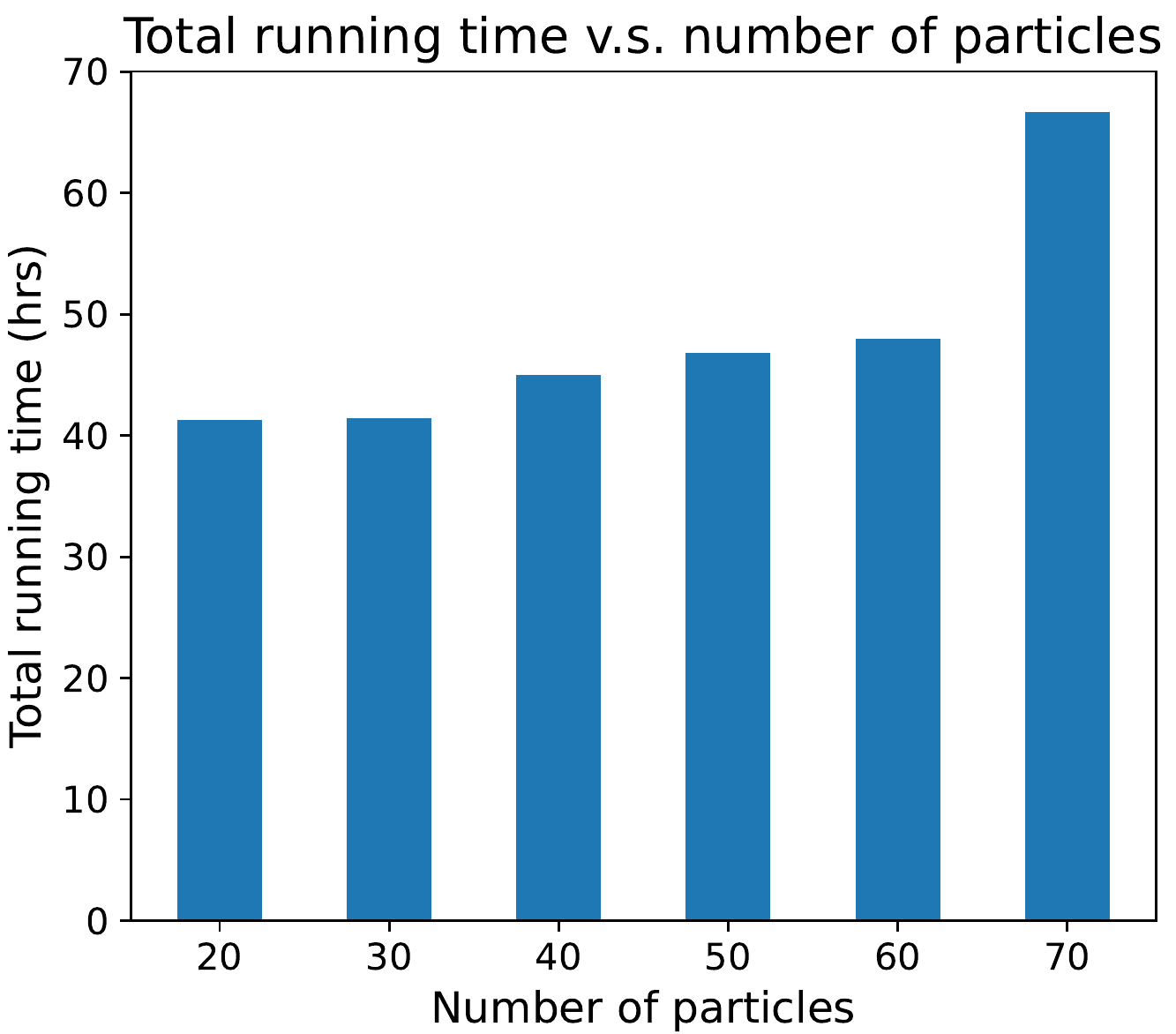}

    \caption{Left: How the average performance change with respect to different number of particles on Cheetah-Gravity domain. Right: How the total running hours on a Nvidia Geforce RTX 3090Ti change with respect to different number of particles on Cheetah-Gravity domain.}
   \label{fig:ablappd}
\end{figure}

\section{Coin Example}
\label{coineg}
Consider a coin-flipping environment. We want to find the sample complexity of the unbiased coin (i.e. How many times we need to flip this coin such that our posterior samples are accurate.). Consider a Dirichlet prior, $\alpha_{0} = (n_{1}, n_{2})$ and $\theta_{0}=(\frac{1}{2},\frac{1}{2})$. We want to find sample complexity $B$ such that the posterior likelihood for a coin with heads likelihood in $[0.5-\epsilon, 0.5+\epsilon]$ is at least $1-\delta$.

Note that the Dirichlet distribution on the two-dimensional simplex is the Beta distribution. The Multinomial distribution with two outcomes is the Binomial distribution. That is, given the process
\newcommand{\Bin}{\mathit{Bin}}
\newcommand{\Beta}{\mathit{Beta}}
\begin{equation}
H \sim \Bin(H|\rho=0.5, B),
\end{equation}
\begin{equation}
\hat{\rho} \sim \Beta(\hat{\rho}|\alpha=H+n_{1},\beta=B-H+n_{2}),
\end{equation}
choose a value $B$ such that
\begin{equation}
P(0.5-\epsilon \leq \hat{\rho}\leq 0.5+\epsilon) \geq 1-\delta,
\end{equation}
\begin{equation}
\sum_{H=0}^{B}\Bin(H|\rho=0.5, B)\cdot \int_{\hat{\rho}=0.5-\epsilon}^{0.5+\epsilon}\Beta(\hat{\rho}|\alpha=H+n_{1},\beta=B-H+n_{2}) \geq 1-\delta.
\label{coinsimp}
\end{equation}
Here, $n_1$ and $n_2$ capture the prior. The smallest $B$ that satisfies Equation~\ref{coinsimp} can be found numerically. 

We set $\epsilon=0.1$ and $\delta=0.3$. Here are the results of sample complexity $B$ given different values of $n_{1}, n_{2}$:

\begin{table}[htb]

\centering
\begin{tabular}{llll}
\centering

 $(n_1,n_2)$ & lowest $B$ \\\hline
 (0,10) & 78\\
 (1,9) & 68\\
 (2,8) & 58\\
 (3,7) & 49\\
 (4,6) & 42\\
 (5,5) & 40\\
 (6,4) & 42\\
 (7,3) & 48\\
 (8,2) & 58\\
 (9,1) & 68\\
 (10,0) & 78\\\hline

\end{tabular}
\label{tab2}
\end{table}
We fix the sum of $(n_{1},n_{2})$ as $10$. As shown in the results, the value of sample complexity $B$ becomes lower as we use a more accurate prior (from $(10,0)$ to $(5,5)$ and from $(0,10)$ to $(5,5)$).

In general, for the task-specific posterior, we can relate $B, \epsilon$ and $\delta$ with the following equation:
\begin{equation}
\int_{P_{0}}Dir(P_{0}|\Phi^{True})\Big[ \sum_{\mathbf{N}:||\mathbf{N}||_{1}=B} Mult(N|P_{0}, B) \Big[\int_{P: ||P(\Phi)-P_{0}(\Phi)||\leq \epsilon} Dir(P|\Phi_{old}+\mathbf{N}) d P \Big]\Big]d P_{0} \geq 1-\delta
\end{equation}
For the world model posterior:
\begin{equation}
 \sum_{\mathbf{N}:||\mathbf{N}||_{1}=B_{w}} Mult(N|P_{w_{0}}, B_{w}) \Big[\int_{P: ||P_{w}(\Phi)-P_{w_{0}}(\Phi)||\leq \epsilon} Dir(P_{w}|\Phi_{w_{old}}+\mathbf{N}) d P_{w} \Big] \geq 1-\delta
\end{equation}

For each task, first we pick a true model $P_{0}$ according to the true distribution and initialize the task-specific prior $\Phi_{old}=\Phi_{w}$. Then, we make some observations from the world. Once we have the true model and the observations, we can calculate how many models are $\epsilon$-close to the true model, weighted according to their posterior likelihood.

\end{document}